\documentclass{article}

 \usepackage[preprint]{neurips_2026}

\usepackage[utf8]{inputenc} 
\usepackage[T1]{fontenc}    
\usepackage{hyperref}       
\usepackage{url}            
\usepackage{booktabs}       
\usepackage{amsfonts}       
\usepackage{nicefrac}       
\usepackage{microtype}      
\usepackage{xcolor}         
\usepackage{enumitem}
\usepackage{geometry}
\usepackage{graphicx}
\usepackage{multirow}
\usepackage{pifont}

\usepackage{amsmath}
\usepackage{amssymb}
\usepackage{amsthm}
\usepackage{algorithm}
\usepackage{algpseudocode}

\usepackage{todonotes}

\newtheorem{assumption}{Assumption}
\newtheorem{definition}{Definition}
\newtheorem{lemma}{Lemma}
\newtheorem{theorem}{Theorem}
\newtheorem{corollary}{Corollary}
\newtheorem{remark}{Remark}
\newtheorem{observation}{Observation}

\newcommand{\cb}[1]{\textcolor{blue}{#1}}

\title{Online Convex Optimization with Dueling Feedback}

\author{%
  David S.~Hippocampus\thanks{Use footnote for providing further information
    about author (webpage, alternative address)---\emph{not} for acknowledging
    funding agencies.} \\
  Department of Computer Science\\
  Cranberry-Lemon University\\
  Pittsburgh, PA 15213 \\
  \texttt{hippo@cs.cranberry-lemon.edu} \\
}

\author{
Yiyang Lu \\
Purdue University 
\and
\textbf{Hareshkumar Jadav} \\
IIT Indore
\AND
Mohammad Pedramfar \\
Mila - Quebec AI Institute/McGill University 
\and
\textbf{Ranveer Singh} \\
IIT Indore
\AND
Vaneet Aggarwal \\
Purdue University 
 }

\begin{document}

\maketitle

\begin{abstract}
We study online convex optimization with dueling (pairwise comparison) feedback, where the learner observes only a binary preference between two queried points. While dueling feedback is well understood in discrete or stochastic settings, the adversarial convex setting has remained unexplored. We propose a simple reduction that converts dueling feedback into approximate gradients, enabling the use of standard first-order methods. We show that regret guarantees transfer under this reduction, yielding the first results for this setting, including $\mathcal{O}(T^{3/4})$ static, adaptive, and dynamic regret. Under additional structure, we obtain improved rates of $\mathcal{O}(T^{2/3})$ for smooth objectives and $\mathcal{O}(\sqrt{T \log T})$ for strongly convex functions.
\end{abstract}

\section{Introduction}

Preference-based learning has become central to modern AI systems, where specifying an explicit reward is difficult but obtaining binary \emph{dueling} feedback via pairwise comparisons is natural. This paradigm underlies applications such as aligning large language models \cite{deng2025less, muldrew2024active, ouyang2022training, rafailov2023direct,gaursample}, robotics \cite{biyik2024active, sadigh2017active}, and recommendation systems \cite{hofmann2013reusing, sui2017correlational}.

The theory of dueling feedback is well-developed for discrete action spaces \cite{yue2012dueling, ailon2014reducing, bengs2021preference, saha2021adversarial} and contextual settings \cite{dudik2015contextual, saha2022efficient, saha2025efficient}. In continuous domains, prior work \cite{kumagai2017regret, saha2025dueling,sharma2026lipschitz} studies Lipschitz/convex objectives but is restricted to stationary environments. In contrast, many applications require optimizing \emph{time-varying convex objectives} in dynamic and potentially adversarial environments, where the loss functions may change arbitrarily over time. Examples include adaptive resource allocation \cite{deb2024think, xiong2025dopl}, dynamic portfolio optimization \cite{das2025frappe, sutiene2024enhancing}, and interactive systems such as recommender platforms and human-in-the-loop control, where user preferences evolve and must be continuously tracked.

Online convex optimization (OCO) provides a principled framework for such adversarial settings, offering algorithms with provable regret guarantees even when the sequence of functions is chosen adaptively \cite{zhang2018adaptive, garber2022projection}. However, these methods rely on first-order gradient feedback, and therefore cannot be directly applied in preference-based settings where only binary comparisons are available. This raises a fundamental question:
\begin{center}
\emph{Can we perform adversarial online convex optimization using only dueling  feedback?}
\end{center}
To the best of our knowledge, even the \emph{static regret} problem for adversarial online convex optimization with dueling feedback has not been studied, let alone dynamic or adaptive regret. The difficulty is fundamentally informational. To illustrate this, consider three feedback models. In the two-point bandit setting, the learner observes noisy function values at two nearby points within the same round, enabling estimation of function differences and hence gradients. In the one-point bandit setting, only a single noisy function value is observed, making gradient estimation possible but significantly noisier. In contrast, under dueling feedback, the learner does not observe function values at all, but only a binary outcome obtained by applying a (possibly noisy) transfer function to the difference of function values at two queried points. Thus, the learner receives only a \emph{1-bit, nonlinear transformation} of the information available in the two-point setting. This severe information constraint fundamentally limits the ability to estimate gradients and makes it unclear whether standard online convex optimization techniques can be applied in this setting. More discussions on related works are provided in Appendix~\ref{sec:related_work}.


\begin{table}[t]
\centering
\resizebox{\textwidth}{!}{
\begin{tabular}{lcccc}
\toprule
\textbf{Setting} 
& \textbf{Convex} 
& \textbf{Strongly Convex} 
& \textbf{Smooth} 
& \textbf{Strongly Convex \& Smooth} \\
& (Static / Dyn. / Ada.) & (Static / Dyn. / Ada.) & (Static Only) & (Static Only) \\
\midrule

\textbf{Dueling OCO (this work)} 
& $\begin{aligned}
& \mathcal{O}(T^{3/4}), \\
& \tilde{\mathcal{O}}(T^{3/4}(1+P_T)^{1/2}), \\
& \mathcal{O}(T^{3/4})
\end{aligned}$
& $\begin{aligned}
& \mathcal{O}(T^{2/3}), \\
& \tilde{\mathcal{O}}(T^{2/3}(1+P_T)^{1/3}), \\
& \mathcal{O}(T^{2/3})
\end{aligned}$
& $\tilde{\mathcal{O}}(T^{2/3})$
& $\tilde{\mathcal{O}}(\sqrt{T})$
\\
\midrule

Two-point Bandit OCO 
& $\begin{aligned}
& \mathcal{O}(\sqrt{T})\ \text{\cite{agarwal2010optimal}}, \\
& \tilde{\mathcal{O}}(\sqrt{T(1+P_T)})\ \text{\cite{zhao2021bandit}}, \\
& \mathcal{O}(\sqrt{T\log T})\ \text{\cite{zhao2021bandit}}
\end{aligned}$
& $\begin{aligned}
& \mathcal{O}(\log T)\ \text{\cite{agarwal2010optimal}}, \\
& - \\
& -
\end{aligned}$
& $\tilde{\mathcal{O}}(\sqrt{T})\ \text{\cite{agarwal2010optimal}},$
& $\mathcal{O}(\log T)\ \text{\cite{agarwal2010optimal}},$
\\
\midrule

\begin{tabular}{@{}l@{}}
Gradient-estimation-based \\
One-point Bandit OCO
\end{tabular}
& $\begin{aligned}
& \mathcal{O}(T^{3/4})\ \text{\cite{flaxman2005online}}, \\
& \tilde{\mathcal{O}}(T^{3/4}(1+P_T)^{1/2})\ \text{\cite{zhao2021bandit}}, \\
& \mathcal{O}(T^{3/4}(\log T)^{1/4})\ \text{\cite{zhao2021bandit}},
\end{aligned}$
& $\begin{aligned}
& \tilde{\mathcal{O}}(\sqrt{T})\ \text{\cite{hazan2014bandit}}, \\
& - \\
& -
\end{aligned}$
& $\mathcal{O}(T^{2/3}(\log T)^{1/3})\ \text{\cite{saha2011improved}},$
& $\tilde{\mathcal{O}}(\sqrt{T})\ \text{\cite{hazan2014bandit}},$
\\
\bottomrule
\end{tabular}
}
\caption{Comparison of regret bounds for adversarial online convex optimization under different feedback models. Our modular DTFO wrapper handles the purely Convex and Strongly Convex settings (achieving dynamic and adaptive bounds), while our specialized Dueling Ellipsoidal Estimator handles the Smooth settings (achieving accelerated static bounds via self-concordant barriers).}
\label{tab:oco_clean_fixed}
\end{table}

We propose a reduction framework, \emph{Duel-To-First-Order (DTFO)}, that converts dueling feedback into approximate gradient information, enabling the use of standard first-order OCO algorithms. The key challenge is that dueling feedback provides only a 1-bit nonlinear transformation of function differences, making it unclear how to extract reliable first-order information in adversarial settings where errors accumulate over time. This is fundamentally more challenging than stochastic settings, since the loss functions change arbitrarily over time and estimation errors accumulate into regret. Building on DTFO, we reduce the problem to OCO with surrogate linear losses and show that guarantees on the surrogate transfer to the true objective. 
Our contributions to the community and technical novelties of this work are summarized as follows:

\paragraph{Contributions.}
\begin{itemize}
\item \textbf{Formulation of adversarial OCO with dueling feedback:} We introduce the problem of adversarial online convex optimization under severely limited 1-bit dueling feedback, moving beyond prior stationary stochastic environments. We introduce a generalized notion of \emph{Expected Interval Regret} over any contiguous time interval $I \subseteq [1, T]$ against an arbitrary sequence of comparators $U = \{u_t\}_{t \in I}$ for dueling feedback, which can be later reduced to common regret metrics including static, adaptive and dynamic regrets. Our proof for Expected Interval Regret enables true plug-and-play for OCO algorithms using any of the regret metrics.
\item \textbf{Universal regret transfer via DTFO and First regret for Dueling OCO:} We propose a principled reduction, the Duel-To-First-Order (DTFO) wrapper (Section~\ref{sec:dtfo}), and establish a general regret transfer result (Theorem~\ref{thm:query_regret_transfer_interval}), demonstrating that the guarantees of standard first-order algorithms on linear surrogate losses extend to the true convex objectives. By combining DTFO with standard algorithms, we obtain the first rigorous regret bounds for dueling feedback in online setting for convex objectives, including $\mathcal{O}(T^{3/4})$ static, dynamic, and adaptive regret (Corollary~\ref{cor:static},\ref{cor:dynamic},\ref{cor:adaptive}).
\item \textbf{Improved rates under structural assumptions:} Under additional structure, we achieve better regret guarantees: $\mathcal{O}(T^{2/3})$ for strongly convex objectives (Section~\ref{sec:improved_rates_strongly_convex}), $\mathcal{O}(T^{2/3})$ for smooth objectives (Theorem~\ref{thm:universal_ellipsoidal}, Corollary~\ref{cor:ellipsoid_smooth}) and $\mathcal{O}(\sqrt{T \log T})$ for strongly convex and smooth functions (Theorem~\ref{thm:universal_ellipsoidal}, Corollary~\ref{cor:ellipsoid_strong_smooth}), matching the best-known rates for gradient-estimation-based one-point bandit optimization. Specifically, we propose an an unified FTRL-variant algorithm for smooth objectives using an Ellipsoidal gradient estimator (Algorithm~\ref{alg:dueling_ellipsoidal}).
\end{itemize}

\paragraph{Technical Novelty.}
\begin{itemize}
\item \textbf{Controlled Dueling Gradient Estimator via Taylor Expansion:} Despite the severe information bottleneck of observing only 1-bit binary preferences, we successfully construct a deterministically bounded spherical gradient estimator for dueling feedback. By analyzing the smoothed loss space (Lemmas~\ref{lem:smoothed_convexity}, \ref{lem:gradient_extraction}, and \ref{lem:smoothing_approximation} in Appendix~\ref{sec:technical-lemmas}) and applying a Taylor expansion directly to the nonlinear preference transfer function (Lemma~\ref{lem:gradient_estimator}), we extract the underlying scaled gradient signal with explicitly controlled bias bounds.
\item \textbf{Isotropic Cancellation of Condition Number Penalties:} Expanding our framework with ellipsoidal smoothing yields a third-order bias vector containing an ill-conditioned sampling matrix (Lemma~\ref{lem:ellipsoidal_estimator}). Naively bounding its magnitude would incur a diverging condition number penalty. We resolve this through the use of a 4th-Moment Spherical Identity (Lemma~\ref{lem:4th_moment_spherical}). Rather than bounding the bias magnitude directly, we evaluate its inner product against the regret comparator direction. The isotropic symmetry of the integration exactly annihilates the inverse projection matrix (Theorem~\ref{thm:universal_ellipsoidal}), completely eliminating the condition number penalty and allowing us to match standard Bandit OCO rates with zero degradation from the binary feedback limitation.
\end{itemize}

\section{Preliminaries}

We consider online convex optimization over a convex domain $\mathcal{K} \subset \mathbb{R}^d$ with diameter $D$. At each round $t = 1, \cdots, T$, the learner selects a point $x_t \in \mathcal{K}$, after which the adversary reveals a convex loss function $f_t : \mathcal{K} \to \mathbb{R}$, and the learner incurs loss $f_t(x_t)$. We assume that each $f_t$ is $G$-Lipschitz.
We assume that the origin $0 \in \mathcal{K}$ and that $\mathcal{K}$ contains a unit ball. For a given perturbation radius $\delta > 0$, we define the $\delta$-shrunken domain $\mathcal{K}_\delta$ as the set of all points in $\mathcal{K}$ that are at least a distance of $\delta$ from the boundary:
$$ \mathcal{K}_\delta = \{ x \in \mathcal{K} \mid x + \delta v \in \mathcal{K}, \; \forall v \in \mathbb{B}_d \} $$
where $\mathbb{B}_d$ is the closed unit ball in $\mathbb{R}^d$. We assume $\delta$ is chosen small enough such that $\mathcal{K}_\delta$ is non-empty.

In this work, the learner does not observe function values directly. Instead, at each round $t$, it queries two points $w_t^+, w_t^- \in \mathcal{K}$ and receives binary preference feedback indicating which point has lower loss. More formally,

\begin{definition}[Dueling Oracle]
\label{def:dueling_oracle}
At each step $t$, when queried with a pair of points $(w^+, w^-) \in \mathcal{K} \times \mathcal{K}$, the dueling oracle returns a noisy binary preference bit $o_t \in \{-1, +1\}$. $o_t=1$ if $w^+$ is preferred otherwise $o_t=-1$. The feedback is generated probabilistically according to an underlying \textbf{transfer function} $\rho : \mathbb{R} \to [-1, 1]$ such that:
$$ \mathbb{E}[o_t \mid w^+, w^-] = \rho(f_t(w^+) - f_t(w^-)). $$
\end{definition}

The dueling oracle should capture a strict notion of preference between $w^+$ and $w^-$, which means with probability $1$, either $w^+$ is preferred or $w^-$ is preferred. Let $P$ denote the probability that $w^+$ is preferred over $w^-$. Because our oracle returns $o_t \in \{-1, +1\}$, the expected feedback is $\mathbb{E}[o_t \mid w^+, w^-] = (+1)P + (-1)(1-P) = 2P - 1$. 
Because of the symmetry of preference, if we reverse the query order, we have $\mathbb{E}[o_t \mid w^-, w^+] = (-1)P + (+1)(1-P) = 1-2P$, yielding
$$ \mathbb{E}[o_t \mid w^+, w^-] + \mathbb{E}[o_t \mid w^-, w^+] = 0. $$
Consequently, $\rho(f_t(w^+) - f_t(w^-)) = -\rho(f_t(w^-) - f_t(w^+))$, which means $\rho(x) = -\rho(-x)$. This also implies $\rho(0) = 0$. Since this is more of a fundamental structural modeling choice required to capture pairwise comparisons instead of a regularity assumption, we summarize this formally in the Observation~\ref{obs:pref-sym}.

\begin{observation}[Inherent Symmetry of Preference]
\label{obs:pref-sym}
The transfer function should be odd (i.e., $\rho(x) = -\rho(-x)$), which implies $\rho(0) = 0$, if we want the dueling oracle described in Definition~\ref{def:dueling_oracle} to capture a strict notion of preference.
\end{observation}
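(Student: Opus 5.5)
The plan is to make precise what ``strict preference'' requires and then derive oddness from the two orderings of the same pair. Strictness means that for any query pair $(w^+,w^-)$, with probability one exactly one of the two points is preferred, and the preference event depends only on the unordered pair and the loss $f_t$, not on which slot a point occupies. So we fix $t$ and two points $a,b\in\mathcal{K}$, and let $P$ be the probability that $a$ is preferred to $b$. Because the preference is strict, $b$ is preferred to $a$ with probability exactly $1-P$.

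First, query $(w^+,w^-)=(a,b)$. The oracle returns $o_t=+1$ exactly when $a$ is preferred, so
\[
\mathbb{E}[o_t\mid a,b]=2P-1 .
\]
Next, query the reversed pair $(w^+,w^-)=(b,a)$. Now $o_t=+1$ exactly when $b$ is preferred, so
\[
\mathbb{E}[o_t\mid b,a]=1-2P .
\]
Adding the two gives $\mathbb{E}[o_t\mid a,b]+\mathbb{E}[o_t\mid b,a]=0$. Substituting Definition~\ref{def:dueling_oracle} into each term yields
\[
\rho\bigl(f_t(a)-f_t(b)\bigr)=-\rho\bigl(f_t(b)-f_t(a)\bigr).
\]

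It remains to pass from these particular arguments to every $x\in\mathbb{R}$. The oracle model requires $\rho$ to be valid for every admissible loss, including every convex $G$-Lipschitz $f_t$. Given $x$, we can choose an affine $f_t$ and points $a,b$ in $\mathcal{K}$ (which contains a unit ball) with $f_t(a)-f_t(b)=x$. With this choice the identity above reads $\rho(x)=-\rho(-x)$. This realizability step is the only delicate point. If one insists on a fixed Lipschitz constant $G$, the range of achievable differences is bounded by $GD$, and oddness follows on that range, which is all that is ever queried; alternatively one can read $\rho$ as a single function defined for every admissible loss scale. Finally, setting $x=0$ gives $\rho(0)=-\rho(0)$, hence $\rho(0)=0$.
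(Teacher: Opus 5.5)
Your proposal is correct and follows the paper's argument: fix the pair, use strictness and order-invariance of the preference to get $\mathbb{E}[o_t\mid a,b]=2P-1$ and $\mathbb{E}[o_t\mid b,a]=1-2P$, add them, and substitute the oracle definition. Your extra realizability step, an affine $G$-Lipschitz $f_t$ attaining every $x$ with $|x|\le GD$, goes beyond the paper, which simply asserts that oddness holds for all $x$. This range covers every queried difference and, since $D\ge 2$, the interval $[-2G,2G]$ of Assumption~\ref{ass:oracle_general}, so it justifies the passage from the specific arguments $f_t(w^+)-f_t(w^-)$ to general $x$.
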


In addition, we impose a mild regularity condition on the transfer function.

\begin{assumption}[Transfer Function]
\label{ass:oracle_general}
The function $\rho$ has a strictly negative first derivative $\rho'(0)=-\gamma$, where $\gamma>0$, and bounded curvature $|\rho''(x)| \le M$ for all $x \in [-2G,2G]$.
\end{assumption}

\begin{remark}
 Prior work \cite{saha2025dueling} allows transfer functions that may be flat at the origin. In contrast, we assume $\rho'(0) > 0$, which ensures that local comparisons provide a usable first-order signal. This condition is satisfied by standard preference models such as Bradley--Terry (logistic), Thurstone--Mosteller (Gaussian), and linear link functions, and enables the construction of gradient estimators with controlled variance. 
\end{remark}

To evaluate the algorithm's performance, we hold the algorithm accountable for the exact points queried to the oracle. To unify our theoretical analysis, we first introduce a generalized notion of \emph{Expected Interval Regret} over any contiguous time interval $I \subseteq [1, T]$ against an arbitrary sequence of comparators $U = \{u_t\}_{t \in I}$ for function class $\mathcal{F}$ where objective functions $f_t \in \mathcal{F}$:
\begin{equation*}
    \text{Regret}(U, I, \mathcal{F}) = \mathbb{E}\left[ \sum_{t \in I} \frac{f_t(w^+_t) + f_t(w^-_t)}{2} \right] - \sum_{t \in I} f_t(u_t).
\end{equation*}
This general formulation naturally recovers the three standard notions of adversarial regret considered in this work:
\begin{itemize}[leftmargin=*]
    \item \textbf{Static Regret} evaluates performance over the full horizon $I = [1, T]$ against the single best fixed decision $u_t = w^* \in \mathcal{K}$: 
    $$ \text{Regret}_S(T) = \max_{w^* \in \mathcal{K}} \text{Regret}(\{w^*\}_{t=1}^T, [1, T]). $$
    
    \item \textbf{Dynamic Regret} evaluates performance over the full horizon $I = [1, T]$ against an arbitrary, changing sequence of comparators $W^* = \{w^*_1, \dots, w^*_T\}$: 
    $$ \text{Regret}_D(W^*, T) = \text{Regret}(W^*, [1, T]). $$
    
    \item \textbf{Adaptive Regret} ensures the algorithm performs well over any contiguous sub-interval $I = [t_1, t_2] \subseteq [1, T]$ against the best fixed decision for that specific interval $u_t = w^*_I \in \mathcal{K}$: 
    $$ \text{Regret}_{A}(I) = \max_{w^*_I \in \mathcal{K}} \text{Regret}(\{w^*_I\}_{t \in I}, I). $$
\end{itemize}

\section{DTFO Reduction Framework}
\label{sec:dtfo}
\subsection{Meta-Algorithm: Dueling-To-First-Order Wrapper}

We now present our primary algorithmic contribution: the Duel-To-First-Order (DTFO) wrapper. At its core, DTFO acts as a modular interface between any standard first-order online convex optimization algorithm $\mathcal{A}$ and the restrictive noisy 1-bit dueling environment. By employing a spherical smoothing technique over the rigorously defined $\delta$-shrunken domain, the wrapper queries the dueling oracle at slightly perturbed points to construct a scaled gradient estimator $\hat{g}_t$. This estimator is subsequently fed back to the base algorithm $\mathcal{A}$, seamlessly transmuting the zeroth-order relative preference problem into a standard first-order optimization task. 

\begin{algorithm}[H]
\caption{Dueling-To-FO Wrapper $\mathcal{W}(\mathcal{A})$}
\label{alg:duel_to_fo_wrapper}
\begin{algorithmic}[1]
\Require Base first-order algorithm $\mathcal{A}$, Convex domain $\mathcal{K} \subset \mathbb{R}^d$, perturbation radius $\delta \in (0, 1)$, transfer function derivative $\gamma > 0$.
\State Define the $\delta$-shrunken domain $\mathcal{K}_\delta = \{ x \in \mathcal{K} \mid x + \delta v \in \mathcal{K}, \; \forall v \in \mathbb{B}_d \}$
\State Initialize base algorithm $\mathcal{A}$ with domain $\mathcal{K}_\delta$
\For{$t = 1, \dots, T$}
    \State Receive the proposed decision $w_t \in \mathcal{K}_\delta$ from base algorithm $\mathcal{A}$
    \State Sample direction $u_t \sim \text{Unif}(\mathbb{S}_1)$ uniformly from the unit sphere
    \State Construct perturbed query points $w_t^+ \leftarrow w_t + \delta u_t$ and $w_t^- \leftarrow w_t - \delta u_t$
    \State Query comparison oracle with $(w_t^+, w_t^-)$ and receive feedback $o_t \in \{+1, -1\}$
    \State Construct the gradient estimator: $\hat{g}_t := -\frac{d}{2\gamma\delta} o_t u_t$
    \State Feed $\hat{g}_t$ back to base algorithm $\mathcal{A}$ as the first-order feedback for step $t$
    \State Base algorithm $\mathcal{A}$ updates its internal state to produce $w_{t+1}$
\EndFor
\end{algorithmic}
\end{algorithm}

\subsection{Unified Regret Transfer Theorem}

Before stating our main result, we formally establish the performance guarantee of the base first-order algorithm and introduce necessary projection notation. Let $\mathcal{A}$ be an online convex optimization algorithm operating on $\mathcal{K}_\delta$. For any sequence of convex functions $h_1, \dots, h_T$, we assume $\mathcal{A}$ guarantees an interval regret bound $R_{\mathcal{A}}(U, I)$ against any comparator sequence $U = \{u_t\}_{t \in I} \subset \mathcal{K}_\delta$ over any discrete time interval $I = \{t_1, \dots, t_2\} \subseteq [T]$. Specifically, as defined in Algorithm 1, the DTFO wrapper constructs the gradient estimator $\hat{g}_t = \frac{d}{2\gamma\delta} o_t u_t$ and feeds it to $\mathcal{A}$. Defining the linear surrogate loss as $\ell_t(w) = \langle \hat{g}_t, w \rangle$, the base algorithm's guarantee implies $\sum_{t \in I} \langle \hat{g}_t, w_t - u_t \rangle \le R_{\mathcal{A}}(U, I)$. Additionally, we denote by $\Pi_{\mathcal{K}_\delta}(x) = \arg\min_{y \in \mathcal{K}_\delta} \|x - y\|_2$ the standard Euclidean projection of a point $x$ onto the convex set $\mathcal{K}_\delta$.

\paragraph{Properties of the Dueling Gradient Estimator} 
A core component of our reduction framework is ensuring that the 1-bit dueling feedback can be reliably transformed into a first-order signal. The following lemma establishes that our constructed estimator $\hat{g}_t$ acts as an approximately unbiased gradient of the smoothed objective, with explicitly controlled bias and bounded magnitude.

\begin{lemma}[Gradient Estimator Properties] \label{lem:gradient_estimator}
Let Assumption 1 hold.
For a given perturbation radius $\delta > 0$, let $\tilde{f}(w) = \mathbb{E}_{v \sim \text{Unif}(\mathbb{B})}[f(w + \delta v)]$ be the smoothed function, where $\mathbb{B}$ is the solid unit ball in $\mathbb{R}^d$.
For any point $w_t \in \mathcal{K}_\delta$, the gradient estimator $\hat{g}_t = \frac{d}{2\gamma\delta} o_t u_t$ constructed in Algorithm 1 satisfies:
\begin{itemize}
    \item Bounded Magnitude: The estimator is strictly bounded deterministically by $\|\hat{g}_t\|_2 \leq \frac{d}{2\gamma\delta}$.
    \item Controlled Bias: Its conditional expectation satisfies $\mathbb{E}[\hat{g}_t | w_t] = \nabla \tilde{f}_t(w_t) + B_t$, where the bias vector is bounded by $\|B_t\|_2 \leq \frac{dMG^2}{\gamma}\delta$.
\end{itemize}
\end{lemma}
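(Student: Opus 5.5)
The plan is to condition on $w_t$ and the sampled direction $u_t$, Taylor-expand the transfer function $\rho$ around $0$, and then average over $u_t$. The first-order term should reproduce the smoothed gradient, and the second-order remainder becomes the bias $B_t$. I will work with the estimator as constructed in Algorithm~\ref{alg:duel_to_fo_wrapper}, namely $\hat g_t = -\frac{d}{2\gamma\delta} o_t u_t$. The sign is essential: since $\rho'(0)=-\gamma<0$, only this sign makes the first-order term align with $+\nabla\tilde f_t$. The expression $\frac{d}{2\gamma\delta}o_t u_t$ in the statement must therefore be read with this sign convention.

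\textbf{Bounded magnitude.} This part is immediate. Since $o_t\in\{-1,+1\}$ and $\|u_t\|_2=1$, we get $\|\hat g_t\|_2 = \frac{d}{2\gamma\delta}$ deterministically.

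\textbf{Taylor expansion of $\rho$.} Fix $w_t\in\mathcal K_\delta$, so that both query points $w_t\pm\delta u_t$ lie in $\mathcal K$. Set $\Delta_t(u) := f_t(w_t+\delta u)-f_t(w_t-\delta u)$. By $G$-Lipschitzness, $|\Delta_t|\le 2G\delta \le 2G$ because $\delta<1$. Hence $\Delta_t$ and the whole segment $[0,\Delta_t]$ lie in $[-2G,2G]$, where Assumption~\ref{ass:oracle_general} controls $\rho''$. Taylor's theorem with Lagrange remainder, together with $\rho(0)=0$ (Observation~\ref{obs:pref-sym}), gives
\[
\rho(\Delta_t) = -\gamma \Delta_t + R_t, \qquad |R_t|\le \tfrac{M}{2}\Delta_t^2 \le 2MG^2\delta^2 .
\]
Then by the tower property, using $\mathbb E[o_t\mid w_t,u_t]=\rho(\Delta_t)$,
\[
\mathbb E[\hat g_t\mid w_t] = \frac{d}{2\delta}\,\mathbb E_u[\Delta_t(u)\,u] \;-\; \frac{d}{2\gamma\delta}\,\mathbb E_u[R_t(u)\,u].
\]

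\textbf{Identifying the main term.} This is the step I expect to need the most care. The uniform measure on the sphere is invariant under $u\mapsto -u$, so $\mathbb E_u[f_t(w_t-\delta u)u] = -\mathbb E_u[f_t(w_t+\delta u)u]$. Consequently
\[
\frac{d}{2\delta}\mathbb E_u[\Delta_t u] = \frac{d}{\delta}\mathbb E_u[f_t(w_t+\delta u)u].
\]
By the standard sphere–ball (Stokes) identity of Flaxman et al., which I take as the paper's Lemma~\ref{lem:gradient_extraction}, this equals $\nabla\tilde f_t(w_t)$. Applying that identity requires $f_t$ to be defined on $w_t+\delta\mathbb B$, which holds since $w_t\in\mathcal K_\delta$.

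\textbf{Bounding the bias.} Define $B_t := -\frac{d}{2\gamma\delta}\mathbb E_u[R_t u]$. By Jensen's inequality and $\|u\|=1$,
\[
\|B_t\|_2 \le \frac{d}{2\gamma\delta}\cdot 2MG^2\delta^2 = \frac{dMG^2}{\gamma}\,\delta,
\]
which is exactly the claimed bound.
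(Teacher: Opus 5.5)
Your proposal is correct and follows the paper's proof essentially step for step. Both use the same second-order Lagrange expansion $\rho(\Delta_t) = -\gamma\Delta_t + \epsilon(\Delta_t)$ with $|\epsilon(\Delta_t)| \le 2MG^2\delta^2$, the same tower-property conditioning on $u_t$, the same identification of the main term via antipodal symmetry plus the sphere--ball identity (the paper's Lemma~\ref{lem:gradient_extraction}), and the same norm bound on $B_t$. Your decision to read the estimator with the minus sign from Algorithm~\ref{alg:duel_to_fo_wrapper} also matches the paper's appendix proof, which uses $-\frac{d}{2\gamma\delta} o_t u_t$ precisely because $\rho'(0) = -\gamma$.
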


\begin{proof}[Proof Sketch]
The magnitude bound follows immediately since the oracle feedback is bounded ($|o_t| \le 1$) and the sampled direction has unit norm ($\|u_t\|_2 = 1$). To analyze the bias, let $\Delta_t = f_t(w_t + \delta u_t) - f_t(w_t - \delta u_t)$. We apply a second-order Taylor expansion to the transfer function $\rho(\Delta_t)$ around $0$, yielding the exact equality $\rho(\Delta_t) = -\gamma \Delta_t + \epsilon(\Delta_t)$, where the remainder $\epsilon(\Delta_t)$ depends on the bounded second derivative of $\rho$.

Taking the conditional expectation over $u_t$ and the oracle noise, we have:
$$ \mathbb{E}[\hat{g}_t | w_t] = \frac{d}{2\gamma\delta}\mathbb{E}_{u_t}[\rho(\Delta_t)u_t] = \frac{d}{2\delta}\mathbb{E}_{u_t}[\Delta_t u_t | w_t] + (- \frac{d}{2\gamma\delta}\mathbb{E}_{u_t}[\epsilon(\Delta_t)u_t | w_t]). $$
The first term directly recovers the exact gradient of the spherically smoothed objective, $\nabla \tilde{f}_t(w_t)$, via standard zeroth-order smoothing identities. The second term forms the bias vector $B_t$. Because $f_t$ is $G$-Lipschitz and the perturbation distance is $2\delta$, we can strictly bound the remainder by $|\epsilon(\Delta_t)| \le 2MG^2\delta^2$, which in turn yields $\|B_t\|_2 \le \frac{dMG^2}{\gamma}\delta$. The full detailed proof is deferred to Appendix \ref{sec:proof_of_lem:gradient_estimator}.
\end{proof}

\paragraph{Regret Reduction for DTFO}
With the properties of the dueling gradient estimator firmly established, we now have the necessary ingredients to bridge the gap between the zeroth-order dueling environment and standard first-order optimization. Because Lemma~\ref{lem:gradient_estimator} guarantees that our constructed estimator $\hat{g}_t$ is deterministically bounded and approximately unbiased for the smoothed objective, we can safely feed it into any off-the-shelf first-order algorithm as a surrogate gradient. The remaining challenge is to quantify how the base algorithm's regret on these surrogate linear losses translates back to the true regret. The following theorem formally establishes this universal regret transfer, ensuring the reduction holds over any interval and against any comparator sequence.

\begin{theorem}[Universal Interval Regret Reduction for DTFO]\label{thm:query_regret_transfer_interval}
Let Assumptions \ref{ass:oracle_general} hold. Let $I = \{t_1, \dots, t_2\} \subseteq [T]$ be any discrete time interval of length $|I|$, and let $\mathcal{W}(\mathcal{A})$ be the DTFO wrapper applied to base algorithm $\mathcal{A}$. Let $\mathcal{F}$ be the function class where objectives $f_t$ are chosen from. For any sequence of true comparators $W^*_I = \{w_t^*\}_{t \in I}$ where $w_t^* \in \mathcal{K}$, let $W^*_{I,\delta} = \{\Pi_{\mathcal{K}_\delta}(w_t^*)\}_{t \in I}$ be their Euclidean projections onto the shrunken domain $\mathcal{K}_\delta$. The expected interval regret of $\mathcal{W}(\mathcal{A})$ is bounded by:
$$ \mathbb{E} \left[ \sum_{t \in I} \frac{f_t(w_t^+) + f_t(w_t^-)}{2} \right] - \sum_{t \in I} f_t(w_t^*) \le \mathbb{E}[R_{\mathcal{A}}(W^*_{I,\delta}, I, \mathcal{F})] + \delta |I| \left( G(D + 3) + \frac{d M G^2 D}{\gamma} \right) $$
\end{theorem}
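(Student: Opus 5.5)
We decompose the interval regret into four pieces and bound each with an earlier result or a Lipschitz argument. Write $u_t^\delta := \Pi_{\mathcal{K}_\delta}(w_t^*)$. For each $t\in I$,
\begin{align*}
\tfrac{f_t(w_t^+)+f_t(w_t^-)}{2} - f_t(w_t^*)
&= \Big[\tfrac{f_t(w_t^+)+f_t(w_t^-)}{2} - f_t(w_t)\Big] + \big[f_t(w_t) - \tilde f_t(w_t)\big] \\
&\quad + \big[\tilde f_t(w_t) - \tilde f_t(u_t^\delta)\big] + \big[\tilde f_t(u_t^\delta) - f_t(u_t^\delta)\big] + \big[f_t(u_t^\delta) - f_t(w_t^*)\big].
\end{align*}

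\textbf{Lipschitz terms.} The first bracket is at most $G\delta$, since $\|w_t^\pm - w_t\| = \delta$. The second and fourth brackets are each at most $G\delta$, since $|\tilde f_t - f_t| \le G\delta$ by averaging over the $\delta$-ball (this is the smoothing approximation lemma). The last bracket is at most $G\|u_t^\delta - w_t^*\|$. To bound it, we use the facts that $0\in\mathcal K$ and that $\mathcal K$ contains the unit ball. By convexity, $(1-\delta)w_t^* \in \mathcal K_\delta$ whenever the unit ball is centered at the origin, so $\|u_t^\delta - w_t^*\| \le \delta\|w_t^*\| \le \delta D$. This contributes $G D\delta$. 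Together these give the $\delta|I| G(D+3)$ term.

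\textbf{Regret on the smoothed losses.} The third bracket carries the real content. By convexity of $\tilde f_t$ (the smoothed convexity lemma),
$$\tilde f_t(w_t) - \tilde f_t(u_t^\delta) \le \langle \nabla \tilde f_t(w_t), w_t - u_t^\delta\rangle.$$
Lemma~\ref{lem:gradient_estimator} gives $\nabla\tilde f_t(w_t) = \mathbb E[\hat g_t\mid w_t] - B_t$. The comparators $u_t^\delta$ are deterministic (the adversary is oblivious) and $w_t$ is $\mathcal F_{t-1}$-measurable, so the tower property yields
$$\mathbb E\langle \nabla\tilde f_t(w_t), w_t-u_t^\delta\rangle = \mathbb E\langle \hat g_t, w_t-u_t^\delta\rangle - \mathbb E\langle B_t, w_t-u_t^\delta\rangle.$$
Summing over $I$, the first part is at most $\mathbb E[R_{\mathcal A}(W^*_{I,\delta},I,\mathcal F)]$ by the assumed guarantee of $\mathcal A$ on the linear surrogates $\ell_t$. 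By Cauchy--Schwarz, the bias part is at most $\|B_t\|\,D \le \frac{dMG^2}{\gamma}\delta D$ per round, which gives the remaining term.

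\textbf{Main obstacle.} The delicate step is justifying the conditional-expectation exchange. The surrogate regret bound must hold pathwise, for the realized sequence $\hat g_t$, which is itself adaptive. We also need $\tilde f_t$ to be well defined at $w_t$ and at $u_t^\delta$, which requires $f_t$ to be defined on $\mathcal K$ and both points to lie in $\mathcal K_\delta$; this holds by construction. A secondary issue is the sign convention of $\hat g_t$ (the algorithm uses $-\frac{d}{2\gamma\delta}o_tu_t$), which must match the lemma so that $\rho'(0)=-\gamma$ produces $+\nabla\tilde f_t$.
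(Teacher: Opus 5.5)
Your proposal is correct and follows essentially the same route as the paper: it uses the same five-term decomposition (query perturbation, two smoothing gaps, the boundary-projection error bounded via $(1-\delta)w_t^*\in\mathcal{K}_\delta$, and the surrogate regret), and it handles the surrogate regret the same way, via convexity of $\tilde f_t$, Lemma~\ref{lem:gradient_estimator}, and Cauchy--Schwarz with diameter $D$, giving the identical constant $G(D+3)+\frac{dMG^2D}{\gamma}$. The technicalities you flag are legitimate points the paper passes over silently, and your resolutions of them are right: the base guarantee must hold pathwise for the adaptively generated $\hat g_t$, $\tilde f_t$ must be well defined at points of $\mathcal{K}_\delta$, and the sign $\hat g_t=-\frac{d}{2\gamma\delta}o_tu_t$ is needed when $\rho'(0)=-\gamma$.
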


\begin{proof}[Proof Sketch]
The proof proceeds by decomposing the expected interval regret into three main error components. By adding and subtracting the internal algorithmic states $f_t(w_t)$ and the smoothed losses evaluated at the projected comparators $w^*_{t,\delta} = \Pi_{\mathcal{K}_\delta}(w^*_t)$, we break down the interval regret as:
\begin{align*}
    &\text{Interval Regret} = \mathbb{E}\left[ \sum_{t \in I} \frac{f_t(w^+_t) + f_t(w^-_t)}{2} - f_t(w_t) \right] \tag{Query Perturbation} \\
    &+ \mathbb{E}\left[ \sum_{t \in I} (f_t(w_t) - \tilde{f}_t(w_t)) + (\tilde{f}_t(w^*_{t,\delta}) - f_t(w^*_{t,\delta})) + (f_t(w^*_{t,\delta}) - f_t(w^*_t)) \right] \tag{Approximation} \\
    & + \mathbb{E}\left[ \sum_{t \in I}   (\tilde{f}_t(w_t) - \tilde{f}_t(w^*_{t,\delta})) \right]. \tag{Surrogate Regret}
\end{align*}

To bound these components precisely, we leverage the $G$-Lipschitz property of $f_t$ and the domain diameter $D$. For the query perturbation, the performance cost of exploring at radius $\delta$ is strictly bounded by $\frac{1}{2}|f_t(w_t + \delta u_t) - f_t(w_t)| + \frac{1}{2}|f_t(w_t - \delta u_t) - f_t(w_t)| \le G\delta$. 

Next, we consider the approximation errors caused by smoothing and projection. The smoothing gaps $|f_t(w) - \tilde{f}_t(w)|$ contribute at most $G\delta$ each. Furthermore, to ensure valid exploratory queries, the true comparator $w^*_t$ is projected into the $\delta$-shrunken interior $\mathcal{K}_\delta$, displacing it by a distance of at most $\delta D$. Thus, the projection gap is $f_t(w^*_{t,\delta}) - f_t(w^*_t) \le GD\delta$. The total penalty for this approximation term is exactly $G\delta(D+2)$ per round.

Finally, we bound the surrogate regret on the proxy $\tilde{f}_t$ using the expected gradient estimator from Lemma \ref{lem:gradient_estimator}. By convexity of $\tilde{f}_t(\cdot)$ (Lemma~\ref{lem:smoothed_convexity}), we have:
$$ \mathbb{E}[\tilde{f}_t(w_t) - \tilde{f}_t(w^*_{t,\delta})] \le \mathbb{E}[\langle \nabla \tilde{f}_t(w_t), w_t - w^*_{t,\delta} \rangle] = \mathbb{E}[\langle \hat{g}_t - B_t, w_t - w^*_{t,\delta} \rangle]. $$
The sum of $\mathbb{E}[\langle \hat{g}_t, w_t - w^*_{t,\delta} \rangle]$ perfectly recovers the base algorithm's expected regret on the linear surrogate losses, $\mathbb{E}[R_{\mathcal{A}}]$. The remaining bias penalty is bounded by applying the Cauchy-Schwarz inequality and using the domain diameter: $\|B_t\|_2 \|w_t - w^*_{t,\delta}\|_2 \le \frac{dMG^2 D}{\gamma}\delta$.

Summing these three exactly bounded terms over the interval $I$ yields the final unified reduction:
$$ \text{Interval Regret} \le \mathbb{E}[R_{\mathcal{A}}] + \delta |I| \left( G(D+3) + \frac{d M G^2 D}{\gamma} \right). $$
The complete proof is provided in Appendix ~\ref{sec:proof_of_thm:query_regret_transfer_interval}.
\end{proof}

\begin{remark}
Evaluating the true environmental cost $\frac{f_t(w_t^+) + f_t(w_t^-)}{2}$ rather than the internal algorithmic state $f_t(w_t)$ merely shifts the constant factor on the $\delta |I|$ penalty from $G(D+2)$ to $G(D+3)$. This confirms that for first-order gradient approximations, holding the algorithm strictly accountable for its exploratory perturbations does not degrade the asymptotic regret rate.
\end{remark}

\subsection{Static, Dynamic and Adaptive Regrets}

\begin{corollary}[Static Regret Reduction]
\label{cor:static}
Evaluating Theorem~\ref{thm:query_regret_transfer_interval} over the full horizon $I = [1, T]$ against a single fixed optimal decision $w^*$ for function class $\mathcal{F}$ yields:
$$ \text{Regret}_S(T) \le \mathbb{E}[R_{\mathcal{A}}(w_\delta^*, T, \mathcal{F})] + \delta T \left( G(D + 3) + \frac{d M G^2 D}{\gamma} \right) $$
\end{corollary}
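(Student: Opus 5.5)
This corollary is a direct specialization of Theorem~\ref{thm:query_regret_transfer_interval}. I would set $I = [1,T]$, so that $|I| = T$. For an arbitrary fixed $w^* \in \mathcal{K}$, I would take the constant comparator sequence $W^*_I = \{w^*\}_{t=1}^T$. Its projected counterpart is then also constant, $W^*_{I,\delta} = \{w^*_\delta\}_{t=1}^T$ with $w^*_\delta = \Pi_{\mathcal{K}_\delta}(w^*)$. This holds because the Euclidean projection onto the closed convex set $\mathcal{K}_\delta$ is a single-valued map, so the same input always gives the same output. As a consequence, the base algorithm's interval guarantee $R_{\mathcal{A}}(W^*_{I,\delta}, I, \mathcal{F})$ reduces to its static regret guarantee against the single point $w^*_\delta$ over horizon $T$, which is what $R_{\mathcal{A}}(w^*_\delta, T, \mathcal{F})$ denotes.

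Plugging into the theorem gives, for every fixed $w^* \in \mathcal{K}$,
\[
\mathbb{E}\Big[\sum_{t=1}^T \tfrac{f_t(w_t^+)+f_t(w_t^-)}{2}\Big] - \sum_{t=1}^T f_t(w^*) \le \mathbb{E}[R_{\mathcal{A}}(w^*_\delta, T, \mathcal{F})] + \delta T\Big(G(D+3) + \tfrac{dMG^2D}{\gamma}\Big).
\]
The left-hand side is exactly $\text{Regret}(\{w^*\}_{t=1}^T,[1,T])$. The remaining step is to take the maximum over $w^* \in \mathcal{K}$, which by definition turns the left-hand side into $\text{Regret}_S(T)$.

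The only delicate point is this maximization, because the right-hand side depends on $w^*$ through $w^*_\delta$. I would handle it in one of two ways:
\begin{itemize}
\item Interpret the corollary with $w^*$ being the maximizer, so that the displayed inequality is just the pointwise bound evaluated there. The maximum exists because $\mathcal{K}$ is compact and each $f_t$ is continuous.
\item Note that standard base algorithms (for example, OGD with step size tuned to $D$ and to the bound $\frac{d}{2\gamma\delta}$ on $\|\hat g_t\|_2$ from Lemma~\ref{lem:gradient_estimator}) have a guarantee $R_{\mathcal{A}}$ that is uniform over comparators in $\mathcal{K}_\delta$. Then the right-hand side does not really depend on $w^*$, and the maximum passes through trivially.
\end{itemize}
Either reading gives the stated bound, so I expect no genuine obstacle beyond this bookkeeping.
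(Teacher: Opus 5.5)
Your proposal is correct and matches the paper's intended argument: the corollary follows by instantiating Theorem~\ref{thm:query_regret_transfer_interval} with $I=[1,T]$ and the constant comparator sequence $\{w^*\}_{t=1}^T$, whose projection is the constant sequence $\{w^*_\delta\}_{t=1}^T$, with $w^*$ the maximizer in the definition of $\text{Regret}_S(T)$. Your explicit handling of the maximization over $w^*$ is sound bookkeeping that the paper leaves unstated.
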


\begin{corollary}[Dynamic Regret Reduction]
\label{cor:dynamic}
Evaluating Theorem~\ref{thm:query_regret_transfer_interval} over the full horizon $I = [1, T]$ against an arbitrary sequence of comparators $W^* = \{w_1^*, \dots, w_T^*\}$ for function class $\mathcal{F}$ yields:
$$ \text{Regret}_D(W^*, T) \le \mathbb{E}[R_{\mathcal{A}}(W_\delta^*, T, \mathcal{F})] + \delta T \left( G(D + 3) + \frac{d M G^2 D}{\gamma} \right) $$
\end{corollary}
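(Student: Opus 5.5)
The plan is to obtain Corollary~\ref{cor:dynamic} as a direct specialization of Theorem~\ref{thm:query_regret_transfer_interval}, and no new analysis is required.

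First, I fix an arbitrary comparator sequence $W^* = \{w_1^*, \dots, w_T^*\} \subset \mathcal{K}$ and take the interval to be the full horizon $I = \{1, \dots, T\}$, so that $|I| = T$. With this choice, the definition of $\text{Regret}_D(W^*, T)$ coincides verbatim with the left-hand side of Theorem~\ref{thm:query_regret_transfer_interval}, namely
\[
\mathbb{E}\Bigl[\sum_{t=1}^T \tfrac{f_t(w_t^+)+f_t(w_t^-)}{2}\Bigr] - \sum_{t=1}^T f_t(w_t^*).
\]
I note that the theorem places no restriction on the comparators beyond $w_t^* \in \mathcal{K}$: they may vary with $t$, and nothing in the proof sketch used constancy. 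Indeed, every step there is a per-round bound followed by a sum. This includes the query perturbation cost $G\delta$, the two smoothing gaps $G\delta$ each, the projection gap $GD\delta$, and the bias term $\|B_t\|\|w_t - w^*_{t,\delta}\| \le dMG^2D\delta/\gamma$. Each of these holds for each $t$ separately, with its own comparator $w_t^*$.

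Second, I define $W^*_\delta := W^*_{I,\delta} = \{\Pi_{\mathcal{K}_\delta}(w_t^*)\}_{t=1}^T$. This is a legitimate comparator sequence in $\mathcal{K}_\delta$, on which the base algorithm's interval guarantee $R_{\mathcal{A}}(W^*_\delta, [1,T], \mathcal{F})$ is assumed to hold. Substituting $|I| = T$ into the penalty $\delta|I|\bigl(G(D+3) + dMG^2D/\gamma\bigr)$ then yields exactly the claimed inequality.

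The only point needing a moment's care, and the closest thing to an obstacle, is to confirm that the base-algorithm bound $R_{\mathcal{A}}$ is being invoked in its dynamic form. Concretely, this means that $\sum_t \langle \hat g_t, w_t - u_t\rangle \le R_{\mathcal{A}}(U, I)$ is assumed for arbitrary, time-varying $U$, rather than only for fixed comparators. Because the paper's standing assumption on $\mathcal{A}$ is stated for any comparator sequence $U \subset \mathcal{K}_\delta$, this is satisfied. Any dependence on the path length is absorbed into $R_{\mathcal{A}}(W^*_\delta, T, \mathcal{F})$, which is left unexpanded in the corollary.
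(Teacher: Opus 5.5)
Your proposal is correct and matches the paper. The paper gives no separate argument: it obtains the corollary by instantiating Theorem~\ref{thm:query_regret_transfer_interval} with $I=[1,T]$, $|I|=T$, and a time-varying comparator sequence, which works because every bound in that proof is per-round. Your point that $R_{\mathcal{A}}$ must be the base algorithm's arbitrary-comparator (dynamic) guarantee is exactly the paper's standing assumption on $\mathcal{A}$.
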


\begin{corollary}[Adaptive Regret Reduction]
\label{cor:adaptive}
Evaluating Theorem~\ref{thm:query_regret_transfer_interval} over any specific sub-interval $I = [t_1, t_2] \subseteq [1, T]$ against its corresponding optimal fixed decision $w_I^*$ for function class $\mathcal{F}$ yields:
$$ \text{Regret}_A(I) \le \mathbb{E}[R_{\mathcal{A}}(w_{I,\delta}^*, |I|, \mathcal{F})] + \delta |I| \left( G(D + 3) + \frac{d M G^2 D}{\gamma} \right) $$
\end{corollary}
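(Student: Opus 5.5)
The adaptive-regret statement is a direct specialization of Theorem~\ref{thm:query_regret_transfer_interval}, so the plan is to instantiate that theorem rather than redo any estimator analysis. Fix an arbitrary contiguous sub-interval $I=[t_1,t_2]\subseteq[1,T]$ and an arbitrary fixed comparator $w^*_I\in\mathcal{K}$. Take the comparator sequence to be constant, $W^*_I=\{w^*_t\}_{t\in I}$ with $w^*_t=w^*_I$ for every $t\in I$. The projected sequence is then also constant: $W^*_{I,\delta}=\{\Pi_{\mathcal{K}_\delta}(w^*_I)\}_{t\in I}$, i.e.\ the single point $w^*_{I,\delta}:=\Pi_{\mathcal{K}_\delta}(w^*_I)$. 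This point lies in $\mathcal{K}_\delta$ because $\mathcal{K}_\delta$ is nonempty, closed and convex, so the projection is well defined and the base algorithm's interval guarantee applies to it.

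Applying Theorem~\ref{thm:query_regret_transfer_interval} to this constant sequence gives
\[
\mathbb{E}\Big[\sum_{t\in I}\tfrac{f_t(w_t^+)+f_t(w_t^-)}{2}\Big]-\sum_{t\in I} f_t(w^*_I)\le \mathbb{E}[R_{\mathcal{A}}(w^*_{I,\delta},|I|,\mathcal{F})]+\delta|I|\Big(G(D+3)+\tfrac{dMG^2D}{\gamma}\Big).
\]
The left-hand side is exactly $\text{Regret}(\{w^*_I\}_{t\in I},I)$. Here $R_{\mathcal{A}}(W^*_{I,\delta},I,\mathcal{F})$ with a constant comparator is written in the corollary's notation as $R_{\mathcal{A}}(w^*_{I,\delta},|I|,\mathcal{F})$, the base algorithm's adaptive (interval static) regret on an interval of length $|I|$.

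The final step is to take the maximum over $w^*_I\in\mathcal{K}$ to obtain $\text{Regret}_A(I)$. This is the only point needing care. The right-hand side depends on $w^*_I$ only through $R_{\mathcal{A}}$, and the statement is read with $R_{\mathcal{A}}$ a comparator-uniform bound for fixed comparators in $\mathcal{K}_\delta$, as is standard for adaptive-regret algorithms. Under that reading the right-hand side is independent of the comparator, so the maximum passes through and the bound follows.

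If one instead insists on a comparator-dependent $R_{\mathcal{A}}$, I would define the corollary's $w^*_{I,\delta}$ as the projection of the maximizing $w^*_I$. The maximum is attained because $\mathcal{K}$ is compact, assuming it is closed, and $\sum_{t\in I} f_t$ is continuous by Lipschitzness. With that choice the inequality holds verbatim. No further estimates are required, since all error terms were already controlled uniformly in the comparator inside Theorem~\ref{thm:query_regret_transfer_interval}.
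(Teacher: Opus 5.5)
Your proposal is correct and is exactly the paper's argument: the corollary is Theorem~\ref{thm:query_regret_transfer_interval} instantiated with the constant comparator sequence $w_t^* = w_I^*$ for $t \in I$, whose projected sequence collapses to the single point $w_{I,\delta}^* = \Pi_{\mathcal{K}_\delta}(w_I^*)$. The paper leaves implicit how the maximum over $w_I^*$ passes through the bound, and your treatment of it is sound: you take $w_I^*$ to be the minimizer of $\sum_{t\in I} f_t$ over $\mathcal{K}$, which is deterministic under the oblivious-adversary reading built into the regret definition.
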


\begin{remark}[Parameter Tuning and Final Regret Rate] \label{rem:tuning}
When applying Theorem \ref{thm:query_regret_transfer_interval}, it is crucial to note that the base algorithm's regret $R_{\mathcal{A}}$ is evaluated on the surrogate linear losses $\ell_t(w) = \langle \hat{g}_t, w \rangle$. The norm of the gradient estimator fed to $\mathcal{A}$ is exactly $\|\hat{g}_t\|_2 = \frac{d}{2\gamma\delta}$. If $\mathcal{A}$ is a standard optimal first-order algorithm (like Online Gradient Descent), its regret scales proportionally to the gradient norm and the square root of the horizon length: $R_{\mathcal{A}} = \mathcal{O}\left(\frac{d}{\gamma\delta}\sqrt{|I|}\right)$. However, the penalty introduced by the DTFO wrapper (due to smoothing bias and domain shrinkage) scales as $\mathcal{O}(\delta|I|)$. This exposes a natural bias-variance tradeoff governed by the perturbation radius $\delta$. By tuning $\delta$ to balance these terms (specifically setting $\delta \propto |I|^{-1/4}$), the resulting algorithm achieves an overall query regret of $\mathcal{O}(|I|^{3/4})$. 
\end{remark}

\section{Dueling Bandit Online Convex Algorithms as Results of DTFO Wrapper}

By treating the Duel-To-FO (DTFO) wrapper as a meta-algorithm, we can immediately generate new dueling optimization algorithms by substituting different first-order base algorithms. In this document, we instantiate the wrapper with three classic online algorithms to achieve static, adaptive, and dynamic regret guarantees.

Throughout this text, let $C_0 = G(D+3) + \frac{dMG^2D}{\gamma}$ represent the constant factor for the wrapper's linear penalty, and let $G_{\text{surr}} = \frac{d}{2\gamma\delta}$ represent the strictly bounded norm of the gradient estimator $\hat{g}_t$ fed to the base algorithms.

\subsection{Online Relative Gradient Descent (ORGD)}

The most direct application of the DTFO wrapper is to use standard Online Gradient Descent (OGD) \cite{hazan2016introduction} as the base algorithm. We call the resulting wrapped algorithm Online Relative Gradient Descent (ORGD). 

\begin{algorithm}[h]
\caption{Online Relative Gradient Descent (ORGD)}
\label{alg:orgd}
\begin{algorithmic}[1]
\Require Convex domain $\mathcal{K}$, perturbation radius $\delta \in (0, 1)$, step-size sequence $\{\eta_t\}_{t=1}^T$.
\State Define shrunken domain $\mathcal{K}_\delta = \{ x \in \mathcal{K} \mid x + \delta v \in \mathcal{K}, \; \forall v \in \mathbb{B}_d \}$
\State Initialize $w_1 \in \mathcal{K}_\delta$ arbitrarily.
\For{$t = 1, \dots, T$}
    \State Sample $u_t \sim \text{Unif}(\mathbb{S}_1)$ uniformly from the unit sphere.
    \State Query oracle with $(w_t + \delta u_t, w_t - \delta u_t)$ to receive preference $o_t \in \{+1, -1\}$.
    \State Construct gradient estimator: $\hat{g}_t = \frac{d}{2\gamma\delta} o_t u_t$
    \State Update and project: $w_{t+1} = \Pi_{\mathcal{K}_\delta}(w_t - \eta_t \hat{g}_t)$
\EndFor
\end{algorithmic}
\end{algorithm}

\begin{theorem}[Static Regret of ORGD]\label{thm:orgd_static}
Running ORGD with perturbation radius $\delta \propto T^{-1/4}$ and step-size $\eta_t = \frac{D}{G_{\text{surr}}\sqrt{t}}$ guarantees an expected static query regret of:
$$ \text{Regret}_S(T) = \mathcal{O}\left( \left( \frac{d D}{\gamma} + C_0 \right) T^{3/4} \right) $$
\end{theorem}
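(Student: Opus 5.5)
The plan is to combine the Static Regret Reduction (Corollary~\ref{cor:static}) with the standard adaptive-step-size analysis of projected Online Gradient Descent on the linear surrogate losses $\ell_t(w)=\langle \hat{g}_t, w\rangle$ over $\mathcal{K}_\delta$. Then $\delta$ is tuned to balance the two resulting terms.

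\textbf{Step 1: OGD on the surrogate.} By Corollary~\ref{cor:static}, it suffices to bound $\mathbb{E}[R_{\mathcal{A}}(w^*_\delta, T)]$ with $w^*_\delta=\Pi_{\mathcal{K}_\delta}(w^*)\in\mathcal{K}_\delta$. Here $R_{\mathcal{A}}$ is the OGD regret on the losses $\ell_t$. The iterates $w_t$ and the comparator both lie in $\mathcal{K}_\delta\subseteq\mathcal{K}$, so their distance is at most $D$. By Lemma~\ref{lem:gradient_estimator}, $\|\hat{g}_t\|_2\le G_{\text{surr}}=\frac{d}{2\gamma\delta}$ deterministically.

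I would invoke the textbook OGD bound for step sizes $\eta_t = D/(G_{\text{surr}}\sqrt{t})$ (Hazan's analysis). Nonexpansiveness of the projection gives
\[
\langle \hat g_t, w_t - w^*_\delta\rangle \le \frac{\|w_t-w^*_\delta\|^2-\|w_{t+1}-w^*_\delta\|^2}{2\eta_t}+\frac{\eta_t}{2}\|\hat g_t\|^2 .
\]
Telescoping with the nonincreasing $\eta_t$ bounds the first part by $D^2/(2\eta_T)$. The second part is at most $\frac{G_{\text{surr}}^2}{2}\sum_t \eta_t \le DG_{\text{surr}}\sqrt{T}$. Altogether this gives
\[
R_{\mathcal{A}} \le \tfrac{3}{2} D G_{\text{surr}}\sqrt{T} = \tfrac{3dD}{4\gamma\delta}\sqrt{T}.
\]
This holds pathwise for every realization, since the gradient bound is deterministic and the comparator $w^*_\delta$ is fixed (non-random, because $w^*$ is chosen against the oblivious loss sequence). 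Hence the bound also holds in expectation.

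\textbf{Step 2: Combine and tune.} Plugging in yields
\[
\text{Regret}_S(T)\le \frac{3dD}{4\gamma\delta}\sqrt{T}+\delta T C_0 .
\]
Setting $\delta = c\,T^{-1/4}$ for a constant $c$ gives $\mathcal{O}\big((\frac{dD}{\gamma c}+cC_0)T^{3/4}\big)$. This is $\mathcal{O}\big((\frac{dD}{\gamma}+C_0)T^{3/4}\big)$ for $c$ of order one. For $T$ large enough we have $\delta<1$, and $\delta$ is small enough that $\mathcal{K}_\delta$ is nonempty. The latter holds because $\mathcal{K}$ contains a unit ball, so $\mathcal{K}_\delta$ contains a ball of radius $1-\delta$. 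For small $T$, the regret is trivially $\mathcal{O}(GD\,T)$, which is absorbed into the constant.

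\textbf{Main obstacle.} The only delicate point is legitimacy of applying the OGD guarantee when the gradients $\hat{g}_t$ are random and depend on past iterates. I would handle this by noting that the OGD inequality in Step 1 is deterministic, holding for any sequence of bounded vectors. The expectation in Corollary~\ref{cor:static} is then taken afterward, which is exactly how Theorem~\ref{thm:query_regret_transfer_interval} consumes $R_{\mathcal{A}}$. A secondary check is that the step size depends on $\delta$ through $G_{\text{surr}}$, so $\delta$ must be fixed before running. This is consistent with the prescription $\delta\propto T^{-1/4}$, which assumes known $T$; otherwise a doubling trick would be used.
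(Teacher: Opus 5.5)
Your proposal is correct and follows the paper's proof. You bound the OGD regret on the linear surrogates by $\frac{3}{2} D G_{\text{surr}}\sqrt{T}$, plug this into Corollary~\ref{cor:static}, and balance $\frac{dD}{\gamma\delta}\sqrt{T}$ against $\delta T C_0$ with $\delta=\Theta(T^{-1/4})$. You also spell out details the paper leaves implicit, and they are sound: the derivation of the $\frac{3}{2}$ constant from the decreasing step sizes, the fact that the OGD inequality holds pathwise so the expectation can be taken afterward, and the requirement $\delta<1$ so that $\mathcal{K}_\delta$ is nonempty.
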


\begin{proof}
By the standard OGD analysis on linear functions, the base regret against any fixed comparator $u \in \mathcal{K}_\delta$ is $R_{\text{OGD}}(u, T) \le \frac{3}{2} D G_{\text{surr}} \sqrt{T}$. Substituting $G_{\text{surr}} = \frac{d}{2\gamma\delta}$, the base regret is $\mathcal{O}\left( \frac{d D}{\gamma\delta} \sqrt{T} \right)$.

Applying the Static Regret Reduction (Corollary \ref{cor:static}), the total query regret is:
$$ \text{Regret}_S(T) \le \mathcal{O}\left( \frac{dD}{\gamma\delta} \sqrt{T} \right) + \delta T C_0 $$
Setting the derivative with respect to $\delta$ to zero yields the optimal tuning $\delta = \Theta(T^{-1/4})$. Substituting this into the bound balances both terms at $\mathcal{O}(T^{3/4})$, completing the proof.
\end{proof}


\subsection{Projection-Free ORGD via Separation Oracles}

Standard OGD requires computationally expensive projections $\Pi_{\mathcal{K}_\delta}$ at every step. By using the SO-OGD algorithm \cite{garber2022projection, pedramfar2024linear} as our base algorithm, we can achieve projection-free updates using a Separation Oracle (SO) while simultaneously achieving adaptive regret guarantees. The core mechanism relies on an infeasible projection subroutine ($\text{SO-IP}$), which has recently been widely adopted \cite{mhammedi2025online, lu2025bagel, lu2025decentralized}, with the complete algorithmic details and theoretical guarantees of which are provided in Appendix~\ref{sec:infeasible_projection}.

\begin{algorithm}[h]
\caption{Projection-Free ORGD}
\label{alg:pf_orgd}
\begin{algorithmic}[1]
\Require Shrunken domain $\mathcal{K}_\delta$, perturbation radius $\delta$, step-size $\eta$, Separation Oracle $SO_{\mathcal{K}_\delta}$.
\State Initialize $w_1 \in \mathcal{K}_\delta$.
\For{$t = 1, \dots, T$}
    \State Sample $u_t \sim \text{Unif}(\mathbb{S}_1)$ and query oracle with $(w_t + \delta u_t, w_t - \delta u_t)$ to get $o_t$.
    \State Construct gradient estimator: $\hat{g}_t = \frac{d}{2\gamma\delta} o_t u_t$
    \State Gradient Step: $w_{t+1}' = w_t - \eta \hat{g}_t$
    \State Projection-Free Update: $w_{t+1} = \text{SO-IP}_{\mathcal{K}_\delta}(w_{t+1}')$ \Comment{Infeasible projection via SO}
\EndFor
\end{algorithmic}
\end{algorithm}

\begin{theorem}[Adaptive Regret of Projection-Free ORGD]
\label{thm:pf_orgd_adaptive}
Setting $\delta \propto T^{-1/4}$ and $\eta \propto T^{-1/2}$, Projection-Free ORGD achieves an expected adaptive query regret over any interval $I \subseteq [1, T]$ bounded by $\mathcal{O}(T^{3/4})$.
\end{theorem}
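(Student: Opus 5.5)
We will derive this from the Adaptive Regret Reduction (Corollary~\ref{cor:adaptive}). That reduces the problem to bounding the base regret $R_{\mathcal{A}}(w^*_{I,\delta}, |I|, \mathcal{F})$ of SO-OGD on the linear surrogate losses $\ell_t(w) = \langle \hat{g}_t, w\rangle$ over an arbitrary interval $I$.

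\textbf{Step 1: base adaptive regret of SO-OGD.} We invoke the guarantee of the infeasible-projection subroutine from Appendix~\ref{sec:infeasible_projection}. Two properties are needed. First, the output $w_{t+1}$ lies in $\mathcal{K}_\delta$ (or a slight enlargement of it that is still feasible for the $\delta$-perturbed queries). Second, for every comparator $u \in \mathcal{K}_\delta$ we have $\|w_{t+1} - u\| \le \|w'_{t+1} - u\|$. With this nonexpansiveness, the standard constant-step OGD telescoping argument goes through unchanged on any interval $I=[t_1,t_2]$:
\[
\sum_{t\in I}\langle \hat g_t, w_t - u\rangle \le \frac{\|w_{t_1}-u\|^2}{2\eta} + \frac{\eta}{2}\sum_{t\in I}\|\hat g_t\|^2 \le \frac{D^2}{2\eta} + \frac{\eta}{2}|I|\,G_{\text{surr}}^2 .
\]
The key point is that the step size is constant. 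The telescoping sum can therefore start at $t_1$ and pay only $D^2/(2\eta)$, independent of where the interval begins. This is exactly what makes the bound adaptive. The bound holds deterministically for every realization, because $\|\hat g_t\| \le G_{\text{surr}}$ deterministically by Lemma~\ref{lem:gradient_estimator}. Hence it also holds in expectation, with comparator $w^*_{I,\delta}=\Pi_{\mathcal{K}_\delta}(w_I^*)$.

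\textbf{Step 2: combine and tune.} Substituting into Corollary~\ref{cor:adaptive} gives
\[
\text{Regret}_A(I) \le \frac{D^2}{2\eta} + \frac{\eta}{2}|I|\frac{d^2}{4\gamma^2\delta^2} + \delta |I| C_0 .
\]
We set $\delta = c_1 T^{-1/4}$ and $\eta = c_2 T^{-1/2}\delta$. This choice is consistent with $\eta\propto T^{-1/2}$ up to the $\delta$-dependence through $G_{\text{surr}}$; concretely, take $\eta = D/(G_{\text{surr}}\sqrt T)$. Since $|I|\le T$:
\begin{itemize}
\item the first term is $D G_{\text{surr}}\sqrt T/2 = O\!\left(\frac{dD}{\gamma} T^{3/4}\right)$;
\item the second term is at most $D G_{\text{surr}}\sqrt T/2$, the same order;
\item the third term is at most $C_0 T^{3/4}$.
\end{itemize}
Together these give $O\!\left((dD/\gamma + C_0)T^{3/4}\right)$ uniformly over all intervals $I$.

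\textbf{Main obstacle.} The delicate part is Step 1: verifying that the SO-IP guarantee provides the nonexpansiveness inequality against all $u\in\mathcal{K}_\delta$, and that its outputs remain valid query centers. If SO-IP only returns a point in a slightly dilated set, we would run it on $\mathcal{K}_{2\delta}$ or a similarly shrunken set so that $w_t\pm\delta u_t\in\mathcal{K}$. The extra comparator projection would then cost only another $O(G D \delta|I|)$, which keeps the rate at $T^{3/4}$. A secondary point is handling the expectation. Because the OGD bound is pathwise, while the bias and smoothing terms are handled inside Theorem~\ref{thm:query_regret_transfer_interval}, no martingale concentration is needed.
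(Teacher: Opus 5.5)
Your argument is correct and follows the same reduction as the paper: Corollary~\ref{cor:adaptive}, then a base bound for SO-OGD on the linear surrogates, then the global tuning $\delta\propto T^{-1/4}$ with $|I|\le T$. The difference is how you get the base bound. The paper cites \cite{garber2022projection, pedramfar2024linear} as a black box for $R_{\text{SO-OGD}}(I)=\mathcal{O}(G_{\text{surr}}\sqrt{|I|})$, which keeps the proof short and lets any SO-based adaptive method slot in. You instead derive it from the nonexpansiveness of SO-IP (Lemma~\ref{lem:so_ip}) by constant-step telescoping started at $t_1$, and this exposes three things the paper leaves implicit.

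First, with a single fixed $\eta$ the bound is $\frac{D^2}{2\eta}+\frac{\eta}{2}|I|G_{\text{surr}}^2=\mathcal{O}(G_{\text{surr}}\sqrt{T})$ uniformly in $I$, not $\mathcal{O}(G_{\text{surr}}\sqrt{|I|})$. This is harmless for the final $T^{3/4}$ rate.

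Second, the statement's ``$\eta\propto T^{-1/2}$'' has to be read as $\eta=D/(G_{\text{surr}}\sqrt{T})\propto\delta T^{-1/2}$. If the constant did not depend on $\delta$, the variance term $\frac{\eta}{2}|I|G_{\text{surr}}^2$ would be of order $|I|$ under $\delta\propto T^{-1/4}$.

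Third, Lemma~\ref{lem:so_ip} puts the output in the set handed to SO-IP but guarantees nonexpansiveness only toward that set's $\delta$-shrinkage. Running SO-IP on $\mathcal{K}_\delta$ therefore gives valid query centers, but the comparators must lie in $\mathcal{K}_{2\delta}$. Your fallback of paying an extra $\mathcal{O}(GD\delta|I|)$ for that comparator projection is exactly what makes the reduction airtight.
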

\begin{proof}
\cite{garber2022projection, pedramfar2024linear} shows that the SO-OGD base algorithm utilizing the infeasible projection oracle achieves an adaptive regret of $R_{\text{SO-OGD}}(I) = \mathcal{O}(G_{\text{surr}} \sqrt{|I|})$ for any contiguous interval $I$. 

Applying the Adaptive Regret Reduction (Corollary \ref{cor:adaptive}), the query regret over interval $I$ is:
$$ \text{Regret}_A(I) \le \mathcal{O}\left( \frac{d}{\gamma\delta} \sqrt{|I|} \right) + \delta |I| C_0 $$
Because the algorithm must be oblivious to the specific interval $I$ ahead of time, we must fix $\delta$ globally as a function of $T$. Setting $\delta = T^{-1/4}$ yields:
$$ \text{Regret}_A(I) \le \mathcal{O}\left( \frac{d}{\gamma} T^{1/4} \sqrt{|I|} + T^{-1/4} |I| C_0 \right) $$
Because $|I| \le T$, the maximum regret over any interval is strictly upper-bounded by $\mathcal{O}(T^{3/4})$.
\end{proof}

\subsection{Dynamic ORGD via Improved Ader}

To handle highly non-stationary environments where the optimal decision drifts over time, we can instantiate the DTFO wrapper with the Improved Ader (IA) algorithm \cite{zhang2018adaptive}. IA maintains a set of experts running OGD with different learning rates and aggregates them using exponential weights.

\begin{algorithm}[h]
\caption{Dynamic ORGD (via Improved Ader)}
\label{alg:dynamic_orgd}
\begin{algorithmic}[1]
\Require Shrunken domain $\mathcal{K}_\delta$, perturbation radius $\delta$, set of expert step-sizes $\mathcal{H}$.
\State Initialize experts $E^\eta$ with arbitrary $w_1^\eta \in \mathcal{K}_\delta$ and uniform weights $P_1^\eta$ for all $\eta \in \mathcal{H}$.
\For{$t = 1, \dots, T$}
    \State Receive $w_t^\eta$ from each expert.
    \State Play combined action $w_t = \sum_{\eta \in \mathcal{H}} P_t^\eta w_t^\eta$.
    \State Sample $u_t \sim \text{Unif}(\mathbb{S}_1)$, query $(w_t + \delta u_t, w_t - \delta u_t)$, get $o_t$, construct $\hat{g}_t = \frac{d}{2\gamma\delta} o_t u_t$.
    \State Update weights: $P_{t+1}^\eta \propto P_t^\eta \exp(-\lambda \langle \hat{g}_t, w_t^\eta - w_t \rangle)$
    \State Send $\hat{g}_t$ to experts; each expert updates: $w_{t+1}^\eta = \Pi_{\mathcal{K}_\delta}(w_t^\eta - \eta \hat{g}_t)$
\EndFor
\end{algorithmic}
\end{algorithm}

\begin{theorem}[Dynamic Regret of Dynamic ORGD]
\label{thm:dynamic_orgd}
For any sequence of comparators $W^* = \{w_1^*, \dots, w_T^*\} \subset \mathcal{K}$ with path length $P_T(W^*) = \sum_{t=1}^{T-1} \|w_t^* - w_{t+1}^*\|_2$, running Dynamic ORGD with $\delta \propto T^{-1/4}$ guarantees an expected dynamic query regret of:
$$ \text{Regret}_D(W^*, T) = \mathcal{O}\left( T^{3/4} \sqrt{1 + P_T(W^*)} \right). $$
\end{theorem}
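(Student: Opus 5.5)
We will follow the template of the proofs of Theorems~\ref{thm:orgd_static} and \ref{thm:pf_orgd_adaptive}: invoke a known dynamic-regret guarantee for the base algorithm on the surrogate linear losses, then apply the Dynamic Regret Reduction (Corollary~\ref{cor:dynamic}) and tune $\delta$.

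\textbf{Step 1 (base regret).} The base algorithm is Improved Ader run on $\mathcal{K}_\delta$ with the linear surrogate losses $\ell_t(w)=\langle \hat g_t, w\rangle$. The gradients satisfy $\|\hat g_t\|_2 \le G_{\text{surr}} = \frac{d}{2\gamma\delta}$ by Lemma~\ref{lem:gradient_estimator}, and $\mathcal{K}_\delta$ has diameter at most $D$. The standard guarantee of \cite{zhang2018adaptive} then gives, against any comparator sequence $V=\{v_t\}\subset\mathcal{K}_\delta$,
\[
R_{\text{IA}}(V,T)=\mathcal{O}\bigl(G_{\text{surr}}\sqrt{T(D^2+D P_T(V))}\bigr).
\]
This requires the learning-rate grid $\mathcal{H}$ to be geometric with $\eta$ ranging over roughly $[\frac{D}{G_{\text{surr}}\sqrt{T}},\frac{D}{G_{\text{surr}}}\sqrt{1+2T/D}]$, and the meta learning rate $\lambda\propto 1/(G_{\text{surr}}D\sqrt{T})$. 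Both depend on $\delta$, which is fixed in advance, so the wrapper can set them.

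\textbf{Step 2 (path length under projection).} We apply the bound from Step 1 to $V=W^*_\delta=\{\Pi_{\mathcal{K}_\delta}(w_t^*)\}$. Euclidean projection onto a convex set is nonexpansive, so $\|\Pi(w_t^*)-\Pi(w_{t+1}^*)\|\le\|w_t^*-w_{t+1}^*\|$. Hence $P_T(W^*_\delta)\le P_T(W^*)$. This is the one point where the reduction touches the comparator structure, and it is the step I would check most carefully.

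\textbf{Step 3 (combine and tune).} Corollary~\ref{cor:dynamic} gives
\[
\text{Regret}_D(W^*,T)\le \mathcal{O}\Bigl(\tfrac{d}{\gamma\delta}\sqrt{T(1+P_T(W^*))}\Bigr)+\delta T C_0 .
\]
The comparator sequence, and hence $P_T$, is unknown, so $\delta$ cannot depend on it. We set $\delta=T^{-1/4}$, which gives $\mathcal{O}\bigl(\frac{d}{\gamma}T^{3/4}\sqrt{1+P_T}+C_0T^{3/4}\bigr)=\mathcal{O}(T^{3/4}\sqrt{1+P_T(W^*)})$.

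\textbf{Main obstacle.} The delicate point is justifying the IA guarantee with scaled, possibly large gradients $G_{\text{surr}}\propto T^{1/4}$. Since the exponential-weights analysis is scale-sensitive, we must check that choosing $\lambda$ inversely proportional to $G_{\text{surr}}D$ keeps the meta-regret at $\mathcal{O}(G_{\text{surr}}D\sqrt{T\log\log T})$. The resulting $\log\log T$ factor is absorbed into $\mathcal{O}$ (or $\tilde{\mathcal O}$, matching Table~\ref{tab:oco_clean_fixed}). Expectations pass through because the IA bound holds deterministically for every realized gradient sequence.
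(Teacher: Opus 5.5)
Your proposal is correct and follows the paper's proof essentially step for step: the Improved Ader bound on the linear surrogates with $G_{\text{surr}}=\frac{d}{2\gamma\delta}$, the path length under projection, Corollary~\ref{cor:dynamic}, and then $\delta=T^{-1/4}$ because $P_T$ is unknown. Your nonexpansiveness argument also sharpens the paper's ``up to constant factors'' remark to $P_T(W^*_\delta)\le P_T(W^*)$.

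One correction to your last paragraph: a $\log\log T$ factor is not absorbed by $\mathcal{O}(\cdot)$ when $P_T$ is small. You avoid it by using Improved Ader's non-uniform prior $P_1^{\eta_i}\propto 1/(i(i+1))$, rather than the uniform prior written in Algorithm~\ref{alg:dynamic_orgd}. The meta-regret is then $\mathcal{O}\bigl(G_{\text{surr}}D\sqrt{T}\,(1+\ln(k+1))\bigr)$ with $k=\mathcal{O}(1+\log(1+P_T/D))$, which is dominated by the $G_{\text{surr}}\sqrt{T(1+P_T)}$ term.
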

\begin{proof}
The Improved Ader base algorithm guarantees a dynamic regret on the surrogate linear sequence bounded by $R_{\text{IA}}(W^*_\delta, T) = \mathcal{O}\left( G_{\text{surr}} \sqrt{T(1 + P_T(W^*_\delta))} \right)$. Note that projecting the comparators into the shrunken domain $\mathcal{K}_\delta$ does not increase the path length up to constant factors. 

Applying the Dynamic Regret Reduction (Corollary \ref{cor:dynamic}) and substituting $G_{\text{surr}} = \frac{d}{2\gamma\delta}$:
$$ \text{Regret}_D(W^*, T) \le \mathcal{O}\left( \frac{d}{\gamma\delta} \sqrt{T(1 + P_T(W^*))} \right) + \delta T C_0 .$$
Since the true path length $P_T(W^*)$ is usually unknown a priori, we use the robust tuning parameter $\delta = T^{-1/4}$. Substituting this into the bound gives:
$$ \text{Regret}_D(W^*, T) \le \mathcal{O}\left( T^{1/4} \sqrt{T(1 + P_T(W^*))} + T^{-1/4} T \right) = \mathcal{O}\left( T^{3/4} \sqrt{1 + P_T(W^*)} \right), $$
hence completing the proof.
\end{proof}

\subsection{Improved Rates for Strongly Convex Objectives}
\label{sec:improved_rates_strongly_convex}

While we have shown so far that for general convex objectives we have $\tilde{\mathcal{O}}(T^{3/4})$ regret, the modularity of the DTFO wrapper allows us to seamlessly inherit the improved convergence rates of specialized first-order algorithms when the loss functions are strongly convex. 

\begin{assumption}[Strong Convexity]
\label{assum:strong_convexity}
The objective functions $f_t: \mathcal{K} \to \mathbb{R}$ are $\alpha$-strongly convex for all $t \in [T]$.
\end{assumption}

Strong convexity significantly improves the base algorithm $\mathcal{A}$ on the surrogate linear losses. Recall that the norm of the surrogate gradient estimator passed to $\mathcal{A}$ is exactly $\|\hat{g}_t\|_2 = \frac{d}{2\gamma\delta}$. Standard strongly convex base algorithms achieve regret bounds proportional to this squared gradient norm, scaling as $\mathcal{O}(1/\delta^2)$.

\begin{corollary}[Static Regret]
Let the base algorithm $\mathcal{A}$ be Strongly Convex Online Gradient Descent (SC-OGD), which is identical to our previously defined ORGD (Algorithm~\ref{alg:orgd}) but utilizes a step size sequence of $\eta_t = \frac{1}{\alpha t}$. By setting the exploration parameter to $\delta = T^{-1/3}$, the DTFO wrapper achieves an expected static regret bounded by $\text{Regret}_S(T) = \mathcal{O}\left( T^{2/3} \right)$.
\end{corollary}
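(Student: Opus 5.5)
The plan is to re-run the argument of Theorem~\ref{thm:query_regret_transfer_interval} for $I=[1,T]$ and a fixed comparator $w^*$, but without discarding the quadratic term that strong convexity supplies. Plugging a strongly convex base algorithm into Corollary~\ref{cor:static} as it stands does not work. The surrogate losses $\ell_t(w)=\langle \hat g_t, w\rangle$ are linear, so SC-OGD run on them enjoys no logarithmic guarantee. The curvature has to enter through the smoothed objective instead.

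First I would note that $\tilde f_t(w)=\mathbb{E}_{v}[f_t(w+\delta v)]$ is an average of $\alpha$-strongly convex functions, hence itself $\alpha$-strongly convex. This upgrades Lemma~\ref{lem:smoothed_convexity}. The query-perturbation and approximation terms are then bounded exactly as in the proof of Theorem~\ref{thm:query_regret_transfer_interval}, giving at most $\delta T\,G(D+3)$ in total. For the surrogate-regret term, write $w^*_\delta=\Pi_{\mathcal{K}_\delta}(w^*)$ and use
\[
\tilde f_t(w_t)-\tilde f_t(w^*_\delta)\le \langle \nabla\tilde f_t(w_t), w_t-w^*_\delta\rangle-\tfrac{\alpha}{2}\|w_t-w^*_\delta\|_2^2 .
\]
Since $w_t$ is determined before $u_t$ is drawn, Lemma~\ref{lem:gradient_estimator} and the tower rule give $\mathbb{E}\langle\nabla\tilde f_t(w_t),w_t-w^*_\delta\rangle=\mathbb{E}\langle \hat g_t-B_t,w_t-w^*_\delta\rangle$. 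The bias part is at most $\frac{dMG^2D}{\gamma}\delta$ per round by Cauchy--Schwarz.

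The remaining quantity is
\[
\mathbb{E}\sum_t\Big(\langle\hat g_t,w_t-w^*_\delta\rangle-\tfrac{\alpha}{2}\|w_t-w^*_\delta\|^2\Big).
\]
This is exactly the regret of ORGD on the $\alpha$-strongly convex surrogates $h_t(w)=\langle\hat g_t,w\rangle+\frac{\alpha}{2}\|w-w_t\|^2$, because $\nabla h_t(w_t)=\hat g_t$ and the ORGD update with $\eta_t=1/(\alpha t)$ is precisely OGD on $h_t$. The standard SC-OGD telescoping argument, using $\|\hat g_t\|\le G_{\text{surr}}=\frac{d}{2\gamma\delta}$, bounds this by
\[
\frac{G_{\text{surr}}^2}{2\alpha}\sum_{t=1}^T\frac1t\le \frac{d^2}{8\alpha\gamma^2\delta^2}(1+\log T).
\]
Combining everything gives
\[
\text{Regret}_S(T)\le \mathcal{O}\!\left(\frac{d^2\log T}{\alpha\gamma^2\delta^2}\right)+\delta T C_0 .
\]
With $\delta=T^{-1/3}$ both terms are of order $T^{2/3}$, up to the $\log T$ factor on the first.

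The main obstacle is that the reduction of Theorem~\ref{thm:query_regret_transfer_interval} passes only through linear regret, so the correct surrogate $h_t$ and the retained $-\frac{\alpha}{2}\|\cdot\|^2$ term have to be identified carefully. A secondary issue is the $\log T$ factor. To match the stated $\mathcal{O}(T^{2/3})$ literally, I would read the bound in the $\tilde{\mathcal{O}}$ sense. Alternatively, tuning $\delta=(\log T/T)^{1/3}$ yields $\mathcal{O}(T^{2/3}(\log T)^{1/3})$.
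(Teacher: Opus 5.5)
Your argument is correct, and it departs from the paper's at exactly the step that matters. The paper simply inserts the SC-OGD bound $\sum_t \frac{\eta_t}{2}\|\hat g_t\|_2^2=\mathcal{O}(\frac{\log T}{\alpha\delta^2})$ as $R_{\mathcal{A}}$ in Corollary~\ref{cor:static}. But there $R_{\mathcal{A}}$ is regret on the linear surrogates $\ell_t(w)=\langle \hat g_t,w\rangle$. For linear losses, the telescoping with $\eta_t=1/(\alpha t)$ leaves an extra $\frac{\alpha}{2}\sum_t\|w_t-u\|_2^2$, which can be of order $\alpha D^2T$. That term is cancelled only if the losses fed to the algorithm are themselves $\alpha$-strongly convex. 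Theorem~\ref{thm:query_regret_transfer_interval} uses only plain convexity of $\tilde f_t$, so the black-box route discards precisely the $-\frac{\alpha}{2}\|w_t-w^*_\delta\|_2^2$ that would do this. You reopen the reduction and keep that term through the $\alpha$-strong convexity of $\tilde f_t$. You then identify the remainder as SC-OGD regret on $h_t(w)=\langle\hat g_t,w\rangle+\frac{\alpha}{2}\|w-w_t\|_2^2$, which gives the logarithmic base regret legitimately. What you give up is the modularity of citing Corollary~\ref{cor:static} directly; the paper's route has that modularity but does not actually justify the logarithmic bound here. You are also right about the rate. At $\delta=T^{-1/3}$ the bound $\mathcal{O}(\frac{\log T}{\alpha\delta^2}+\delta T)$ is $\mathcal{O}(T^{2/3}\log T)$. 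Setting the derivative to zero actually gives $\delta\propto(\log T/T)^{1/3}$ and $\mathcal{O}(T^{2/3}(\log T)^{1/3})$. So the paper's claim that $T^{-1/3}$ balances the terms at $\mathcal{O}(T^{2/3})$ silently drops a logarithm, and your $\tilde{\mathcal{O}}$ reading is the accurate conclusion.
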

\begin{proof}
SC-OGD achieves a base static regret of $R_{\mathcal{A}}(w^*, T) \le \sum_{t=1}^T \frac{\eta_t}{2} \|\hat{g}_t\|_2^2 = \mathcal{O}\left( \frac{1}{\alpha \delta^2} \log T \right)$. Applying the Static Regret Reduction (Corollary~\ref{cor:static}) yields an overall regret of $\mathcal{O}(\frac{1}{\alpha \delta^2} \log T + \delta T)$. Setting the derivative with respect to $\delta$ to zero yields the optimal tuning $\delta = T^{-1/3}$, which perfectly balances the terms at $\mathcal{O}(T^{2/3})$.
\end{proof}

\begin{corollary}[Adaptive Regret]
Let the base algorithm $\mathcal{A}$ be an Adaptive OCO algorithm designed for strongly convex functions (e.g., Strongly Adaptive Online Learning \cite{daniely2015strongly, hazan2009efficient}), achieving a base regret of $R_{\mathcal{A}}(I) = \mathcal{O}\left( \frac{1}{\alpha \delta^2} \log |I| \right)$. Tuning $\delta = T^{-1/3}$ globally yields an expected adaptive regret over any interval $I \subseteq [T]$ bounded by $\text{Regret}_A(I) = \mathcal{O}\left( T^{2/3} \right)$.
\end{corollary}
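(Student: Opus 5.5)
We mirror the proof of the static-regret corollary for strongly convex objectives, replacing the fixed-horizon base guarantee with the interval guarantee of a strongly adaptive learner, and then invoke the Adaptive Regret Reduction (Corollary~\ref{cor:adaptive}).

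\textbf{Step 1: transfer strong convexity to the surrogate.} If $f_t$ is $\alpha$-strongly convex, then so is $\tilde f_t$, since averaging preserves strong convexity. Rerunning the surrogate-regret step in the proof of Theorem~\ref{thm:query_regret_transfer_interval} with this stronger inequality gives
$$\tilde f_t(w_t)-\tilde f_t(u)\le \langle \hat g_t - B_t, w_t-u\rangle-\tfrac{\alpha}{2}\|w_t-u\|^2 .$$
The base learner is therefore fed the $\alpha$-strongly convex surrogates $\ell_t(w)=\langle \hat g_t,w\rangle+\frac{\alpha}{2}\|w-w_t\|^2$. Their gradients at $w_t$ equal $\hat g_t$, and on $\mathcal K_\delta$ they have norm at most $G_{\text{surr}}+\alpha D$.

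\textbf{Step 2: apply the adaptive learner.} Run the strongly adaptive algorithm on these surrogates. By hypothesis, for every interval $I$ and every fixed $u\in\mathcal K_\delta$,
$$\sum_{t\in I}(\ell_t(w_t)-\ell_t(u))\le R_{\mathcal A}(I)=\mathcal O\Big(\tfrac{G_{\text{surr}}^2}{\alpha}\log|I|\Big)=\mathcal O\Big(\tfrac{d^2}{\gamma^2\alpha\delta^2}\log T\Big).$$
Taking expectations and combining with Step 1 bounds the surrogate-regret term of Theorem~\ref{thm:query_regret_transfer_interval} by $\mathbb E[R_{\mathcal A}(I)]$, up to the bias term. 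The bias term is handled exactly as in that theorem: it equals $\sum_{t\in I}\|B_t\|D\le \frac{dMG^2D}{\gamma}\delta|I|$.

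\textbf{Step 3: add the remaining penalties and tune.} Corollary~\ref{cor:adaptive} supplies the query-perturbation, smoothing and projection penalties, giving
$$\text{Regret}_A(I)\le \mathcal O\Big(\tfrac{1}{\alpha\delta^2}\log T\Big)+\delta|I|C_0 .$$
The algorithm cannot know $I$ in advance, so $\delta$ must be set globally. With $\delta=T^{-1/3}$ and $|I|\le T$, the first term is $\mathcal O(T^{2/3}\log T)$ and the second is $\mathcal O(T^{2/3})$.

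\textbf{Main obstacle.} The difficulty is the logarithmic factor. A strongly adaptive learner pays at least $\log|I|$ per interval, and on top of that the meta-aggregation over geometric covering intervals typically costs an extra $\log T$ factor. To reach a clean $\mathcal O(T^{2/3})$ we would take $\delta=(\log T/T)^{1/3}$; this gives $\mathcal O(T^{2/3}(\log T)^{1/3})$, or $\mathcal O(T^{2/3})$ if logarithmic factors are absorbed into $\tilde{\mathcal O}$ as in Table~\ref{tab:oco_clean_fixed}. We therefore expect that the statement as written holds only up to logarithmic factors.
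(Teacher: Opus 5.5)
Your route is the paper's: apply the Adaptive Regret Reduction (Corollary~\ref{cor:adaptive}), insert the assumed base bound, fix $\delta=T^{-1/3}$ globally because $I$ is unknown, and use $|I|\le T$. Your Step~1 is something the paper does not do, and it is needed. The paper hands the base learner the linear surrogates $\langle \hat g_t,w\rangle$. On linear losses no algorithm can have logarithmic regret, since the minimax rate there is $\Theta(\sqrt{T})$. So the hypothesis $R_{\mathcal{A}}(I)=\mathcal{O}(\frac{1}{\alpha\delta^2}\log|I|)$ only makes sense once strong convexity of $\tilde f_t$ is used to pass the $\alpha$-strongly convex surrogates $\langle \hat g_t,w\rangle+\frac{\alpha}{2}\|w-w_t\|^2$, as you do. The paper itself does exactly this via the $\frac{\sigma}{2}\|w-w_\tau\|^2$ terms in Algorithm~\ref{alg:dueling_ellipsoidal}.

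On the final rate, you are right and the paper's proof is not. At $\delta=T^{-1/3}$ the first term satisfies $\frac{1}{\alpha\delta^2}\log|I|\le\frac{1}{\alpha}T^{2/3}\log T$. The paper instead writes it as $\mathcal{O}(T^{1/3}\log T)$, which is an arithmetic slip ($\delta^{-2}=T^{2/3}$), and only on that basis concludes $\mathcal{O}(T^{2/3})$.

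With the stated tuning, the argument gives $\mathcal{O}(T^{2/3}\log T)$. Moreover, $\min_{\delta}\{\frac{\log T}{\delta^2}+\delta T\}=\Theta(T^{2/3}(\log T)^{1/3})$, attained at your choice $\delta\propto(\log T/T)^{1/3}$. So no tuning extracts a log-free $\mathcal{O}(T^{2/3})$ from this reduction, and the defensible claim is $\tilde{\mathcal{O}}(T^{2/3})$. The same slip affects the static strongly convex corollary, where $\delta=T^{-1/3}$ does not actually balance $\frac{\log T}{\alpha\delta^2}$ against $\delta T$.

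Your further remark about an extra $\log T$ from meta-aggregation lies outside the stated hypothesis and is not needed. The $\log|I|$ already in the hypothesis breaks the clean rate at $|I|=T$.
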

\begin{proof}
Applying the Adaptive Regret Reduction (Corollary~\ref{cor:adaptive}), the interval regret is bounded by $\mathcal{O}(\frac{1}{\alpha \delta^2} \log |I| + \delta |I|)$. Because the algorithm must be oblivious to the specific interval length $|I|$ ahead of time, we substitute the global tuning $\delta = T^{-1/3}$. Because $|I| \le T$ and $\log |I| \le \log T$, the maximum regret over any sub-interval evaluates to $\mathcal{O}(T^{1/3} \log T + T^{2/3})$, which is strictly globally bounded by $\mathcal{O}(T^{2/3})$.
\end{proof}

\begin{corollary}[Dynamic Regret]
Let the base algorithm $\mathcal{A}$ handle dynamic environments for strongly convex functions (e.g., Online Gradient Descent with a fixed step size \cite{mokhtari2016online}), achieving a base regret of $R_{\mathcal{A}}(W^*, T) = \mathcal{O}\left( \frac{1}{\alpha \delta^2} (1 + P_T) \right)$. Running the DTFO wrapper with a robust tuning of $\delta = T^{-1/3}$ guarantees an expected dynamic query regret bounded by $\text{Regret}_D(W^*, T) = \mathcal{O}\left( T^{2/3}(1 + P_T) \right)$.
\end{corollary}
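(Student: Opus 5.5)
The plan is to plug the assumed base guarantee into the Dynamic Regret Reduction (Corollary~\ref{cor:dynamic}), mirroring the static and adaptive strongly convex corollaries. The bound has two terms, a base-regret term that scales like $1/\delta^2$ and a wrapper penalty that scales like $\delta T$. With $\delta = T^{-1/3}$ both terms are $O(T^{2/3})$ up to the factor $(1+P_T)$.

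\textbf{Step 1 (base regret on the projected comparators).} Apply Corollary~\ref{cor:dynamic} to the comparator sequence $W^*$. This gives
$$\text{Regret}_D(W^*,T)\le \mathbb{E}[R_{\mathcal{A}}(W^*_\delta,T)] + \delta T C_0 .$$
The base guarantee is stated in terms of the path length of the comparators actually used, which is $P_T(W^*_\delta)$. The Euclidean projection $\Pi_{\mathcal{K}_\delta}$ onto a convex set is nonexpansive, so $\|\Pi(w_t^*)-\Pi(w_{t+1}^*)\|_2\le \|w_t^*-w_{t+1}^*\|_2$ and hence $P_T(W^*_\delta)\le P_T(W^*)$. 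It follows that $\mathbb{E}[R_{\mathcal{A}}(W^*_\delta,T)] = O\big(\tfrac{1}{\alpha\delta^2}(1+P_T)\big)$. The $1/\delta^2$ dependence comes through $\|\hat g_t\|_2^2 = G_{\text{surr}}^2 = d^2/(4\gamma^2\delta^2)$. Since this bound holds deterministically for every realization of the surrogate sequence, taking the expectation only preserves it.

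\textbf{Step 2 (the strong convexity of the surrogate).} This is the step I expect to be the real obstacle. The losses fed to $\mathcal{A}$ are linear, $\ell_t(w)=\langle\hat g_t,w\rangle$, so they are not strongly convex, and the reduction theorem as stated passes to the surrogate only through a linear inequality. I would handle this the way the static SC corollary implicitly does, in three parts.
\begin{itemize}
\item The smoothed loss $\tilde f_t$ inherits $\alpha$-strong convexity from $f_t$, by averaging the strong convexity inequality over $v$.
\item In the surrogate-regret step of Theorem~\ref{thm:query_regret_transfer_interval}, replace the convexity inequality by
$$\tilde f_t(w_t)-\tilde f_t(w^*_{t,\delta})\le\langle\nabla\tilde f_t(w_t),w_t-w^*_{t,\delta}\rangle-\tfrac{\alpha}{2}\|w_t-w^*_{t,\delta}\|^2 .$$
\item Feed $\mathcal{A}$ the quadratic surrogates $\langle\hat g_t,w\rangle+\tfrac{\alpha}{2}\|w-w_t\|^2$. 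These are $\alpha$-strongly convex, so the assumed base bound applies to them directly.
\end{itemize}
Since the corollary takes the base bound as given, I would state this interpretation explicitly and then simply use it.

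\textbf{Step 3 (tuning).} Substitute $\delta=T^{-1/3}$, chosen without knowledge of $P_T$. The two terms become
$$O\big(\tfrac{d^2}{\alpha\gamma^2}T^{2/3}(1+P_T)\big)\quad\text{and}\quad C_0T^{2/3},$$
and their sum is $O(T^{2/3}(1+P_T))$ because $1+P_T\ge1$. The choice $\delta=T^{-1/3}<1$ is admissible, and $\mathcal{K}_\delta$ is nonempty for large $T$ because $\mathcal{K}$ contains a unit ball. Finally, I would note that tuning $\delta\propto (1+P_T)^{1/3}T^{-1/3}$ would give the sharper $T^{2/3}(1+P_T)^{1/3}$ rate listed in Table~\ref{tab:oco_clean_fixed}, but only with knowledge of $P_T$ or a meta-aggregation over $\delta$. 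The corollary claims only the robust version.
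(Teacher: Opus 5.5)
Your proposal is correct, and its skeleton is the paper's proof. You apply Corollary~\ref{cor:dynamic} to get $\mathcal{O}\big(\frac{1}{\alpha\delta^2}(1+P_T)+\delta T\big)$, set $\delta=T^{-1/3}$, and absorb the $T^{2/3}$ wrapper term using $1+P_T\ge 1$.

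The paper stops at that point. It therefore applies the hypothesized strongly convex base bound directly to the linear surrogates $\ell_t(w)=\langle\hat g_t,w\rangle$ of Theorem~\ref{thm:query_regret_transfer_interval}. It also silently replaces $P_T(W^*_\delta)$ by $P_T(W^*)$. Your additions close both points.

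Your Step~2 is not cosmetic. No algorithm can guarantee $\mathcal{O}(G_{\text{surr}}^2(1+P_T)/\alpha)$ on arbitrary linear losses of norm $G_{\text{surr}}$. Already for $P_T=0$ the worst case is $\Omega(G_{\text{surr}}D\sqrt{T})=\Theta(T^{5/6})$, which exceeds $T^{2/3}$ when $\delta=T^{-1/3}$. So the strong convexity of $f_t$ has to be routed through $\tilde f_t$, exactly as you do. The identity $q_t(w_t)-q_t(u_t)=\langle\hat g_t,w_t-u_t\rangle-\frac{\alpha}{2}\|w_t-u_t\|_2^2$ matches the strong-convexity inequality term for term. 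The bias and approximation terms of Theorem~\ref{thm:query_regret_transfer_interval} are unchanged.

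Your fix does not alter the algorithm for gradient-based base learners, because $\nabla q_t(w_t)=\hat g_t$. The wrapper passes exactly the same vector, and only the losses on which the base guarantee is invoked change.

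Your nonexpansiveness argument likewise justifies $P_T(W^*_\delta)\le P_T(W^*)$, which the paper uses without proof. The paper's version buys brevity; yours applies the base-regret hypothesis to losses on which it can actually hold.
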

\begin{proof}
Applying the Dynamic Regret Reduction (Corollary~\ref{cor:dynamic}), the total regret is bounded by $\mathcal{O}(\frac{1}{\alpha \delta^2} (1 + P_T) + \delta T)$. Substituting the robust tuning parameter $\delta = T^{-1/3}$ yields $\mathcal{O}(T^{2/3} (1 + P_T) + T^{2/3}) = \mathcal{O}(T^{2/3}(1 + P_T))$.
\\
\emph{(Note: If the path length $P_T$ is known a priori, tuning $\delta \propto T^{-1/3}(1+P_T)^{1/3}$ balances the terms exactly to achieve an optimized rate of $\mathcal{O}(T^{2/3}(1+P_T)^{1/3})$.)}
\end{proof}

Because the DTFO wrapper relies on uniform spherical exploration within the shrunk domain $\mathcal{K}_\delta$, it is bounded by the linear boundary projection error $\mathcal{O}(\delta T)$ introduced in Theorem~\ref{thm:query_regret_transfer_interval}. Consequently, even if the objective functions are perfectly smooth, the uniform DTFO wrapper cannot surpass this $\mathcal{O}(T^{2/3})$ floor. To achieve further acceleration under smoothness, we must abandon uniform spherical shrinkage entirely.

\section{Improved Static Regret for Smooth Objectives via Ellipsoidal Estimators}
\label{sec:ellipsoidal}

To fully exploit the $\mathcal{O}(\delta^2)$ approximation error afforded by smooth functions in bandit settings, we must eliminate the linear $\mathcal{O}(\delta T)$ boundary projection penalty inherent to the uniform DTFO wrapper. This requires a smoothing geometry that dynamically flattens as it approaches the boundary of $\mathcal{K}$, ensuring sampled points never breach the domain. 
We achieve this by sampling from the surface of \emph{Dikin ellipsoids}. Along with spherical estimators, ellipsoidal estimators are well known single-point gradient estimation methods in previous bandit convex optimization literature \cite{hazan2014bandit}, and we investigate it under the dueling feedback setting. 

Before introducing the algorithm, we formally establish the necessary structural assumptions. The ellipsoidal gradient estimator requires the objectives to be smooth, and it requires the dueling transfer function to be sufficiently smooth to bound the cubic remainder of its Taylor expansion.

\begin{assumption}[Smoothness]
\label{assum:smoothness}
The objective functions $f_t: \mathcal{K} \to \mathbb{R}$ are $L$-smooth for all $t \in [T]$, i.e., they are differentiable and satisfy $\|\nabla f_t(x) - \nabla f_t(y)\|_2 \le L \|x - y\|_2$ for all $x, y \in \mathcal{K}$.
\end{assumption}

\begin{assumption}[Higher-Order Oracle Smoothness]
\label{assum:higher_order_smoothness}
The transfer function $\rho$ is four-times continuously differentiable with a bounded fourth derivative $|\rho^{(4)}(x)| \le M_4$ for all $x \in [-2G, 2G]$.
\end{assumption}

As established in Observation~\ref{obs:pref-sym}, the transfer function $\rho$ is inherently an odd function, guaranteeing that its second derivative at the origin vanishes ($\rho''(0) = 0$). Combining this innate quadratic cancellation with Assumption~\ref{assum:higher_order_smoothness} allows us to extract the deterministic cubic penalty and bound the residual estimation bias by its fourth-order remainder.

To utilize these properties, we combine the ellipsoidal estimators directly with a $\nu$-self-concordant barrier function $\Phi(x)$ defined over the interior of $\mathcal{K}$ using Follow The Regularized Leader (FTRL). We provide the description of the results in Algorithm~\ref{alg:dueling_ellipsoidal} and Theorem~\ref{thm:universal_ellipsoidal}. Because the regularizer aggregates over the entire time horizon, this approach natively yields \emph{static regret bounds only}.

\begin{algorithm}[h]
\caption{Unified Dueling Ellipsoidal FTRL}
\label{alg:dueling_ellipsoidal}
\begin{algorithmic}[1]
\Require Interior starting point $w_1 \in \text{int}(\mathcal{K})$, step-size $\eta$, perturbation radius $\delta$, $\nu$-self-concordant barrier $\Phi(x)$, strong convexity parameter $\sigma \ge 0$.
\For{$t = 1, \dots, T$}
    \State Compute the sample matrix: $A_t = (\nabla^2 \Phi(w_t) + \eta \sigma t I)^{-1/2}$
    \State Sample direction $u_t \sim \text{Unif}(\mathbb{S}_1)$ uniformly from the unit sphere
    \State Query dueling oracle at perturbed points: $w_t^+ = w_t + \delta A_t u_t$ and $w_t^- = w_t - \delta A_t u_t$
    \State Receive binary preference feedback $o_t \in \{+1, -1\}$
    \State Construct the Dueling Ellipsoidal Estimator: $\hat{g}_t = \frac{d}{2\gamma\delta} o_t A_t^{-1} u_t$
    \State Update the decision via Follow The Regularized Leader:
    $$ w_{t+1} = \arg\min_{w \in \mathcal{K}} \left\{ \sum_{\tau=1}^t \left( \langle \hat{g}_\tau, w \rangle + \frac{\sigma}{2}\|w - w_\tau\|_2^2 \right) + \frac{1}{\eta}\Phi(w) \right\} $$
\EndFor
\end{algorithmic}
\end{algorithm}

\begin{theorem}[Universal Ellipsoidal Regret Reduction]
\label{thm:universal_ellipsoidal}
Let the objective functions $f_t$ be $L$-smooth as defined by Assumption~\ref{assum:smoothness} and let the transfer function $\rho$ satisfy Assumption~\ref{assum:higher_order_smoothness}. By running Algorithm~\ref{alg:dueling_ellipsoidal}, the expected static regret is bounded by:
\begin{equation}
    \mathbb{E}[\text{Regret}_S(T)] \le \mathbb{E}\left[ \text{Regret}_{\text{FTRL}}(T) \right] 
    + \delta^2 L \sum_{t=1}^T\|A_t\|_2^2 + \frac{d}{2\gamma} D \sum_{t=1}^T \Psi_t
\end{equation}
where $\text{Regret}_{\text{FTRL}}(T)$ is the standard FTRL regret evaluated over the unbiased surrogate sequence, and the exact deterministic bias bound $\Psi_t$ is defined as:
\begin{equation}
    \Psi_t = \frac{|\rho'''(0)|}{6} \left( G^2 L + \frac{1}{2} \delta G L^2 \|A_t\|_2 + \frac{1}{12} \delta^2 L^3 \|A_t\|_2^2 \right) + \frac{M_4}{3} G^4
\end{equation}
\end{theorem}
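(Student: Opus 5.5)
The plan is to mirror the decomposition of Theorem~\ref{thm:query_regret_transfer_interval}, but with ellipsoidal smoothing $\tilde f_t(w)=\mathbb{E}_{v\sim\mathrm{Unif}(\mathbb{B})}[f_t(w+\delta A_t v)]$ and with no shrunken domain. Since $\delta A_t u_t$ lies in the Dikin ellipsoid (for $\delta\le 1$, and adding $\eta\sigma t I$ only shrinks $A_t$), every query is feasible. The comparator $w^*$ can therefore be handled by the usual barrier argument inside $\mathrm{Regret}_{\text{FTRL}}(T)$, with no linear projection penalty.

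I would write the regret as the sum of three pieces:
\begin{itemize}
\item the query perturbation term $\frac{f_t(w_t^+)+f_t(w_t^-)}{2}-f_t(w_t)$;
\item the smoothing gaps $f_t(w_t)-\tilde f_t(w_t)$ and $\tilde f_t(w^*)-f_t(w^*)$;
\item the surrogate regret $\tilde f_t(w_t)-\tilde f_t(w^*)$.
\end{itemize}
By $L$-smoothness, the symmetric perturbation cancels the first-order terms. This gives $\frac{f(w+h)+f(w-h)}{2}-f(w)\le \frac{L}{2}\|h\|^2\le \frac{L}{2}\delta^2\|A_t\|_2^2$. The same Taylor argument bounds $|\tilde f_t-f_t|\le \frac{L}{2}\delta^2\|A_t\|_2^2$ at $w_t$, using that $\mathbb{E}v=0$ kills the linear term.

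The gap at the comparator is the subtle part. Naively it uses $A$ evaluated at $w^*$, so I would instead define the surrogate $\tilde f_t$ with the fixed matrix $A_t$ at both points. Then the comparator gap is also bounded by $\frac{L}{2}\delta^2\|A_t\|^2$, or by $0$ via Jensen if one uses $\tilde f_t\ge f_t$ for convex $f_t$. These pieces combine into $\delta^2L\sum_t\|A_t\|_2^2$. Finally, convexity of $\tilde f_t$ (Lemma~\ref{lem:smoothed_convexity}) bounds the surrogate regret by $\langle\nabla\tilde f_t(w_t),w_t-w^*\rangle$.

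The key step is the ellipsoidal analogue of Lemma~\ref{lem:gradient_estimator}. Let $\Delta_t=f_t(w_t+\delta A_tu_t)-f_t(w_t-\delta A_tu_t)$, and Taylor expand $\rho$ to third order with a fourth-order remainder. Oddness gives $\rho(0)=\rho''(0)=0$, so
\[
\rho(\Delta)=-\gamma\Delta+\tfrac{\rho'''(0)}{6}\Delta^3+\epsilon_4(\Delta),\qquad |\epsilon_4|\le \tfrac{M_4}{24}\Delta^4 .
\]
The linear term together with the identity $\frac{d}{2\delta}\mathbb{E}[\Delta_t A_t^{-1}u_t]=\nabla\tilde f_t(w_t)$ recovers the unbiased part exactly. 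This is the standard ellipsoidal smoothing identity of \cite{hazan2014bandit}, obtained by the change of variables $v\mapsto A_t v$.

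The cubic and quartic terms form the bias $B_t=\frac{d}{2\gamma\delta}\mathbb{E}[(\frac{\rho'''(0)}{6}\Delta_t^3+\epsilon_4)A_t^{-1}u_t]$. I would not bound $\|B_t\|$, because $\|A_t^{-1}\|$ blows up. Instead I would bound $\langle B_t,w_t-w^*\rangle$. For the cubic piece, expand $\Delta_t=2\delta\langle\nabla f_t(w_t),A_tu_t\rangle+r_t$ with $|r_t|\le$ an $L\delta^2\|A_t\|^2$-type term. The leading contribution $\mathbb{E}[\langle a,u\rangle^3\,\langle A_t^{-1}u,w_t-w^*\rangle]$ is evaluated with the 4th-moment spherical identity (Lemma~\ref{lem:4th_moment_spherical}). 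It turns into inner products in which $A_t^{-1}$ meets $A_t\nabla f_t$, so the inverse cancels, and $|\langle\nabla f,w_t-w^*\rangle|\le GD$ while curvature contributes the $L$ factor. The mixed terms in $r_t$ produce the $\frac12\delta GL^2\|A_t\|$ and $\frac1{12}\delta^2L^3\|A_t\|^2$ contributions. The quartic remainder uses $|\Delta_t|\le 2G\cdot(\cdot)$ and yields the $\frac{M_4}{3}G^4$ term. Collecting these gives $\frac{d}{2\gamma}D\Psi_t$ per round, and summing completes the bound.

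The main obstacle is this isotropic cancellation. The cubic bias must be shown to pair with $w_t-w^*$ without a residual $A_t^{-1}$ factor. I would first try to compute the spherical fourth moments explicitly, $\mathbb{E}[u_iu_ju_ku_l]=\frac{\delta_{ij}\delta_{kl}+\delta_{ik}\delta_{jl}+\delta_{il}\delta_{jk}}{d(d+2)}$. If the quartic remainder resists the same cancellation, I would fall back on a crude bound and accept the stated constants.
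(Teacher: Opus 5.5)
Your decomposition (query perturbation, smoothing gaps for the fixed-$A_t$ ellipsoidal surrogate, convexity), the odd Taylor expansion of $\rho$, and the use of Lemma~\ref{lem:4th_moment_spherical} on the leading cubic term match the paper's proof step for step. The gap is your claim that \emph{every} piece of the bias pairs with $w_t-w^*$ without a leftover $A_t^{-1}$. The moment identity only evaluates the exact polynomial $a_t^3\langle A_t^{-1}u_t,w_t-w^*\rangle$, where $a_t=\langle\nabla f_t(w_t),A_tu_t\rangle$. The cross terms $3(2\delta a_t)^2r_t$, $3(2\delta a_t)r_t^2$, $r_t^3$ and the remainder $\epsilon_4(\Delta_t)$ are not polynomials in $u_t$, and you know them only through absolute bounds. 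The only estimate your tools then give is $\sup|\cdot|$ times $\mathbb{E}|\langle u_t,A_t^{-1}(w_t-w^*)\rangle|$, which carries a factor $\|A_t^{-1}(w_t-w^*)\|_2$. For $\sigma=0$ this is the local norm $\|w_t-w^*\|_{\Phi,w_t}$, which is unbounded as $w_t$ approaches $\partial\mathcal{K}$. So your fallback ``crude bound'' does not give the stated constants; it reintroduces $\|A_t^{-1}\|_2$. That is exactly what the paper does: it bounds $-\langle\tilde B_t,w_t-w^*\rangle\le\Psi_t\|A_t^{-1}\|_2D$, where the $\Psi_t$ of Lemma~\ref{lem:ellipsoidal_estimator} carries an extra factor $\frac{d\delta^3}{\gamma}\|A_t\|_2^4$. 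Its final inequality therefore contains $D\sum_t\Psi_t\|A_t^{-1}\|_2$ plus a separate principal-bias term, not the displayed $\frac{d}{2\gamma}D\sum_t\Psi_t$.

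Your bookkeeping of the terms is also off. After the cancellation, the leading cubic term equals $\frac{2\delta^2\rho'''(0)}{(d+2)\gamma}\|A_t\nabla f_t(w_t)\|_2^2\langle\nabla f_t(w_t),w_t-w^*\rangle\le\frac{2|\rho'''(0)|G^3}{(d+2)\gamma}D\delta^2\|A_t\|_2^2$. No $L$ appears, and this term is not part of $\Psi_t$. All three parenthesized terms of $\Psi_t$, including $G^2L$, come from the $r_t$ cross terms, which is precisely where no cancellation is available.

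If you want an $A_t^{-1}$-free bound, the tool that delivers it is the ellipsoidal Stokes identity applied to the entire nonlinear part, not a moment identity. Put $\Delta(z)=f_t(w_t+z)-f_t(w_t-z)$ and $\psi(z)=\rho(\Delta(z))-\rho'(0)\Delta(z)$. Then $\frac{d}{\delta}\mathbb{E}_{u}[\psi(\delta A_tu)A_t^{-1}u]=\mathbb{E}_{v\sim\mathrm{Unif}(\mathbb{B})}[\nabla\psi(\delta A_tv)]$. Since $\rho''(0)=0$, you get $\|\nabla\psi(z)\|_2\le4G^3\|z\|_2^2\sup_{|x|\le2G\delta\|A_t\|_2}|\rho'''(x)|$. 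This bounds the Euclidean norm of the whole bias by $O(G^3\delta^2\|A_t\|_2^2/\gamma)$, though with constants different from the displayed $\Psi_t$.

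A smaller slip: Jensen gives $f_t(w_t)-\tilde f_t(w_t)\le0$ at the iterate, not at the comparator, where in fact $\tilde f_t(w^*)-f_t(w^*)\ge0$. You need Jensen at the iterate, together with $\frac{L}{2}\delta^2\|A_t\|_2^2$ at $w^*$, to get the constant $L$ rather than $\frac32L$.
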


By unifying the dueling feedback mechanics into Theorem~\ref{thm:universal_ellipsoidal}, we can recover best-known static regret bounds for specific environments simply by plugging in the corresponding FTRL bounds for $A_t$ and tuning $\eta$ and $\delta$, obtaining Corollary~\ref{cor:ellipsoid_smooth} and \ref{cor:ellipsoid_strong_smooth}.

\begin{corollary}[Smooth Convex Objectives]
\label{cor:ellipsoid_smooth}
Setting the strong convexity parameter $\sigma = 0$ reduces the sampling matrix to $A_t = (\nabla^2 \Phi(w_t))^{-1/2}$, recovering the geometry of the Bandit OCO algorithm from \cite{saha2011improved}. Tuning $\eta \propto T^{-2/3}$ and $\delta \propto T^{-1/6}$ alongside a shrinkage parameter $\alpha \propto T^{-1/2}$ yields an expected static regret of $\text{Regret}_S(T) = \mathcal{O}(T^{2/3})$.
\end{corollary}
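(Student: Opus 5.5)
We instantiate Theorem~\ref{thm:universal_ellipsoidal} with $\sigma=0$ and then bound each of its three terms separately. Following \cite{saha2011improved}, we compare against the shrunken comparator $w^*_\alpha = (1-\alpha)w^* + \alpha w_1$ rather than $w^*$ itself. By $G$-Lipschitzness this costs at most $\alpha G D T$ additional regret, which is $\mathcal{O}(T^{1/2})$ for $\alpha\propto T^{-1/2}$. The payoff is that $\Phi(w^*_\alpha)-\Phi(w_1)\le \nu\log(1/\alpha)=\mathcal{O}(\nu\log T)$, by the standard Minkowski-function bound for $\nu$-self-concordant barriers.

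The first step is the FTRL term. With $\sigma=0$, the update is FTRL with regularizer $\frac1\eta\Phi$ on the linear losses $\langle\hat g_t,\cdot\rangle$. The standard local-norm bound for self-concordant regularizers gives
\[
\mathrm{Regret}_{\mathrm{FTRL}}(T)\le \frac{\nu\log(1/\alpha)}{\eta}+ 2\eta\sum_t\|\hat g_t\|_{w_t,*}^2 .
\]
This bound requires $\eta\|\hat g_t\|_{w_t,*}\le 1/2$ so that consecutive iterates stay in the Dikin ellipsoid. The dual norm is
\[
\|\hat g_t\|_{w_t,*}^2=\hat g_t^\top A_t^2\hat g_t=\Big(\tfrac{d}{2\gamma\delta}\Big)^2\|A_tA_t^{-1}u_t\|^2=\tfrac{d^2}{4\gamma^2\delta^2},
\]
which is deterministic. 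The condition $\eta d/(2\gamma\delta)\le 1/2$ therefore holds for our tuning, since $\eta/\delta\propto T^{-1/2}\to0$. We obtain $\mathrm{Regret}_{\mathrm{FTRL}}=\mathcal{O}(\nu\log T/\eta+\eta d^2T/(\gamma^2\delta^2))$.

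The second step is the smoothing term. Because $w_t$ lies in the interior and the unit Dikin ellipsoid is contained in $\mathcal{K}$, every query $w_t\pm\delta A_tu_t$ with $\delta\le1$ is feasible. The term $\delta^2L\sum_t\|A_t\|_2^2$ requires a uniform bound on $\|A_t\|_2=\lambda_{\min}(\nabla^2\Phi(w_t))^{-1/2}$. Since the Dikin ellipsoid $\{x:\|x-w_t\|_{w_t}\le1\}\subseteq\mathcal{K}$ and $\mathcal{K}$ has diameter $D$, every semi-axis is at most $D$, so $\|A_t\|_2\le D$. The term is thus at most $\delta^2LD^2T$.

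The third step is the bias term $\frac{d}{2\gamma}D\sum_t\Psi_t$, which is the main obstacle. As written, $\Psi_t$ contains constants such as $\frac{|\rho'''(0)|}{6}G^2L+\frac{M_4}{3}G^4$ that do not vanish with $\delta$, and summing them would give linear regret. I would therefore re-derive the scaling in $\delta$ from the proof of Theorem~\ref{thm:universal_ellipsoidal}. The cubic term of $\rho$ at $\Delta_t$ contributes a bias of order $\frac{d}{\gamma\delta}\,\mathbb{E}[\Delta_t^3u_t]$, and $|\Delta_t|\le 2G\delta\|A_t\|$. This gives bias per round of order $\delta^2$ times the listed constants, and the quartic remainder contributes order $\delta^3$. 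I expect $\Psi_t$ to carry an implicit $\delta^2$ factor, which the 4th-moment isotropic cancellation should make comparator-aligned. The bias contribution is then $\mathcal{O}(d\delta^2 T/\gamma)$.

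Collecting the three steps gives
\[
\mathcal{O}\!\left(\frac{\nu\log T}{\eta}+\frac{\eta d^2T}{\gamma^2\delta^2}+\delta^2T+\alpha T\right).
\]
With $\eta\propto T^{-2/3}$ and $\delta\propto T^{-1/6}$, the first three terms are $T^{2/3}\log T$, $T^{2/3}$ and $T^{2/3}$, and the last is $T^{1/2}$. This yields $\mathcal{O}(T^{2/3})$, up to the $\log T$ factor from the barrier, which is consistent with the $\tilde{\mathcal{O}}$ entry in Table~\ref{tab:oco_clean_fixed}.
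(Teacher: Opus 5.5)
Your Steps 1 and 2 coincide with the paper's proof: the shrunken comparator $w^*_\alpha$, the barrier FTRL bound with $\|\hat g_t\|_{t,*}^2=\frac{d^2}{4\gamma^2\delta^2}$ and $\nu\log(1/\alpha)/\eta$, and $\|A_t\|_2\le D$ for the smoothing term. You also check the step-size condition, which the paper skips, and you are right that the barrier term leaves a $\log T$. You are right, too, that the $\Psi_t$ displayed in Theorem~\ref{thm:universal_ellipsoidal} cannot be summed as written.

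The gap is in how you repair it. The bias of the ellipsoidal estimator is $\frac{d}{2\gamma\delta}\mathbb{E}\big[(\rho(\Delta_t)-\rho'(0)\Delta_t)\,A_t^{-1}u_t\big]$, not a multiple of $\mathbb{E}[\Delta_t^3u_t]$. You dropped the factor $A_t^{-1}$, and that factor is the whole difficulty. Lemma~\ref{lem:4th_moment_spherical} cancels $A_t$ against $A_t^{-1}$ only for the exact cubic $\langle\nabla f_t(w_t),A_tu_t\rangle^3$, i.e.\ for the principal vector $P_t$. The residuals are $R_t$ (from the smoothness error inside $\Delta_t^3$) and the Taylor remainder $\epsilon_4(\Delta_t)$. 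These are not polynomials in $u_t$, and a direct norm bound controls their pairing with $w_t-w^*$ only through $\|A_t^{-1}(w_t-w^*)\|_2=\|w_t-w^*\|_{w_t}$, which is unbounded as $w_t\to\partial\mathcal{K}$. That is why Lemma~\ref{lem:ellipsoidal_estimator} only gives $\|\tilde B_t\|_2\le\Psi_t\|A_t^{-1}\|_2$ with $\Psi_t=O(\frac{d}{\gamma}\delta^3\|A_t\|_2^4)$. It is also why the paper is left with $\frac{dD^5\delta^3}{\gamma}[\cdots]\sum_t\|A_t^{-1}\|_2$. You cannot import the paper's handling of that sum either. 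Restricting the iterates to $\mathcal{K}_\alpha$ only gives $\|A_t^{-1}\|_2=O(\nu/\alpha)$, attained in order by the log barrier. The term is then of order $\delta^3T/\alpha$, which is of order $T$ for $\delta\propto T^{-1/6}$, $\alpha\propto T^{-1/2}$.

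Your claimed $O(\delta^2)$ per-round bias is nevertheless true, but it needs a different argument: apply the isotropic cancellation to the whole nonlinearity, not just its cubic part. Since $\rho(\Delta_t)$ depends on $u_t$ only through $A_tu_t$, the divergence theorem on the unit ball gives, for any $C^1$ function $\psi$ and any vector $v$,
\[
\mathbb{E}_{u\sim\mathrm{Unif}(\mathbb{S}^{d-1})}\big[\psi(A_tu)\,\langle A_t^{-1}u,v\rangle\big]=\frac{1}{d}\,\mathbb{E}_{z\sim\mathrm{Unif}(\mathbb{B})}\big[\langle\nabla\psi(A_tz),v\rangle\big].
\]
Here $A_t$ and $A_t^{-1}$ cancel exactly; the paper's use of Lemma~\ref{lem:4th_moment_spherical} is the special case $\psi(y)=\langle\nabla f_t(w_t),y\rangle^3$. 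Take $\psi(y)=r\big(f_t(w_t+\delta y)-f_t(w_t-\delta y)\big)$ with $r(\Delta)=\rho(\Delta)-\rho'(0)\Delta$, and $v=w_t-w^*_\alpha$. Because $\rho''(0)=0$, you have $|r'(\Delta)|\le\frac12 M_3\Delta^2$ with $M_3=\sup|\rho'''|$ over the range of $\Delta_t$. Hence $\|\nabla\psi(A_tz)\|_2\le 4M_3G^3\delta^3\|A_t\|_2^2$. The bias inner product is then at most $\frac{2M_3G^3D}{\gamma}\delta^2\|A_t\|_2^2\le\frac{2M_3G^3D^3}{\gamma}\delta^2$ per round, with no $\|A_t^{-1}\|_2$ and no fourth derivative needed. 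With this in place of your Step 3, your final tally goes through and gives $O(T^{2/3}\log T)$.
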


\begin{corollary}[Strongly Convex \& Smooth Objectives]
\label{cor:ellipsoid_strong_smooth}
Setting $\sigma > 0$ yields shrinking sampling matrices, recovering the geometry of FTARL-$\sigma$ \cite{hazan2014bandit}. Tuning $\eta \propto 1/\sqrt{T}$ and $\delta = 1$ yields an expected static regret of $\text{Regret}_S(T) = \mathcal{O}(\sqrt{T} \log T)$.
\end{corollary}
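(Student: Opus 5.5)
The statement is Corollary~\ref{cor:ellipsoid_strong_smooth}. I will instantiate Theorem~\ref{thm:universal_ellipsoidal} with $\sigma>0$ and bound its three terms separately.

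\textbf{Step 1: geometry of the sampling matrices.} Since $\nabla^2\Phi(w_t)\succeq 0$, we have $A_t^{-2}=\nabla^2\Phi(w_t)+\eta\sigma t I\succeq \eta\sigma t I$. Hence
\[
\|A_t\|_2^2\le \frac{1}{\eta\sigma t}.
\]
Consequently $\sum_{t}\|A_t\|_2^2\le \frac{1+\log T}{\eta\sigma}$, and the smoothing term is at most $\delta^2 L(1+\log T)/(\eta\sigma)$. With $\eta\propto 1/\sqrt T$ and $\delta=1$ this is $\mathcal{O}(\sqrt T\log T)$.

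The queries stay feasible for another reason. Because $A_t\preceq(\nabla^2\Phi(w_t))^{-1/2}$, the point $w_t\pm\delta A_tu_t$ lies in the Dikin ellipsoid of radius $\delta\le 1$. That ellipsoid is contained in $\mathcal{K}$, so $\delta=1$ is admissible.

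\textbf{Step 2: FTRL term.} I will follow the FTARL-$\sigma$ analysis of \cite{hazan2014bandit}. The regularizer $\frac1\eta\Phi+\sum_{\tau\le t}\frac\sigma2\|\cdot-w_\tau\|^2$ has Hessian $\frac1\eta A_{t+1}^{-2}$, up to the shift from $t$ to $t+1$, which changes only constants. The standard local-norm bound then gives
\[
\text{Regret}_{\text{FTRL}}\lesssim \frac{\Phi(w^*_\epsilon)}{\eta}+\eta\sum_t\|\hat g_t\|_{*,t}^2 .
\]
Here $\|\hat g_t\|_{*,t}^2=\hat g_t^\top A_t^2\hat g_t=\frac{d^2}{4\gamma^2\delta^2}$ exactly, since $A_t^2$ cancels $A_t^{-1}$ on both sides. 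Take $w^*_\epsilon$ to be the comparator shrunk toward $w_1$ by $1/T$. Then $\Phi(w^*_\epsilon)\le\nu\log T$, and the shrinkage costs $\mathcal{O}(GD)$.

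This gives $\mathcal{O}(\nu\log T/\eta+\eta T d^2/\gamma^2)=\mathcal{O}(\sqrt T\log T)$ for $\eta\propto 1/\sqrt T$. The strong-convexity regularizers are justified because $f_t$ is $\sigma$-strongly convex and the smoothed loss inherits this (Lemma~\ref{lem:smoothed_convexity}). That gives the quadratic lower bound the FTRL comparison requires. I expect this to be routine given the cited analysis.

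\textbf{Step 3: bias term (the main obstacle).} As written, $\Psi_t$ contains the constant pieces $\frac{|\rho'''(0)|}{6}G^2L$ and $\frac{M_4}{3}G^4$, and summing them gives $\Theta(T)$. This would destroy the rate, so the plan must show these contributions are actually small. I would revisit how Theorem~\ref{thm:universal_ellipsoidal} produces $\Psi_t$.

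The cubic Taylor term is $\frac{\rho'''(0)}{6}\Delta_t^3$, with $\Delta_t=f(w+\delta A_tu)-f(w-\delta A_tu)=2\delta\langle\nabla f,A_tu\rangle+O(\delta^3 L^2\|A_t\|^3)$. After the prefactor $\frac{d}{2\gamma\delta}A_t^{-1}$ it contributes $\mathbb{E}[\langle\nabla f,A_tu\rangle^3A_t^{-1}u]\,\delta^2$. Its inner product with $w_t-w^*$ is exactly where the 4th-moment spherical identity (Lemma~\ref{lem:4th_moment_spherical}) is used, and it annihilates $A_t^{-1}$. What remains is $\mathcal{O}(\delta^2\|A_t\|_2^2 G^3D)$. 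The fourth-order remainder similarly scales as $\delta^3\|A_t\|^4/\delta$.

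So the honest per-round bias is $\mathcal{O}(\delta^2\|A_t\|_2^2)$, and the plan is to carry these $\|A_t\|$ factors explicitly rather than use the crude $\Psi_t$. Summing with Step 1 gives $\mathcal{O}(\log T/(\eta\sigma))=\mathcal{O}(\sqrt T\log T)$. If the $\|A_t\|$ factors cannot be recovered, the fallback is to treat the constant part of $\Psi_t$ as absorbed by the $\rho'''(0)$ cancellation, since it vanishes for linear links. Combining the three terms gives $\mathcal{O}(\sqrt T\log T)$.
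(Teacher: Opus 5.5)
Your Steps 1 and 2 and your treatment of the principal cubic term follow the paper's proof: the same bound $\|A_t\|_2^2\le 1/(\eta\sigma t)$, the same local dual norm $d^2/(4\gamma^2\delta^2)$, and the same use of Lemma~\ref{lem:4th_moment_spherical}. You are also right that the constant $\Psi_t$ in the statement of Theorem~\ref{thm:universal_ellipsoidal} would sum to $\Theta(T)$.

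\textbf{The gap.} It is in the sentence ``the fourth-order remainder similarly scales as $\delta^3\|A_t\|^4/\delta$.'' The remainder part of $\mathbb{E}[\hat g_t\mid w_t]$ is $\tilde B_t=\frac{d}{2\gamma\delta}\mathbb{E}_{u_t}\big[\big(\frac{\rho'''(0)}{6}R_t+\epsilon_4(\Delta_t)\big)A_t^{-1}u_t\big]$, and it still carries $A_t^{-1}$.

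Lemma~\ref{lem:4th_moment_spherical} cancels $A_t^{-1}$ only against the exact polynomial $\langle\nabla f_t(w_t),A_tu_t\rangle^3$. Neither $\epsilon_4(\Delta_t)$ nor $R_t$ is a polynomial in $u_t$. Taylor bounds therefore give only
\[
|\langle\tilde B_t,w_t-w^*\rangle|\lesssim\frac{d}{\gamma}\,\delta^3\|A_t\|_2^4\,\|A_t^{-1}(w_t-w^*)\|_2 .
\]
The last factor is $\|w_t-w^*\|$ measured in the local norm of $\nabla^2\Phi(w_t)+\eta\sigma tI$. It blows up as $w_t$ approaches $\partial\mathcal{K}$: for the log barrier on $[0,1]$ with $w_t=\epsilon$ and $w^*$ near $1$, it is of order $1/\epsilon$. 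Nothing in the argument keeps $w_t$ a fixed distance from the boundary.

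So you have dropped exactly the condition-number factor the section is trying to avoid. This is the term $D\sum_t\Psi_t\|A_t^{-1}\|_2$ that the paper's own proof writes into its final display but never bounds, so citing the paper does not close it either.

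Two further problems. Under $L$-smoothness alone, $\Delta_t-2\delta\langle\nabla f_t(w_t),A_tu_t\rangle=O(L\delta^2\|A_t\|_2^2)$, not cubic in $\delta$. Hence the $R_t$ piece is of the same order as $\epsilon_4$ and has the same defect. Your fallback also fails: the $\frac{M_4}{3}G^4$ part does not involve $\rho'''(0)$, and ``vanishes for linear links'' covers only the case with no bias at all.

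\textbf{The missing idea.} Remove $A_t^{-1}$ before Taylor-expanding. By the divergence theorem, $\mathbb{E}_{u\sim\text{Unif}(\mathbb{S}^{d-1})}[h(u)u]=\frac1d\mathbb{E}_{x\sim\text{Unif}(\mathbb{B})}[\nabla h(x)]$. Apply this to $h(x)=\rho(\Delta_t(x))$ with $\Delta_t(x)=f_t(w_t+\delta A_tx)-f_t(w_t-\delta A_tx)$, which is well defined for $\delta\le 1$ by your Dikin-ellipsoid remark. The chain rule produces a factor $\delta A_t$ that cancels $A_t^{-1}$ exactly:
\[
\mathbb{E}[\hat g_t\mid w_t]=\frac{1}{2\gamma}\,\mathbb{E}_{x}\Big[\rho'(\Delta_t(x))\big(\nabla f_t(w_t+\delta A_tx)+\nabla f_t(w_t-\delta A_tx)\big)\Big].
\]
With the convention $\rho'(0)=\gamma$ of Lemma~\ref{lem:ellipsoidal_estimator}, replacing $\rho'(\Delta_t(x))$ by $\gamma$ gives $\nabla\hat f_t(w_t)$, by the symmetry $x\mapsto -x$ of the ball.

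Since $\rho''(0)=0$, we have $|\rho'(s)-\gamma|\le\frac{|\rho'''(0)|}{2}s^2+\frac{M_4}{6}|s|^3$, with $|s|\le 2\delta G\|A_t\|_2$. The entire bias, principal part included, therefore has Euclidean norm at most $\frac{G^3}{\gamma}\big(2|\rho'''(0)|\delta^2\|A_t\|_2^2+\frac43M_4G\delta^3\|A_t\|_2^3\big)$, with no $A_t^{-1}$.

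Using $\|A_t\|_2^2\le\min\{D^2,1/(\eta\sigma t)\}$, so that $\sum_t\|A_t\|_2^3\le D\sum_t\|A_t\|_2^2$, this bias contributes $O(\delta^2\log T/(\eta\sigma))$. At $\delta=1$ and $\eta\propto 1/\sqrt T$ that is $O(\sqrt T\log T)$. This justifies your ``honest per-round bias'' claim and makes Lemma~\ref{lem:4th_moment_spherical} unnecessary.
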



The full proofs of this section can be found in Appendix~\ref{sec:proofs_ellipsoidal}.

\section{Conclusion}

We studied adversarial online convex optimization with only binary dueling feedback. We proposed the Duel-To-First-Order (DTFO) wrapper, a reduction that converts comparison feedback into approximate gradient estimates via smoothing and a Taylor-based analysis of the transfer function, enabling the use of standard first-order OCO algorithms. We established a general regret transfer result and, by instantiating DTFO with classical methods, obtained $\mathcal{O}(T^{3/4})$ static, adaptive, and dynamic regret for general convex objectives. These guarantees match the best-known rates for gradient-estimation-based one-point bandit OCO, despite relying on strictly weaker 1-bit feedback, and hold without strong structural assumptions such as smoothness or strong convexity. Extending our framework to $K>2$ ranking feedback and more general preference models is left for future work.

\newpage

\bibliography{main}

@article{sharma2026lipschitz,
  title={Lipschitz Dueling Bandits over Continuous Action Spaces},
  author={Sharma, Mudit and Jain, Shweta and Aggarwal, Vaneet and Ghalme, Ganesh},
  journal={arXiv preprint arXiv:2604.00523},
  year={2026}
}

@inproceedings{gaursample,
  title={On the Sample Complexity Bounds of Bilevel Reinforcement Learning},
  author={Gaur, Mudit and Singh, Utsav and Bedi, Amrit Singh and Pasupathy, Raghu and Aggarwal, Vaneet},
  booktitle={The Thirty-ninth Annual Conference on Neural Information Processing Systems},
  year={2025}
}

@article{hazan2016introduction,
  title={Introduction to online convex optimization},
  author={Hazan, Elad},
  journal={Foundations and Trends in Optimization},
  volume={2},
  number={3-4},
  pages={157--325},
  year={2016},
  publisher={Emerald Publishing Limited}
}

@article{kumagai2017regret,
  title={Regret analysis for continuous dueling bandit},
  author={Kumagai, Wataru},
  journal={Advances in Neural Information Processing Systems},
  volume={30},
  year={2017}
}

@inproceedings{saha2025efficient,
  title={Efficient and Near-Optimal Algorithm for Contextual Dueling Bandits with Offline Regression Oracles},
  author={Saha, Aadirupa and Schapire, Robert E},
  booktitle={The Thirty-ninth Annual Conference on Neural Information Processing Systems},
  year = {2025}
}

@inproceedings{saha2025dueling,
  title = {Dueling Convex Optimization with General Preferences},
  author = {Saha, Aadirupa and Koren, Tomer and Mansour, Yishay},
  booktitle = {Proceedings of the 42nd International Conference on Machine Learning},
  pages = {52552--52564},
  year = {2025},
  volume = {267},
  series = {Proceedings of Machine Learning Research},
}

@inproceedings{saha2022efficient,
  title={Efficient and optimal algorithms for contextual dueling bandits under realizability},
  author={Saha, Aadirupa and Krishnamurthy, Akshay},
  booktitle={International Conference on Algorithmic Learning Theory},
  pages={968--994},
  year={2022},
}

@article{bengs2021preference,
  title={Preference-based online learning with dueling bandits: A survey},
  author={Bengs, Viktor and Busa-Fekete, R{\'o}bert and El Mesaoudi-Paul, Adil and H{\"u}llermeier, Eyke},
  journal={Journal of Machine Learning Research},
  volume={22},
  number={7},
  pages={1--108},
  year={2021}
}

@article{yue2012dueling,
  title={The K-armed dueling bandits problem},
  author={Yue, Yisong and Broder, Josef and Kleinberg, Robert and Joachims, Thorsten},
  journal={Journal of Computer and System Sciences},
  volume={78},
  number={5},
  pages={1538--1556},
  year={2012},
  publisher={Elsevier}
}

@inproceedings{ailon2014reducing,
  title={Reducing dueling bandits to cardinal bandits},
  author={Ailon, Nir and Karnin, Zohar and Joachims, Thorsten},
  booktitle={International Conference on Machine Learning},
  pages={856--864},
  year={2014},
  organization={PMLR}
}

@inproceedings{dudik2015contextual,
  title={Contextual dueling bandits},
  author={Dud{\'\i}k, Miroslav and Hofmann, Katja and Schapire, Robert E and Slivkins, Aleksandrs and Zoghi, Masrour},
  booktitle={Conference on Learning Theory},
  pages={563--587},
  year={2015},
  organization={PMLR}
}

@inproceedings{saha2021adversarial,
  title={Adversarial dueling bandits},
  author={Saha, Aadirupa and Koren, Tomer and Mansour, Yishay},
  booktitle={International Conference on Machine Learning},
  pages={9235--9244},
  year={2021},
  organization={PMLR}
}

@inproceedings{flaxman2005online,
  title={Online convex optimization in the bandit setting: gradient descent without a gradient},
  author={Flaxman, Abraham D and Kalai, Adam Tauman and McMahan, H Brendan},
  booktitle={Proceedings of the sixteenth annual ACM-SIAM symposium on Discrete algorithms},
  pages={385--394},
  year={2005},
  organization={Society for Industrial and Applied Mathematics}
}

@inproceedings{agarwal2010optimal,
  title={Optimal algorithms for online convex optimization with multi-point bandit feedback},
  author={Agarwal, Alekh and Dekel, Ofer and Xiao, Lin},
  booktitle={Conference on Learning Theory (COLT)},
  pages={28--40},
  year={2010}
}

@article{duchi2015optimal,
  title={Optimal rates for zero-order convex optimization: The power of two function evaluations},
  author={Duchi, John C and Jordan, Michael I and Wainwright, Martin J and Wibisono, Andre},
  journal={IEEE Transactions on Information Theory},
  volume={61},
  number={5},
  pages={2788--2806},
  year={2015},
  publisher={IEEE}
}

@article{shamir2017optimal,
  title={An optimal algorithm for bandit and zero-order convex optimization with two-point feedback},
  author={Shamir, Ohad},
  journal={Journal of Machine Learning Research},
  volume={18},
  number={52},
  pages={1--11},
  year={2017}
}

@inproceedings{fokkema2024online,
  title={Online Newton method for bandit convex optimisation extended abstract},
  author={Fokkema, Hidde and Van der Hoeven, Dirk and Lattimore, Tor and Mayo, Jack J},
  booktitle={The Thirty Seventh Annual Conference on Learning Theory},
  pages={1713--1714},
  year={2024},
  organization={PMLR}
}

@inproceedings{saha2011improved,
  title={Improved regret guarantees for online smooth convex optimization with bandit feedback},
  author={Saha, Ankan and Tewari, Ambuj},
  booktitle={Proceedings of the fourteenth international conference on artificial intelligence and statistics},
  pages={636--642},
  year={2011},
  organization={JMLR Workshop and Conference Proceedings}
}

@article{hazan2014bandit,
  title={Bandit convex optimization: Towards tight bounds},
  author={Hazan, Elad and Levy, Kfir},
  journal={Advances in Neural Information Processing Systems},
  volume={27},
  year={2014}
}

@article{zhao2021bandit,
  title={Bandit convex optimization in non-stationary environments},
  author={Zhao, Peng and Wang, Guanghui and Zhang, Lijun and Zhou, Zhi-Hua},
  journal={Journal of Machine Learning Research},
  volume={22},
  number={125},
  pages={1--45},
  year={2021}
}

@inproceedings{hazan2009efficient,
  title={Efficient learning algorithms for changing environments},
  author={Hazan, Elad and Seshadhri, Comandur},
  booktitle={Proceedings of the 26th annual international conference on machine learning},
  pages={393--400},
  year={2009}
}

@inproceedings{garber2022projection,
  title={New projection-free algorithms for online convex optimization with adaptive regret guarantees},
  author={Garber, Dan and Kretzu, Ben},
  booktitle={Conference on Learning Theory},
  pages={2326--2359},
  year={2022},
  organization={PMLR}
}

@article{zhang2018adaptive,
  title={Adaptive online learning in dynamic environments},
  author={Zhang, Lijun and Lu, Shiyin and Zhou, Zhi-Hua},
  journal={Advances in neural information processing systems},
  volume={31},
  year={2018}
}

@inproceedings{ouyang2022training,
  title={Training language models to follow instructions with human feedback},
  author={Ouyang, Long and Wu, Jeffrey and Jiang, Xu and Almeida, Diogo and Wainwright, Carroll and Mishkin, Pamela and Zhang, Chong and Agarwal, Sandhini and Slama, Katarina and Ray, Alex and others},
  booktitle={Advances in Neural Information Processing Systems},
  year={2022}
}

@inproceedings{rafailov2023direct,
  title={Direct preference optimization: Your language model is secretly a reward model},
  author={Rafailov, Rafael and Sharma, Archit and Mitchell, Eric and Manning, Christopher D and Ermon, Stefano and Finn, Chelsea},
  booktitle={Advances in Neural Information Processing Systems},
  year={2023}
}

@inproceedings{deng2025less,
  title = {Less is More: Improving {LLM} Alignment via Preference Data Selection},
  author = {Deng, Xun and Zhong, Han and Ai, Rui and Feng, Fuli and Wang, Zheng and He, Xiangnan},
  booktitle = {Advances in Neural Information Processing Systems},
  year = {2025},
}

@inproceedings{muldrew2024active,
  title = {Active Preference Learning for Large Language Models},
  author = {Muldrew, William and Hayes, Peter and Zhang, Mingtian and Barber, David},
  booktitle = {International Conference on Machine Learning},
  year = {2024},
}

@inproceedings{sadigh2017active,
  title={Active preference-based learning of reward functions},
  author={Sadigh, Dorsa and Dragan, Anca D and Sastry, Shankar and Seshia, Sanjit A},
  booktitle={Robotics: Science and Systems},
  year={2017}
}

@article{biyik2024active,
  author = {Bıyık, Erdem and Huynh, Nicolas and Kochenderfer, Mykel J. and Sadigh, Dorsa},
  title = {Active preference-based Gaussian process regression for reward learning and optimization},
  journal = {The International Journal of Robotics Research},
  volume = {43},
  number = {5},
  pages = {665--684},
  year = {2024},
  doi = {10.1177/02783649231208729},
}

@inproceedings{hofmann2013reusing,
  title={Reusing historical interaction data for faster online learning to rank for IR},
  author={Hofmann, Katja and Schuth, Anne and Whiteson, Shimon and De Rijke, Maarten},
  booktitle={Proceedings of the sixth ACM international conference on Web search and data mining},
  pages={183--192},
  year={2013}
}

@inproceedings{sui2017correlational,
  title = {Correlational Dueling Bandits with Application to Clinical Treatment in Large Decision Spaces},
  author = {Sui, Yanan and Burdick, Joel W.},
  booktitle = {Proceedings of the Twenty-Sixth International Joint Conference on Artificial Intelligence},
  pages = {2793--2799},
  year = {2017}
}

@inproceedings{deb2024think,
  title={Think before you duel: Understanding complexities of preference learning under constrained resources},
  author={Deb, Rohan and Saha, Aadirupa and Banerjee, Arindam},
  booktitle={International Conference on Artificial Intelligence and Statistics},
  year={2024},
}

@inproceedings{xiong2025dopl,
  title = {DOPL: Direct Online Preference Learning for Restless Bandits with Preference Feedback},
  author = {Xiong, Guojun and Dinesha, Ujwal and Mukherjee, Debajoy and Li, Jian and Shakkottai, Srinivas},
  booktitle = {The International Conference on Learning Representations},
  year = {2025}
}

@inproceedings{das2025frappe,
  title = {FraPPE: Fast and Efficient Preference-Based Pure Exploration},
  author = {Das, Udvas and Shukla, Apurv and Basu, Debabrota},
  booktitle = {Advances in Neural Information Processing Systems},
  year = {2025}
}

@article{sutiene2024enhancing,
  title={Enhancing portfolio management using artificial intelligence: literature review},
  author={Sutiene, Kristina and Schwendner, Peter and Sipos, Ciprian and Lorenzo, Luis and Mirchev, Miroslav and Lameski, Petre and Kabasinskas, Audrius and Tidjani, Chemseddine and Ozturkkal, Belma and Cerneviciene, Jurgita},
  journal={Frontiers in artificial intelligence},
  volume={7},
  pages={1371502},
  year={2024},
  publisher={Frontiers Media SA}
}

@article{pedramfar2024linear,
  title={From linear to linearizable optimization: A novel framework with applications to stationary and non-stationary dr-submodular optimization},
  author={Pedramfar, Mohammad and Aggarwal, Vaneet},
  journal={Advances in Neural Information Processing Systems},
  volume={37},
  pages={37626--37664},
  year={2024}
}

@article{lu2025bagel,
  title={BAGEL: Projection-Free Algorithm for Adversarially Constrained Online Convex Optimization},
  author={Lu, Yiyang and Pedramfar, Mohammad and Aggarwal, Vaneet},
  journal={arXiv preprint arXiv:2502.16744},
  year={2025}
}

@article{lu2025decentralized,
title={Decentralized Projection-free Online Upper-Linearizable Optimization with Applications to {DR}-Submodular Optimization},
author={Yiyang Lu and Mohammad Pedramfar and Vaneet Aggarwal},
journal={Transactions on Machine Learning Research},
issn={2835-8856},
year={2025}
}

@inproceedings{mhammedi2025online,
  title={Online Convex Optimization with a Separation Oracle},
  author={Mhammedi, Zakaria},
  booktitle={Proceedings of Thirty Eighth Conference on Learning Theory},
  pages={4033--4077},
  year={2025},
  volume={291},
  publisher={PMLR}
}

@inproceedings{daniely2015strongly,
  title={Strongly adaptive online learning},
  author={Daniely, Amit and Gonen, Alon and Shalev-Shwartz, Shai},
  booktitle={International Conference on Machine Learning},
  pages={1405--1411},
  year={2015},
  organization={PMLR}
}

@inproceedings{mokhtari2016online,
  title={Online optimization in dynamic environments: Improved regret rates for strongly convex problems},
  author={Mokhtari, Aryan and Shahrampour, Shahin and Jadbabaie, Ali and Ribeiro, Alejandro},
  booktitle={2016 IEEE 55th Conference on Decision and Control (CDC)},
  pages={7195--7201},
  year={2016},
  organization={IEEE}
}

@article{folland2001integrate,
  author  = {Folland, Gerald B.},
  title   = {How to integrate a polynomial over a sphere},
  journal = {The American Mathematical Monthly},
  volume  = {108},
  number  = {5},
  pages   = {446--448},
  year    = {2001},
}

@inproceedings{abernethy2008competing,
  title     = {Competing in the dark: An efficient algorithm for bandit linear optimization},
  author    = {Abernethy, Jacob D and Hazan, Elad and Rakhlin, Alexander},
  booktitle = {21st Annual Conference on Learning Theory (COLT)},
  year      = {2008}
}
\bibliographystyle{plain}

\newpage

\appendix

\section{Related Works}
\label{sec:related_work}

\paragraph{Preference-Based Learning and Dueling Bandits.}
Preference-based learning using pairwise comparisons was introduced through dueling bandits \cite{yue2012dueling}, with reductions to standard bandits in \cite{ailon2014reducing}. Extensive work studies stochastic and adversarial dueling bandits in discrete action spaces \cite{bengs2021preference, saha2021adversarial}, as well as contextual extensions \cite{dudik2015contextual, saha2022efficient, saha2025efficient}. These approaches typically assume finite or structured action spaces and do not extend to general continuous convex optimization.

\paragraph{Continuous Dueling under Convex Objectives.}
Several works extend dueling feedback to continuous domains. 
\cite{kumagai2017regret} studies strongly convex and smooth objectives under stochastic assumptions, evaluating performance via Dueling-Bandit(DB)-regret, a metric defined entirely in the probability space that quantifies the difference in win rates against a static action, which is not suitable for non-stationary environments. 
More recent work \cite{saha2025dueling} considers general convex objectives, but in a stationary pure-exploration setting with sample complexity guarantees. 
Contextual approaches \cite{saha2025efficient} adopts a Best-Response(BR)-Regret notation that inherently assumes linear objectives with respect to actions and stochastic feedback.
Table~\ref{tab:dueling_prior} summarizes these results. These works either impose strong structural assumptions (e.g., strong convexity or linearity), or operate in stationary environments with a single underlying objective. As a result, they do not address adversarial or time-varying settings, nor cumulative regret minimization.

\begin{table}[H]
\centering
\caption{Continuous dueling optimization under convex objectives. All prior methods operate under stochastic or stationary settings and do not address adversarial, time-varying objectives.}
\label{tab:dueling_prior}
\resizebox{.8\textwidth}{!}{
\begin{tabular}{l l l}
\toprule
\textbf{Reference} & \textbf{Objective Assumptions} & \textbf{Guarantee} \\
\midrule
Kumagai \cite{kumagai2017regret} 
& Strongly Convex \& Smooth 
& $\mathcal{O}(\sqrt{T}\log T)$ (DB-Regret) \\

Saha et al. \cite{saha2025dueling} 
& General Convex 
& $\text{poly}(1/\epsilon)$ (Offline Sample Complexity) \\

Saha \& Schapire \cite{saha2025efficient} 
& Linear
& $\tilde{\mathcal{O}}(\sqrt{dT})$ (Contextual BR-Regret) \\

\bottomrule
\end{tabular}
}
\end{table}

\paragraph{Bandit Online Convex Optimization.} 
The gradient-estimation-based one-point estimator of \cite{flaxman2005online} and two-point estimators \cite{agarwal2010optimal, duchi2015optimal, shamir2017optimal} enable gradient estimation from function evaluations, serving as the benchmark in Table~\ref{tab:oco_clean_fixed}. While our DTFO framework matches the $\mathcal{O}(T^{3/4})$ bounds of these foundational gradient-estimation approaches, recent breakthroughs in one-point bandit convex optimization have achieved better regret (e.g., $\tilde{\mathcal{O}}(\sqrt{T})$) using computationally intensive online Newton methods \cite{fokkema2024online}. However, these state-of-the-art algorithms do not estimate gradients. They inherently rely on exact, continuous scalar loss evaluations that cannot be applied to dueling feedback, which provides only a 1-bit nonlinear transformation of unobserved function differences. In contrast, by explicitly converting binary comparison feedback into an approximate gradient signal, our work successfully bridges the gap between preference learning and the rich literature of first-order OCO algorithms, unlocking their strong dynamic and adaptive guarantees \cite{zhang2018adaptive, garber2022projection, pedramfar2024linear} for adversarial dueling environments.

In this paper, we study adversarial online convex optimization with general convex objectives under purely comparison-based feedback, and provide the first regret guarantees in this setting.

\section{Technical Lemmas}
\label{sec:technical-lemmas}

To build the theoretical bridge between the true objective functions and the base algorithm's surrogate losses, we rely on the properties of spherically smoothed functions. First, we establish that the smoothing operation preserves the convexity of the original function. This is a standard property in bandit convex optimization \cite{flaxman2005online,hazan2016introduction}.

\begin{lemma}[Convexity of Smoothed Loss]
\label{lem:smoothed_convexity}
Let $f: \mathbb{R}^d \to \mathbb{R}$ be a convex function. For any perturbation radius $\delta > 0$, the smoothed function defined by $\tilde{f}(w) = \mathbb{E}_{v \sim \text{Unif}(\mathbb{B})}[f(w + \delta v)]$, where $\mathbb{B}$ is the solid unit ball in $\mathbb{R}^d$, is convex.
\end{lemma}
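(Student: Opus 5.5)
The plan is to verify the convexity inequality directly from the definition. The key observation is that for each fixed direction $v$, the map $w \mapsto f(w + \delta v)$ is convex, because it is the composition of the convex function $f$ with an affine translation. Taking an expectation over $v$ then preserves convexity.

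Concretely, fix $x, y \in \mathbb{R}^d$ and $\lambda \in [0,1]$. For every $v \in \mathbb{B}$, write
$$\lambda x + (1-\lambda) y + \delta v = \lambda (x + \delta v) + (1-\lambda)(y + \delta v).$$
Convexity of $f$ then gives the pointwise inequality
$$f(\lambda x + (1-\lambda) y + \delta v) \le \lambda f(x+\delta v) + (1-\lambda) f(y + \delta v).$$

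Next I take the expectation over $v \sim \text{Unif}(\mathbb{B})$ on both sides. Monotonicity and linearity of expectation then yield
$$\tilde f(\lambda x + (1-\lambda)y) \le \lambda \tilde f(x) + (1-\lambda)\tilde f(y),$$
which is exactly the convexity inequality for $\tilde f$.

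The only point needing care is that these expectations are finite, so that linearity applies. This is not a serious obstacle. A convex function that is finite on all of $\mathbb{R}^d$ is continuous, so it is bounded on the compact set $w + \delta\mathbb{B}$. Hence $\tilde f$ is well defined and finite everywhere.

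In the paper's setting $f_t$ is only defined on $\mathcal{K}$ and evaluated at $w \in \mathcal{K}_\delta$. The same argument works there, because $\mathcal{K}_\delta$ is convex: it is an intersection of translates of $\mathcal{K}$. So for $x, y \in \mathcal{K}_\delta$ and every $v \in \mathbb{B}$, the points $x + \delta v$, $y + \delta v$ and their convex combinations all lie in $\mathcal{K}$. Finiteness on $\mathcal{K}$ follows from the $G$-Lipschitz assumption.
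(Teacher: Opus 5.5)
Your proposal is correct and follows essentially the same route as the paper's proof: rewrite $\lambda x + (1-\lambda)y + \delta v = \lambda(x+\delta v) + (1-\lambda)(y+\delta v)$, apply convexity of $f$ pointwise in $v$, and then take expectations using linearity and monotonicity. Your extra remarks go beyond what the paper writes and are sound: finiteness of $\tilde f$ via continuity of $f$, and the restriction to the convex set $\mathcal{K}_\delta$ when $f_t$ is defined only on $\mathcal{K}$.
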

\begin{proof}
By the definition of convexity, we must show that for any $x, y \in \mathbb{R}^d$ and any $\lambda \in [0, 1]$, the inequality $\tilde{f}(\lambda x + (1-\lambda)y) \le \lambda \tilde{f}(x) + (1-\lambda)\tilde{f}(y)$ holds. Evaluating the smoothed function at the interpolated point yields:
\begin{align*}
    \tilde{f}(\lambda x + (1-\lambda)y) &= \mathbb{E}_{v \sim \mathbb{B}} \Big[ f\big( \lambda x + (1-\lambda)y + \delta v \big) \Big]
\end{align*}
Because $\lambda + (1-\lambda) = 1$, we can rewrite the perturbation term as $\delta v = \lambda \delta v + (1-\lambda) \delta v$. Grouping the terms gives:
\begin{align*}
    \tilde{f}(\lambda x + (1-\lambda)y) &= \mathbb{E}_{v \sim \mathbb{B}} \Big[ f\big( \lambda (x + \delta v) + (1-\lambda) (y + \delta v) \big) \Big]
\end{align*}
Since the original function $f$ is convex, we apply the convexity inequality inside the expectation:
\begin{align*}
    f\big( \lambda (x + \delta v) + (1-\lambda) (y + \delta v) \big) &\le \lambda f(x + \delta v) + (1-\lambda) f(y + \delta v)
\end{align*}
Taking the expectation of both sides (which is a linear operator that preserves inequalities) and splitting the terms yields:
\begin{align*}
    \tilde{f}(\lambda x + (1-\lambda)y) &\le \mathbb{E}_{v \sim \mathbb{B}} \Big[ \lambda f(x + \delta v) + (1-\lambda) f(y + \delta v) \Big] \\
    &= \lambda \mathbb{E}_{v \sim \mathbb{B}} [ f(x + \delta v) ] + (1-\lambda) \mathbb{E}_{v \sim \mathbb{B}} [ f(y + \delta v) ] \\
    &= \lambda \tilde{f}(x) + (1-\lambda)\tilde{f}(y)
\end{align*}
This matches the definition of convexity, completing the proof.
\end{proof}

Next, we present the foundational gradient extraction identity. By exploiting the spherical symmetry of the perturbation distribution, we can construct an approximately unbiased gradient estimator for the smoothed function using only function evaluations on the boundary surface. This relies on an application of Stokes' Theorem, originally introduced for one-point feedback by Lemma 1 in \cite{flaxman2005online} and formally extended to the two-point symmetric difference estimator by Lemma 1 in \cite{agarwal2010optimal}.

\begin{lemma}[Gradient Extraction via Spherical Symmetry]
\label{lem:gradient_extraction}
Let $f: \mathbb{R}^d \to \mathbb{R}$ be a given function. For a perturbation radius $\delta > 0$, define the smoothed function $\tilde{f}(w) = \mathbb{E}_{v \sim \text{Unif}(\mathbb{B})}[f(w + \delta v)]$, where $\mathbb{B}$ is the solid unit ball in $\mathbb{R}^d$. Let $u \sim \text{Unif}(\mathbb{S}_1)$ be a direction sampled uniformly from the unit sphere surface. Defining $\Delta = f(w + \delta u) - f(w - \delta u)$, we have:
$$ \frac{d}{2\delta} \mathbb{E}_{u \sim \mathbb{S}_1} [\Delta \cdot u] = \nabla \tilde{f}(w) $$
\end{lemma}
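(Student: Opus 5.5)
The plan is to reduce the two-point symmetric identity to the classical one-point Stokes identity of \cite{flaxman2005online}: for $u \sim \text{Unif}(\mathbb{S}_1)$,
$$\nabla \tilde{f}(w) = \frac{d}{\delta}\,\mathbb{E}_{u}[f(w+\delta u)\,u].$$
The symmetric-difference version should then follow from a simple symmetry argument.

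\textbf{Step 1 (Stokes identity).} Write $\tilde f(w) = \frac{1}{\mathrm{vol}(\delta\mathbb{B})}\int_{\delta\mathbb{B}} f(w+v)\,dv$. Differentiating in $w$ and applying the divergence theorem converts the volume integral of $\nabla f(w+v)$ into the surface integral $\int_{\delta\mathbb{S}} f(w+v)\,\frac{v}{\|v\|}\,dv$. Dividing by the volume, and using the ratio $\mathrm{area}(\delta\mathbb{S})/\mathrm{vol}(\delta\mathbb{B}) = d/\delta$, gives the displayed identity.

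This step is the only place where care is needed. For general $f$ (in our uses, $G$-Lipschitz, hence a.e.\ differentiable), differentiation under the integral should be justified. I would avoid assuming differentiability of $f$ altogether: first note that $\tilde f(w+h)-\tilde f(w)$ equals the difference of integrals over two shifted balls. This difference reduces, to first order in $h$, to the boundary flux $\int_{\delta \mathbb{S}} f(w+v)\langle h, v/\|v\|\rangle\,dv$, since $f$ is continuous on $\mathbb{R}^d$. Alternatively, I would simply invoke Lemma 1 of \cite{flaxman2005online} as a known result.

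\textbf{Step 2 (symmetrization).} Since $u$ and $-u$ have the same uniform distribution on $\mathbb{S}_1$, we have
$$\mathbb{E}_u[f(w-\delta u)\,u] = \mathbb{E}_u[f(w+\delta u)(-u)] = -\mathbb{E}_u[f(w+\delta u)\,u].$$
Therefore
$$\mathbb{E}_u[\Delta\, u] = 2\,\mathbb{E}_u[f(w+\delta u)\,u] = \frac{2\delta}{d}\nabla\tilde f(w).$$
Multiplying by $\frac{d}{2\delta}$ yields the claim. This matches Lemma 1 of \cite{agarwal2010optimal}.

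The main obstacle is purely the rigor of Step 1, namely the differentiability of $\tilde f$ and the interchange of the limit and the integral. I expect Lipschitz continuity with dominated convergence to suffice; everything else is a one-line symmetry argument.
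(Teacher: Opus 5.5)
Your proposal is correct and follows essentially the same route as the paper: the paper also uses the fact that $u$ and $-u$ have the same distribution to collapse the two-point difference to $\frac{d}{\delta}\mathbb{E}_u[f(w+\delta u)\,u]$, and then invokes the Stokes identity of \cite{flaxman2005online}. You apply the two ingredients in the opposite order, and your discussion of justifying the differentiation for continuous (e.g., Lipschitz) $f$ goes beyond the paper, which simply cites that identity.
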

\begin{proof}
Expanding the definition of $\Delta$ and distributing the expectation yields:
\begin{align}
    \frac{d}{2\delta} \mathbb{E}_{u \sim \mathbb{S}_1} [ \Delta \cdot u ] &= \frac{d}{2\delta} \Big( \mathbb{E}_{u \sim \mathbb{S}_1}[f(w + \delta u) u] - \mathbb{E}_{u \sim \mathbb{S}_1}[f(w - \delta u) u] \Big) \label{eq:lemma_expansion}
\end{align}
Because the uniform distribution over the sphere $\mathbb{S}_1$ is perfectly symmetric, the random variable $u$ has the exact same distribution as $-u$. Applying a change of variables $v = -u$ to the second term gives:
\begin{align*}
    \mathbb{E}_{u \sim \mathbb{S}_1}[f(w - \delta u) u] &= \mathbb{E}_{-v \sim \mathbb{S}_1}[f(w + \delta v) (-v)] \\
    &= - \mathbb{E}_{v \sim \mathbb{S}_1}[f(w + \delta v) v]
\end{align*}
Since $v$ is merely a dummy variable of integration, we can rename it back to $u$. Substituting this negated term back into equation \eqref{eq:lemma_expansion} cancels the subtraction:
\begin{align*}
    \frac{d}{2\delta} \mathbb{E}_{u \sim \mathbb{S}_1} [ \Delta \cdot u ] &= \frac{d}{2\delta} \Big( 2 \mathbb{E}_{u \sim \mathbb{S}_1}[f(w + \delta u) u] \Big) = \frac{d}{\delta} \mathbb{E}_{u \sim \mathbb{S}_1}[f(w + \delta u) u]
\end{align*}
Finally, we apply the standard smoothing identity from Stokes' Theorem \cite{flaxman2005online}, which equates the expected boundary surface values to the gradient of the solid volume:
\begin{align*}
    \frac{d}{\delta} \mathbb{E}_{u \sim \mathbb{S}_1}[f(w + \delta u) u] &= \nabla_w \mathbb{E}_{v \sim \mathbb{B}}[f(w + \delta v)] = \nabla \tilde{f}(w)
\end{align*}
This completes the proof.
\end{proof}

Finally, we bound the approximation error introduced by the smoothing operation. Because the original function is $G$-Lipschitz, the smoothed function over a $\delta$-radius ball is guaranteed to be point-wise close to the true function, allowing us to bound the bias introduced by our wrapper.

\begin{lemma}[Approximation Error of Smoothed Loss]
\label{lem:smoothing_approximation}
Let $f: \mathbb{R}^d \to \mathbb{R}$ be a $G$-Lipschitz function. For any perturbation radius $\delta > 0$, let $\tilde{f}(w) = \mathbb{E}_{v \sim \text{Unif}(\mathbb{B})}[f(w + \delta v)]$ be the smoothed function over the solid unit ball $\mathbb{B}$. For any $w \in \mathbb{R}^d$, the approximation error is bounded by:
$$ |\tilde{f}(w) - f(w)| \le G\delta $$
\end{lemma}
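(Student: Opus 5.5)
The plan is to write the smoothing error as a single expectation and push the absolute value inside it. Since $\tilde{f}(w)$ is defined as an average over $v \sim \text{Unif}(\mathbb{B})$, and $f(w)$ is constant with respect to $v$, we can write $f(w) = \mathbb{E}_{v \sim \text{Unif}(\mathbb{B})}[f(w)]$. Linearity of expectation then gives
\begin{equation*}
\tilde{f}(w) - f(w) = \mathbb{E}_{v \sim \text{Unif}(\mathbb{B})}\big[f(w + \delta v) - f(w)\big].
\end{equation*}
One small point needs care here. $f$ is Lipschitz, hence continuous and bounded on the compact ball $w + \delta\mathbb{B}$, so the expectation is finite and splitting it is legitimate.

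Next, Jensen's inequality for the convex map $|\cdot|$ (equivalently, the triangle inequality for integrals) moves the absolute value inside:
\begin{equation*}
|\tilde{f}(w) - f(w)| \le \mathbb{E}_{v}\big[|f(w + \delta v) - f(w)|\big].
\end{equation*}

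Then apply the $G$-Lipschitz property pointwise inside the expectation. This gives $|f(w+\delta v) - f(w)| \le G\|\delta v\|_2 = G\delta\|v\|_2$. Every $v$ in the closed unit ball satisfies $\|v\|_2 \le 1$, so the integrand is at most $G\delta$ almost surely, and therefore so is its expectation. In fact the bound is slightly better, $G\delta\,\mathbb{E}\|v\|_2 = G\delta\, d/(d+1)$, but the cruder bound $G\delta$ is all that is claimed.

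No step is a genuine obstacle. The only subtlety is domain: the lemma is stated for $f$ on $\mathbb{R}^d$, whereas in the application $f_t$ is defined on $\mathcal{K}$. When the lemma is invoked, as in Theorem~\ref{thm:query_regret_transfer_interval}, $w \in \mathcal{K}_\delta$. By the definition of the shrunken domain, $w + \delta v \in \mathcal{K}$ for all $v \in \mathbb{B}_d$, so every evaluation stays inside the region where $f_t$ is defined and $G$-Lipschitz. I would add a one-line remark to that effect.
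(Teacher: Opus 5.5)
Your proof is correct and follows the paper's argument essentially step for step: rewrite the gap as $\mathbb{E}_{v}[f(w+\delta v) - f(w)]$, move the absolute value inside via Jensen/the triangle inequality, apply the $G$-Lipschitz bound, and use $\|v\|_2 \le 1$. Your extra remarks are valid refinements that the paper leaves implicit: integrability, the sharper factor $\mathbb{E}\|v\|_2 = d/(d+1)$, and the fact that $w \in \mathcal{K}_\delta$ keeps every evaluation inside $\mathcal{K}$ when the lemma is applied.
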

\begin{proof}
Substituting the definition of the smoothed function, we can bring the constant $f(w)$ inside the expectation:
\begin{align*}
    |\tilde{f}(w) - f(w)| &= \left| \mathbb{E}_{v \sim \mathbb{B}}[f(w + \delta v)] - f(w) \right| \\
    &= \left| \mathbb{E}_{v \sim \mathbb{B}}[f(w + \delta v) - f(w)] \right|
\end{align*}
Applying Jensen's inequality (or the triangle inequality for expectations), we move the absolute value inside the expectation. Then, we apply the $G$-Lipschitz property of $f$:
\begin{align*}
    \left| \mathbb{E}_{v \sim \mathbb{B}}[f(w + \delta v) - f(w)] \right| &\le \mathbb{E}_{v \sim \mathbb{B}} \Big[ \big| f(w + \delta v) - f(w) \big| \Big] \\
    &\le \mathbb{E}_{v \sim \mathbb{B}} \Big[ G \| (w + \delta v) - w \|_2 \Big] \\
    &= G\delta \, \mathbb{E}_{v \sim \mathbb{B}} \big[ \|v\|_2 \big]
\end{align*}
Since $v$ is sampled from the solid unit ball $\mathbb{B}$, its norm is strictly bounded by $\|v\|_2 \le 1$. Therefore, the expectation $\mathbb{E}_{v \sim \mathbb{B}}[\|v\|_2] \le 1$, yielding:
$$ |\tilde{f}(w) - f(w)| \le G\delta $$
This completes the proof.
\end{proof}

\section{Infeasible Projection via Separation Oracle}
\label{sec:infeasible_projection}

This section details the infeasible projection subroutine ($\text{SO-IP}$) utilized by the Projection-Free ORGD algorithm (Algorithm~\ref{alg:pf_orgd}) in Section 4.2. By relying on a Separation Oracle ($SO_{\mathcal{K}}$) rather than exact Euclidean projections, the base algorithm maintains computational efficiency over complex domains.

\begin{algorithm}[h]
\caption{Infeasible Projection via a Separation Oracle -- $\text{SO-IP}_{\mathcal{K}}(y_0)$}
\label{alg:so_ip}
\begin{algorithmic}[1]
\Require Constraint set $\mathcal{K}$, center $c \in \text{relint}(\mathcal{K})$, diameter $D$, shrinking parameter $\delta$, initial point $y_0$
\State $y_1 \leftarrow \Pi_{\text{aff}(\mathcal{K})}(y_0)$ \Comment{Projection of $y_0$ over $\text{aff}(\mathcal{K})$}
\State $y_2 \leftarrow c + \frac{y_1 - c}{\max\{1, \|y_1 - c\|/D\}}$ \Comment{Projection of $y_0$ over $\mathcal{B}_D(c) \cap \text{aff}(\mathcal{K})$}
\For{$i = 2, 3, \dots$}
    \State Call $SO_{\mathcal{K}}$ with input $y_i$
    \If{$y_i \notin \mathcal{K}$}
        \State Set $g_i$ to be the hyperplane returned by $SO_{\mathcal{K}}$ \Comment{$\forall x \in \mathcal{K}, \langle y_i - x, g_i \rangle > 0$}
        \State $g_i' \leftarrow \Pi_{\text{aff}(\mathcal{K})-c}(g_i)$
        \State Update $y_{i+1} \leftarrow y_i - \delta \frac{g_i'}{\|g_i'\|}$
    \Else
        \State \Return $y \leftarrow y_i$
    \EndIf
\EndFor
\end{algorithmic}
\end{algorithm}

\begin{lemma}[Feasibility and Projection via SO]
\label{lem:so_ip}
Algorithm \ref{alg:so_ip} stops after at most $\left(dist(y_0, \mathcal{K}_\delta)^2 - dist(y, \mathcal{K}_\delta)^2\right)/\delta^2 + 1$ iterations and returns $y \in \mathcal{K}$ such that $\forall z \in \mathcal{K}_\delta$, we have $\|y - z\| \le \|y_0 - z\|$.
\end{lemma}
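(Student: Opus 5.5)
The plan is to track the squared Euclidean distance from the iterate to an arbitrary fixed point $z \in \mathcal{K}_\delta$, and to show it is non-increasing during the two initial projections and drops by at least $\delta^2$ at every separation step. Both claims of Lemma~\ref{lem:so_ip} follow from this.

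\emph{Initial projections.} Since $z \in \mathcal{K}_\delta \subseteq \mathcal{K} \subseteq \text{aff}(\mathcal{K})$, and Euclidean projection onto a closed convex set is nonexpansive toward points of that set, we get $\|y_1 - z\| \le \|y_0 - z\|$. I would treat $\mathcal{B}_D(c) \cap \text{aff}(\mathcal{K})$ as a closed convex set containing $\mathcal{K}$; this needs $\mathcal{K} \subseteq \mathcal{B}_D(c)$, which holds because $c \in \mathcal{K}$ and $\mathcal{K}$ has diameter $D$. Line~2 is the projection onto this set, so $\|y_2 - z\| \le \|y_1 - z\|$.

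\emph{Separation steps (the key step).} Suppose $y_i \notin \mathcal{K}$ and let $\hat g = g_i'/\|g_i'\|$. By the definition of $\mathcal{K}_\delta$, every point $z + \delta v$ with $\|v\| \le 1$ lies in $\mathcal{K}$. Take $v = \hat g$, which lies in the direction space $\text{aff}(\mathcal{K}) - c$. The separation property then gives $\langle y_i - z - \delta \hat g, g_i \rangle > 0$.

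The main obstacle is the affine-hull bookkeeping, which turns this into $\langle y_i - z, \hat g\rangle > \delta$. Every iterate stays in $\text{aff}(\mathcal{K})$: $y_2$ does, and each update moves along a vector in $\text{aff}(\mathcal{K}) - c$. Hence $y_i - z$ and $\hat g$ both lie in the subspace $\text{aff}(\mathcal{K}) - c$. For any vector $w$ in that subspace, $\langle w, g_i\rangle = \langle w, g_i'\rangle$. Applying this to the inequality above yields $\langle y_i - z, g_i'\rangle > \delta\|g_i'\|$, and dividing by $\|g_i'\|$ gives the claim. The same identity shows $g_i' \neq 0$: otherwise $\langle y_i - x, g_i\rangle = 0$ for every $x \in \mathcal{K}$, contradicting separation, so the normalization is well defined.

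Expanding the update then gives
\begin{equation*}
\|y_{i+1} - z\|^2 = \|y_i - z\|^2 - 2\delta \langle y_i - z, \hat g\rangle + \delta^2 < \|y_i - z\|^2 - \delta^2 .
\end{equation*}
Combining the projection and separation steps gives $\|y - z\| \le \|y_0 - z\|$ for every $z \in \mathcal{K}_\delta$. The algorithm returns only when $y_i \in \mathcal{K}$, so the output satisfies $y \in \mathcal{K}$.

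\emph{Iteration count.} Apply the same inequalities with the point depending on the current iterate, $z_i = \Pi_{\mathcal{K}_\delta}(y_i)$. Each separation step gives
\begin{equation*}
\mathrm{dist}(y_{i+1}, \mathcal{K}_\delta)^2 \le \|y_{i+1} - z_i\|^2 < \mathrm{dist}(y_i, \mathcal{K}_\delta)^2 - \delta^2 ,
\end{equation*}
and the initial projections give $\mathrm{dist}(y_2, \mathcal{K}_\delta) \le \mathrm{dist}(y_0, \mathcal{K}_\delta)$. Telescoping over the $k$ separation steps yields
\begin{equation*}
k\delta^2 \le \mathrm{dist}(y_0, \mathcal{K}_\delta)^2 - \mathrm{dist}(y, \mathcal{K}_\delta)^2 .
\end{equation*}
Adding the final call to the oracle, which returns $y$, gives the stated bound on the number of iterations. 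Since squared distances are nonnegative, this also shows the loop terminates.
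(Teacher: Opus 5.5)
Your proof is correct, but it takes a different route from the paper.

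\textbf{The paper's route.} The paper does not redo the descent argument. It first translates so that $c=0$. It then defines a modified oracle $SO'_{\mathcal{K}}$ that returns $\Pi_{\text{aff}(\mathcal{K})}(g)$ and checks that this is still a valid separation oracle. That check uses exactly the identity $\langle y-x,\Pi(g)\rangle=\langle y-x,g\rangle$ for $y-x$ in the direction space, which you also use. Next it observes that Algorithm~\ref{alg:so_ip}, started from $y_1$, is an instance of Algorithm 6 of Garber and Kretzu. From their Lemma 13 it imports both the iteration count and the bound $\|y-z\|\le\|y_1-z\|$. Finally it passes from $y_1$ back to $y_0$ with the Pythagorean identity $\mathrm{dist}(y_1,\mathcal{K}_\delta)^2=\mathrm{dist}(y_0,\mathcal{K}_\delta)^2-\|y_0-y_1\|^2$.

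\textbf{Your route.} You prove the content of that cited lemma directly:
\begin{itemize}
\item nonexpansiveness of the two initial projections, including the check $\mathcal{K}\subseteq\mathcal{B}_D(c)$, which the paper leaves inside the black box;
\item the strict $\delta^2$ decrease of $\|y_i-z\|^2$, obtained by testing the separating hyperplane against $z+\delta\hat g\in\mathcal{K}$;
\item telescoping with the moving anchor $z_i=\Pi_{\mathcal{K}_\delta}(y_i)$.
\end{itemize}

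\textbf{Comparison.} The paper's argument is shorter and modular. Yours is self-contained, so it does not depend on the external algorithm and lemma matching the present step size and definition of $\mathcal{K}_\delta$, which the paper only asserts. It also makes explicit where membership $z\in\mathcal{K}_\delta$ is used. It handles details the paper leaves implicit, namely $g_i'\neq 0$ and termination of the loop. Finally, you only need $z+\delta v\in\mathcal{K}$ for $v$ in the direction space of $\text{aff}(\mathcal{K})$. Your argument therefore carries over unchanged to a relative-interior version of $\mathcal{K}_\delta$ when $\mathcal{K}$ is lower-dimensional.
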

\begin{proof}
We first note that this algorithm is invariant under translations. Hence it is sufficient to prove the result when $c = 0$.

Let $SO_{\mathcal{K}}'$ denote the following separation oracle: if $y \in \mathcal{K}$ or $y \notin \text{aff}(\mathcal{K})$, then $SO_{\mathcal{K}}'$ returns the same output as $SO_{\mathcal{K}}$. Otherwise, it returns $\Pi_{\text{aff}(\mathcal{K})}(g)$ where $g \in \mathbb{R}^d$ is the output of $SO_{\mathcal{K}}$. To prove that this is indeed a separation oracle, we only need to consider the case where $y \in \text{aff}(\mathcal{K}) \setminus \mathcal{K}$. We know that $g$ is a vector such that $\forall x \in \mathcal{K}, \langle y - x, g \rangle > 0$. Since $\Pi_{\text{aff}(\mathcal{K})}$ is an orthogonal projection, we have:
$$ \langle y - x, \Pi_{\text{aff}(\mathcal{K})}(g) \rangle = \langle \Pi_{\text{aff}(\mathcal{K})}(y - x), g \rangle = \langle y - x, g \rangle > 0 $$
for all $x \in \mathcal{K}$, which implies that $SO_{\mathcal{K}}'$ is a valid separation oracle.

Now we see that Algorithm \ref{alg:so_ip} is an instance of Algorithm 6 in Garber and Kretzu (2022) applied to the initial point $y_1$ using the separation oracle $SO_{\mathcal{K}}'$. Hence we may use Lemma 13 in Garber and Kretzu (2022) directly to see that Algorithm \ref{alg:so_ip} stops after at most $\left(dist(y_1, \mathcal{K}_\delta)^2 - dist(y, \mathcal{K}_\delta)^2\right)/\delta^2 + 1$ iterations and returns $y \in \mathcal{K}$ such that $\forall z \in \mathcal{K}_\delta$ we have $\|y - z\| \le \|y_1 - z\|$. Since $y_1$ is the projection of $y_0$ over $\text{aff}(\mathcal{K})$, we see that Algorithm \ref{alg:so_ip} stops after at most
\begin{align*}
    \frac{dist(y_1, \mathcal{K}_\delta)^2 - dist(y, \mathcal{K}_\delta)^2}{\delta^2} + 1 
    &= \frac{dist(y_0, \mathcal{K}_\delta)^2 - dist(y, \mathcal{K}_\delta)^2}{\delta^2} - \frac{\|y_0 - y_1\|^2}{\delta^2} + 1 \\
    &\le \frac{dist(y_0, \mathcal{K}_\delta)^2 - dist(y, \mathcal{K}_\delta)^2}{\delta^2} + 1
\end{align*}
steps, and for all $z \in \mathcal{K}_\delta \subseteq \text{aff}(\mathcal{K})$, we establish $\|y - z\| \le \|y_1 - z\| \le \|y_0 - z\|$.
\end{proof}

\section{Proof of Lemma~\ref{lem:gradient_estimator}}
\label{sec:proof_of_lem:gradient_estimator}
\begin{proof}
For the first claim, since the oracle feedback is bounded by $o_t \in \{-1, 1\}$ and $u_t$ is drawn uniformly from the unit sphere $\mathbb{S}^{d-1}$ (meaning $\|u_t\|_2 = 1$), the magnitude of the estimator is deterministically bounded:
$$ \|\hat{g}_t\|_2 = \left\| -\frac{d}{2\gamma\delta} o_t u_t \right\|_2 = \frac{d}{2\gamma\delta}. $$

For the second claim, let $\Delta_t = f_t(w_t + \delta u_t) - f_t(w_t - \delta u_t)$. Because $f_t$ is $G$-Lipschitz and the perturbation distance is $\|2\delta u_t\|_2 = 2\delta$, the function difference is bounded by $|\Delta_t| \le 2G\delta \le 2G$.

By Intermediate Value Theorem, expanding the transfer function around 0 yields an exact equality:
$$ \rho(\Delta_t) = \rho(0) + \rho'(0)\Delta_t + \frac{\rho''(c_t)}{2}\Delta_t^2 $$
for some value $c_t$ between $0$ and $\Delta_t$. Notice that there are no omitted higher-order terms; the entirety of the remainder is perfectly captured by the second derivative evaluated at $c_t$.

Since we assume $\rho(0) = 0$ and $\rho'(0) = -\gamma$, we can express this exact equality as:
$$ \rho(\Delta_t) = -\gamma\Delta_t + \epsilon(\Delta_t) $$
where the remainder is explicitly given by $\epsilon(\Delta_t) = \frac{\rho''(c_t)}{2}\Delta_t^2$.

Because $c_t$ lies between $0$ and $\Delta_t$, and $|\Delta_t| \leq 2G$, it follows that $c_t \in [-2G, 2G]$. By Assumption~\ref{ass:oracle_general}, the second derivative is bounded by $|\rho''(c_t)| \leq M$ on this interval. Therefore, the remainder term is strictly bounded by:
$$ |\epsilon(\Delta_t)| = \left| \frac{\rho''(c_t)}{2}\Delta_t^2 \right| \leq \frac{M}{2}(2G\delta)^2 = 2MG^2\delta^2 $$

Taking the expectation of the gradient estimator $\hat{g}_t$ conditioned on the current state $w_t$, we first use the Law of Total Expectation (the tower property) to condition on the sampled direction $u_t$:
$$ \mathbb{E}[\hat{g}_t | w_t] = \mathbb{E}_{u_t \sim \mathbb{S}^{d-1}} \left[ \mathbb{E}_{o_t}[\hat{g}_t | w_t, u_t] \big| w_t \right] $$

Conditioned on both $w_t$ and $u_t$, the queried points $w^+_t$ and $w^-_t$ are deterministic. The only randomness is the oracle's feedback $o_t$. By Definition~\ref{def:dueling_oracle}, the inner expectation evaluates to:
$$ \mathbb{E}_{o_t}[\hat{g}_t | w_t, u_t] = -\frac{d}{2\gamma\delta}u_t \cdot \mathbb{E}_{o_t}[o_t | w_t, u_t] = -\frac{d}{2\gamma\delta}\rho(\Delta_t)u_t. $$
Substituting this back into the outer expectation and applying the exact expansion $\rho(\Delta_t) = -\gamma\Delta_t + \epsilon(\Delta_t)$ yields:
\begin{align*}
    \mathbb{E}[\hat{g}_t | w_t] &= \mathbb{E}_{u_t \sim \mathbb{S}^{d-1}}\left[ -\frac{d}{2\gamma\delta}\rho(\Delta_t)u_t \big| w_t \right] \\
    &= -\frac{d}{2\gamma\delta}\mathbb{E}_{u_t}[-\gamma\Delta_t u_t | w_t] - \frac{d}{2\gamma\delta}\mathbb{E}_{u_t}[\epsilon(\Delta_t)u_t | w_t] \\
    &= \frac{d}{2\delta}\mathbb{E}_{u_t}[\Delta_t u_t | w_t] + B_t
\end{align*}
where we define the bias vector as $B_t = -\frac{d}{2\gamma\delta}\mathbb{E}_{u_t}[\epsilon(\Delta_t)u_t | w_t]$. To bound its magnitude, we apply the $L_2$ norm and use the triangle inequality for expectations:
$$ \|B_t\|_2 \leq \frac{d}{2\gamma\delta}\mathbb{E}_{u_t}\left[ |\epsilon(\Delta_t)| \cdot \|u_t\|_2 \big| w_t \right] $$

Since $u_t$ is sampled from the unit sphere $\mathbb{S}^{d-1}$, its norm is exactly $\|u_t\|_2 = 1$. Substituting this and the remainder bound $|\epsilon(\Delta_t)| \leq 2MG^2\delta^2$ established earlier yields:
$$ \|B_t\|_2 \leq \frac{d}{2\gamma\delta}\mathbb{E}_{u_t}\left[ 2MG^2\delta^2 \cdot 1 \big| w_t \right] = \frac{dMG^2}{\gamma}\delta $$

By Lemma~\ref{lem:gradient_extraction}, the first term perfectly matches the gradient of the smoothed loss, giving:
$$ \mathbb{E}[\hat{g}_t | w_t] = \nabla \tilde{f}_t(w_t) + B_t $$
where $\tilde{f}_t(w) = \mathbb{E}_{v \sim \text{Unif}(\mathcal{B}^d)}[f_t(w + \delta v)]$ is the spherically smoothed loss. This completes the proof.
\end{proof}

\section{Proof of Theorem~\ref{thm:query_regret_transfer_interval}}
\label{sec:proof_of_thm:query_regret_transfer_interval}
\begin{proof}
Let $\Delta_t = f_t(w_t + \delta u_t) - f_t(w_t - \delta u_t)$. Because $f_t$ is $G$-Lipschitz and the perturbation distance is $\|2\delta u_t\|_2 = 2\delta$, the function difference is bounded by $|\Delta_t| \le 2G\delta \le 2G$. 

By Lemma~\ref{lem:gradient_estimator}, the conditional expectation of our gradient estimator satisfies:
$$ \mathbb{E}[\hat{g}_t | w_t] = \nabla \tilde{f}_t(w_t) + B_t $$
where the bias is bounded by $\|B_t\|_2 \leq \frac{dMG^2}{\gamma}\delta$, and $\tilde{f}_t$ is the smoothed loss function.

By Lemma~\ref{lem:smoothed_convexity}, the smoothed loss $\tilde{f}_t$ preserves the convexity of $f_t$. Therefore, we have $\tilde{f}_t(w_t) - \tilde{f}_t(w) \le \langle \nabla \tilde{f}_t(w_t), w_t - w \rangle$ for any $w \in \mathcal{K}_\delta$. We substitute the expected gradient with our biased surrogate estimator $\mathbb{E}[\hat{g}_t \mid w_t] - B_t$:
\begin{align*}
    \mathbb{E}[\tilde{f}_t(w_t) - \tilde{f}_t(w)] &\le \mathbb{E}[\langle \hat{g}_t - B_t, w_t - w \rangle] \\
    &\le \mathbb{E}[\langle \hat{g}_t, w_t - w \rangle] + \|B_t\|_2 \|w_t - w\|_2 \\
    &\le \mathbb{E}[\ell_t(w_t) - \ell_t(w)] + \frac{dMG^2 D}{\gamma}\delta
\end{align*}

Because $\ell_t(w)$ is a convex function, we apply the standard convex regret guarantee of algorithm $\mathcal{A}$ to the sequence $\{\ell_t\}_{t \in I}$. Evaluating this pointwise at $w = \Pi_{\mathcal{K}_\delta}(w_t^*) \in W_{I, \delta}^*$ for each step $t \in I$, and taking the expectation over the sum yields:
\begin{align}
\label{eq:base_regret_surrogate}
    \mathbb{E}\left[ \sum_{t \in I} \tilde{f}_t(w_t) \right] - \sum_{t \in I} \tilde{f}_t(\Pi_{\mathcal{K}_\delta}(w_t^*)) \le \mathbb{E}[R_{\mathcal{A}}(W_{I, \delta}^*, I)] + \frac{d M G^2 D}{\gamma} \delta |I|
\end{align}

To bound the regret on the true loss functions, we decompose the difference between the algorithm's performance and the optimal comparator at each step $t \in I$ by adding and subtracting the smoothed function values $\tilde{f}_t(w_t)$ and $\tilde{f}_t(\Pi_{\mathcal{K}_\delta}(w_t^*))$, as well as the true loss at the projected comparator $f_t(\Pi_{\mathcal{K}_\delta}(w_t^*))$:
\begin{align*}
    f_t(w_t) - f_t(w_t^*) &= \underbrace{f_t(w_t) - \tilde{f}_t(w_t)}_{\text{Approximation Error at } w_t} 
     + \underbrace{\tilde{f}_t(w_t) - \tilde{f}_t(\Pi_{\mathcal{K}_\delta}(w_t^*))}_{\text{Surrogate Regret}} \\
    &\quad + \underbrace{\tilde{f}_t(\Pi_{\mathcal{K}_\delta}(w_t^*)) - f_t(\Pi_{\mathcal{K}_\delta}(w_t^*))}_{\text{Approximation Error at } \Pi_{\mathcal{K}_\delta}(w_t^*)} 
     + \underbrace{f_t(\Pi_{\mathcal{K}_\delta}(w_t^*)) - f_t(w_t^*)}_{\text{Boundary Projection Error}}
\end{align*}

By Lemma~\ref{lem:smoothing_approximation}, the approximation error of the smoothed loss function is uniformly bounded by $|f_t(w) - \tilde{f}_t(w)| \le G\delta$ for all $w \in \mathbb{R}^d$. Thus, the first and third terms are each bounded by $G\delta$.

For the fourth term, because $f_t$ is $G$-Lipschitz, the penalty for projecting the true comparator $w_t^*$ into the shrunken domain $\mathcal{K}_\delta$ is $f_t(\Pi_{\mathcal{K}_\delta}(w_t^*)) - f_t(w_t^*) \le G \|\Pi_{\mathcal{K}_\delta}(w_t^*) - w_t^*\|_2$. By standard convex geometry, since $0 \in \mathcal{K}$ and $\mathcal{K}$ contains a unit ball, the $\delta$-shrunken domain contains the scaled point $(1-\delta)w_t^*$. Because projection onto a convex set reduces distance, the projection error is bounded by $\|\Pi_{\mathcal{K}_\delta}(w_t^*) - w_t^*\|_2 \le \|(1-\delta)w_t^* - w_t^*\|_2 = \delta \|w_t^*\|_2 \le \delta D$. Thus, this fourth term is bounded by $G\delta D$.

Substituting these upper bounds into our decomposition yields:
$$ f_t(w_t) - f_t(w_t^*) \le \left( \tilde{f}_t(w_t) - \tilde{f}_t(\Pi_{\mathcal{K}_\delta}(w_t^*)) \right) + 2G\delta + G\delta D $$

Summing this inequality over the entire interval $I$ and taking the expectation gives:
$$ \mathbb{E}\left[ \sum_{t \in I} f_t(w_t) \right] - \sum_{t \in I} f_t(w_t^*) \le \mathbb{E}\left[ \sum_{t \in I} \left( \tilde{f}_t(w_t) - \tilde{f}_t(\Pi_{\mathcal{K}_\delta}(w_t^*)) \right) \right] + 2G\delta|I| + G\delta D|I| $$

Substituting the surrogate regret bound from Equation~\eqref{eq:base_regret_surrogate} perfectly bounds the remaining sum, yielding the central iterate interval regret:
\begin{align}
\label{eq:central_regret}
    \mathbb{E}\left[ \sum_{t \in I} f_t(w_t) \right] - \sum_{t \in I} f_t(w_t^*) \le \mathbb{E}[R_{\mathcal{A}}(W_{I, \delta}^*, I)] + \frac{d M G^2 D}{\gamma} \delta |I| + 2G\delta|I| + G\delta D|I|
\end{align}

Finally, evaluating the $G$-Lipschitz loss at the queried points gives $\frac{f_t(w_t^+) + f_t(w_t^-)}{2} \le f_t(w_t) + G\delta$. Summing this over $I$ adds exactly $G\delta|I|$ to \eqref{eq:central_regret}, yielding the final interval regret bound:
$$ \mathbb{E}\left[ \sum_{t \in I} \frac{f_t(w_t^+) + f_t(w_t^-)}{2} \right] - \sum_{t \in I} f_t(w_t^*) \le \mathbb{E}[R_{\mathcal{A}}(W_{I, \delta}^*, I)] + \delta|I| \left( G(D + 3) + \frac{d M G^2 D}{\gamma} \right) $$

This completes the proof.
\end{proof}

\section{Improved Regret for Smooth Objectives via Ellipsoidal Estimators}
\label{sec:proofs_ellipsoidal}

Before diving into the details, we introduce the self-concordant barriers commonly used in Follow-The-Regularized-Leader (FTRL) analysis to act as regularizers to keep iterates inside the domain. 

\begin{definition}[$\nu$-Self-Concordant Barrier]
A function $\Phi: \text{int}(\mathcal{K}) \to \mathbb{R}$ is a $\nu$-self-concordant barrier for a bounded convex domain $\mathcal{K} \subset \mathbb{R}^d$ with non-empty interior if it is three times continuously differentiable, blows up at the boundary $\partial\mathcal{K}$ (i.e., $\Phi(x) \to \infty$ as $x \to \partial\mathcal{K}$), and satisfies the following properties:
\begin{enumerate}
    \item \textbf{Self-Concordance:} $|\nabla^3\Phi(x)[h,h,h]| \le 2 \|h\|_{\Phi, x}^3$ for all $x \in \text{int}(\mathcal{K})$ and $h \in \mathbb{R}^d$, where $\|h\|_{\Phi, x} = \sqrt{\langle h, \nabla^2 \Phi(x) h \rangle}$ is the local norm.
    \item \textbf{Barrier Parameter:} $|\langle \nabla \Phi(x), h \rangle| \le \sqrt{\nu} \|h\|_{\Phi, x}$ for all $x \in \text{int}(\mathcal{K})$ and $h \in \mathbb{R}^d$.
\end{enumerate}
\end{definition}

In zero-order online learning, FTRL with standard Euclidean regularizers may push iterates to the boundary of the domain $\mathcal{K}$, making symmetric exploration steps infeasible without exiting the domain or introducing projection biases. Self-concordant barriers naturally resolve this by forcing iterates strictly into the interior. The Hessian $\nabla^2 \Phi(x)$ defines a local Dikin ellipsoid $\mathcal{E}_x(r) = \{y \in \mathbb{R}^d : \|y-x\|_{\Phi, x} \le r\}$ for any $r < 1$ that acts as a domain-adaptive exploration shape, remaining entirely within the interior of $\mathcal{K}$. This ensures safe and unbiased gradient estimation.

Since we unify the analysis for both convex and strongly-convex objectives, we construct a sampling matrix $A_t$ (e.g., as in Algorithm~\ref{alg:dueling_ellipsoidal}) based on the inverse square root of the regularized Hessian. Consequently, we formally define the \textbf{local dual norm} with respect to the sampling matrix as $\|g\|_{A_t,*} = \|A_t g\|_2 = \sqrt{\langle g, A_t^2 g \rangle}$. The local dual norm plays a central role in our regret analysis because it allows us to measure the magnitude of the gradient estimator $\hat{g}_t$ in a way that naturally adapts to the geometry of the barrier. Bounding the gradient updates in the local dual norm instead of the standard Euclidean norm perfectly cancels out the condition number penalties introduced by the sampling matrix $A_t$, ensuring that the gradient steps remain stable and well-conditioned even as the iterates approach the boundary.

The use of self-concordant barriers as the FTRL regularizer for zero-order online learning was pioneered by \cite{abernethy2008competing} and extended in \cite{saha2011improved} to smooth objectives, achieving $\mathcal{O}(T^{2/3})$ regret. Further leveraging the inverse Hessian geometry, \cite{hazan2014bandit} demonstrated how to achieve $\tilde{\mathcal{O}}(\sqrt{T})$ regret bounds for strongly convex and smooth loss functions in Bandit Convex Optimization.


To prove Theorem~\ref{thm:universal_ellipsoidal}, we first establish that the Dueling Ellipsoidal Estimator constructed in Algorithm~\ref{alg:dueling_ellipsoidal} provides an approximately unbiased gradient for the ellipsoidally-smoothed loss function.

\begin{lemma}[Properties of the Dueling Ellipsoidal Estimator]
\label{lem:ellipsoidal_estimator}
Let $A_t$ be the sampling matrix at step $t$ and define the ellipsoidally-smoothed function as $\hat{f}_t(w) = \mathbb{E}_{v \sim \text{Unif}(\mathbb{B})}[f_t(w + \delta A_t v)]$. 
Let $\Delta_t = f_t(w_t + \delta A_t u_t) - f_t(w_t - \delta A_t u_t)$. 
Given Assumption~\ref{assum:higher_order_smoothness} and the oddness of the transfer function $\rho$, the conditional expectation of the estimator $\hat{g}_t = \frac{d}{2\gamma\delta} o_t A_t^{-1} u_t$ satisfies:
$$ \mathbb{E}[\hat{g}_t \mid w_t] = \nabla \hat{f}_t(w_t) + P_t + \tilde{B}_t $$
where the principal bias vector is $P_t = \frac{2d\delta^2 \rho'''(0)}{3\gamma} \mathbb{E}_{u_t} [ \langle \nabla f_t(w_t), A_t u_t \rangle^3 A_t^{-1} u_t \mid w_t ]$
and the explicit additive bias vector is $\tilde{B}_t$. 
The Euclidean norm of $\tilde{B}_t$ is deterministically bounded by $\|\tilde{B}_t\|_2 \le \Psi_t \|A_t^{-1}\|_2$, where:
$$ \Psi_t = \frac{d \delta^3}{\gamma} \|A_t\|_2^4 \left[ |\rho'''(0)| \left( G^2 L + \frac{1}{2} \delta G L^2 \|A_t\|_2 + \frac{1}{12} \delta^2 L^3 \|A_t\|_2^2 \right) + \frac{M_4}{3} G^4 \right] $$
and $M_4 = \sup_x |\rho^{(4)}(x)|$. Furthermore, the local dual norm squared of the estimator is deterministically bounded by $\|\hat{g}_t\|_{t,*}^2 = \frac{d^2}{4\gamma^2\delta^2}$.
\end{lemma}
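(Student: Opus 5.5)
I will follow the template of the proof of Lemma~\ref{lem:gradient_estimator}, with the first-order Taylor expansion of $\rho$ replaced by a third-order expansion with fourth-order remainder. The first step conditions on $(w_t,u_t)$. Then $\mathbb{E}[o_t\mid w_t,u_t]=\rho(\Delta_t)$. By Observation~\ref{obs:pref-sym}, $\rho$ is odd, so $\rho(0)=\rho''(0)=0$. Taylor's theorem with Lagrange remainder then gives
$$\rho(\Delta_t)=-\gamma\Delta_t+\tfrac{\rho'''(0)}{6}\Delta_t^3+\tfrac{\rho^{(4)}(c_t)}{24}\Delta_t^4,$$
with $c_t$ between $0$ and $\Delta_t$, so that $|c_t|\le 2G$ (using the Lipschitz bound and $\|\delta A_tu_t\|$ small enough for the queries to stay in the Dikin ellipsoid). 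Multiplying by the estimator's scaling and $A_t^{-1}u_t$, and taking $\mathbb{E}_{u_t}$, produces three terms.

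\textbf{Linear term.} The linear term is $\frac{d}{2\delta}\mathbb{E}[\Delta_tA_t^{-1}u_t]$. To identify it, I apply Lemma~\ref{lem:gradient_extraction} to $g(y)=f_t(w_t+A_t(y-w_t))$, for which $\hat f_t(w)=\tilde g$ at the corresponding point. The chain rule gives $\nabla\hat f_t(w_t)=A_t^{-1}\nabla\tilde g$ after accounting for the linear change of variables; more precisely, I will use the standard ellipsoidal Stokes identity $\frac{d}{\delta}\mathbb{E}[f(w+\delta A u)A^{-1}u]=\nabla\hat f(w)$ (as in Hazan--Levy), combined with the $u\to-u$ symmetry argument from Lemma~\ref{lem:gradient_extraction}. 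This yields exactly $\nabla\hat f_t(w_t)$.

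\textbf{Cubic term.} The cubic term is $-\frac{d\rho'''(0)}{12\gamma\delta}\mathbb{E}[\Delta_t^3A_t^{-1}u_t]$. By $L$-smoothness, I write $\Delta_t=2\delta\langle\nabla f_t(w_t),A_tu_t\rangle+r_t$. Because $\Delta_t$ is odd in $u_t$, the quadratic parts of the two one-sided expansions cancel, and $|r_t|\le L\delta^2\|A_t\|_2^2$ (up to the constant from the two one-sided Taylor bounds). Expanding $(a+r)^3=a^3+3a^2r+3ar^2+r^3$, the $a^3$ piece gives $\frac{8\delta^3}{...}$, which matches $P_t$ up to the sign/constant convention stated. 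The remaining three pieces are bounded deterministically: $|a|\le 2\delta G\|A_t\|_2$ and $|r|\le \delta^2L\|A_t\|_2^2$, with $\|A_t^{-1}u_t\|\le\|A_t^{-1}\|_2$. These produce the $G^2L$, $\delta GL^2\|A_t\|$, and $\delta^2L^3\|A_t\|^2$ terms, each carrying $\delta^3\|A_t\|_2^4$ after the prefactor $\frac{d}{\gamma\delta}$ is absorbed. I expect the stated coefficients $1,\tfrac12,\tfrac1{12}$ to come from the binomial constants together with the $\tfrac{1}{12}$ prefactor, and I will check them carefully.

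\textbf{Quartic term and dual norm.} For the quartic term, $|\Delta_t|\le 2G\delta\|A_t\|_2$ and $|\rho^{(4)}|\le M_4$ bound its magnitude by $\frac{d}{2\gamma\delta}\cdot\frac{M_4}{24}(2G\delta\|A_t\|_2)^4\|A_t^{-1}\|_2=\frac{d\delta^3}{\gamma}\frac{M_4}{3}G^4\|A_t\|_2^4\|A_t^{-1}\|_2$, which matches $\Psi_t$. Collecting the cubic and quartic remainders into $\tilde B_t$ gives the stated bound. The dual-norm claim is immediate: $\|A_t\hat g_t\|_2=\frac{d}{2\gamma\delta}|o_t|\|u_t\|_2$.

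\textbf{Main obstacle.} The delicate step is controlling $r_t$ sharply enough, via the symmetric-difference cancellation, and then reconciling the exact constants and signs in $P_t$ and $\Psi_t$. I would first derive everything with generic constants and then track the coefficients.
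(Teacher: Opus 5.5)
Your proposal follows the paper's proof step for step: conditioning on $u_t$, a fourth-order Taylor expansion of $\rho$ with $\rho(0)=\rho''(0)=0$, the ellipsoidal Stokes identity for the linear term, the split $\Delta_t = 2\delta\langle \nabla f_t(w_t), A_t u_t\rangle + E_t$ with $|E_t|\le L\delta^2\|A_t\|_2^2$, and the binomial bookkeeping that gives the coefficients $\tfrac{12}{12},\tfrac{6}{12},\tfrac{1}{12}$ and $\tfrac{16}{48}=\tfrac13$. The bound on $E_t$ is just the triangle inequality on the two one-sided $L$-smoothness bounds; no quadratic cancellation is involved.

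The only loose end is the sign. The statement's $+\nabla\hat f_t$ and $+P_t$ with $\hat g_t = +\frac{d}{2\gamma\delta}o_tA_t^{-1}u_t$ correspond to the convention $\rho'(0)=+\gamma$, which the paper's proof adopts (at odds with Assumption~\ref{ass:oracle_general}). Your mix of $\rho'(0)=-\gamma$ with the minus-sign estimator of Algorithm~\ref{alg:duel_to_fo_wrapper} instead produces $-P_t$. Fixing one convention makes everything match, and the $\tilde B_t$ bound does not depend on the sign.
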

\begin{proof}
Let $\Delta_t = f_t(w_t + \delta A_t u_t) - f_t(w_t - \delta A_t u_t)$. 
Because $f_t$ is $L$-smooth, we can write $\Delta_t = 2\delta \langle \nabla f_t(w_t), A_t u_t \rangle + E_t$, where the residual $E_t$ satisfies $|E_t| \le L \delta^2 \|A_t u_t\|_2^2 \le L \delta^2 \|A_t\|_2^2$.
Because $f_t$ is $G$-Lipschitz, we also have $|\langle \nabla f_t(w_t), A_t u_t \rangle| \le G \|A_t u_t\|_2 \le G\|A_t\|_2$.
Cubing $\Delta_t$ yields:
$$ \Delta_t^3 = (2\delta)^3 \langle \nabla f_t(w_t), A_t u_t \rangle^3 + R_t $$
where the exact residual is $R_t = 3 (2\delta \langle \nabla f_t(w_t), A_t u_t \rangle)^2 E_t + 3 (2\delta \langle \nabla f_t(w_t), A_t u_t \rangle) E_t^2 + E_t^3$.
Thus,
\begin{align*}
    |R_t| &\le 3 (2\delta G \|A_t\|_2)^2 (L \delta^2 \|A_t\|_2^2) + 3 (2\delta G \|A_t\|_2) (L \delta^2 \|A_t\|_2^2)^2 + (L \delta^2 \|A_t\|_2^2)^3 \\
    &= 12 \delta^4 G^2 L \|A_t\|_2^4 + 6 \delta^5 G L^2 \|A_t\|_2^5 + \delta^6 L^3 \|A_t\|_2^6.
\end{align*}


By Taylor's Theorem, expanding the transfer function $\rho$ around 0 to the fourth order yields an exact equality:
$$ \rho(\Delta_t) = \rho(0) + \rho'(0)\Delta_t + \frac{\rho''(0)}{2}\Delta_t^2 + \frac{\rho'''(0)}{6}\Delta_t^3 + \frac{\rho^{(4)}(c_t)}{24}\Delta_t^4 $$
for some value $c_t$ between $0$ and $\Delta_t$. Because $\rho$ is an odd function (established in Observation~\ref{obs:pref-sym}), we evaluate at the origin to find $\rho(0) = \rho''(0) = 0$. Using the assumed slope $\rho'(0) = \gamma$, the expansion simplifies perfectly to:
$$ \rho(\Delta_t) = \gamma\Delta_t + \frac{\rho'''(0)}{6}\Delta_t^3 + \epsilon_4(\Delta_t) $$
where the 4th-order remainder is explicitly given by $\epsilon_4(\Delta_t) = \frac{\rho^{(4)}(c_t)}{24}\Delta_t^4$. 
Using the direct $G$-Lipschitz bound $|\Delta_t| \le 2\delta G \|A_t\|_2$, the remainder is deterministically bounded by:
$$ |\epsilon_4(\Delta_t)| \le \frac{M_4}{24} \left( 2\delta G \|A_t\|_2 \right)^4 = \frac{2}{3} M_4 \delta^4 G^4 \|A_t\|_2^4. $$


Taking the expectation of the gradient estimator $\hat{g}_t$ conditioned on the current state $w_t$, we first use the Law of Total Expectation to condition on the sampled direction $u_t$:
$$ \mathbb{E}[\hat{g}_t \mid w_t] = \mathbb{E}_{u_t \sim \mathbb{S}^{d-1}} \left[ \mathbb{E}_{o_t}[\hat{g}_t \mid w_t, u_t] \big| w_t \right] $$

Conditioned on both $w_t$ and $u_t$, the queried points are deterministic. The only randomness is the oracle's feedback $o_t$. Evaluating the inner expectation yields:
$$ \mathbb{E}_{o_t}[\hat{g}_t \mid w_t, u_t] = \frac{d}{2\gamma\delta} A_t^{-1} u_t \cdot \mathbb{E}_{o_t}[o_t \mid w_t, u_t] = \frac{d}{2\gamma\delta} \rho(\Delta_t) A_t^{-1} u_t $$

Substituting this back into the outer expectation and applying the exact expansion gives:
\begin{align*}
    \mathbb{E}[\hat{g}_t \mid w_t] &= \mathbb{E}_{u_t \sim \mathbb{S}^{d-1}} \left[ \frac{d}{2\gamma\delta} \left( \gamma\Delta_t + \frac{\rho'''(0)}{6}\Delta_t^3 + \epsilon_4(\Delta_t) \right) A_t^{-1} u_t \big| w_t \right] \\
    &= \frac{d}{2\delta} \mathbb{E}_{u_t} [ \Delta_t A_t^{-1} u_t \mid w_t ] + \frac{d}{2\gamma\delta} \mathbb{E}_{u_t} \left[ \left( \frac{\rho'''(0)}{6}\Delta_t^3 + \epsilon_4(\Delta_t) \right) A_t^{-1} u_t \mid w_t \right]
\end{align*}

By the ellipsoidal smoothing identity (e.g., Corollary 6 in \cite{hazan2014bandit}), the first term matches the exact gradient of the ellipsoidally-smoothed loss, $\nabla \hat{f}_t(w_t)$. Substituting $\Delta_t^3 = 8\delta^3 \langle \nabla f_t(w_t), A_t u_t \rangle^3 + R_t$, we cleanly separate the expectation into the principal part $P_t$ and the explicit additive bias term $\tilde{B}_t$:
$$ \mathbb{E}[\hat{g}_t \mid w_t] = \nabla \hat{f}_t(w_t) + P_t + \tilde{B}_t $$
where $P_t = \frac{2d\delta^2 \rho'''(0)}{3\gamma} \mathbb{E}_{u_t} [ \langle \nabla f_t(w_t), A_t u_t \rangle^3 A_t^{-1} u_t \mid w_t ]$ and $\tilde{B}_t = \frac{d}{2\gamma\delta} \mathbb{E}_{u_t} \left[ \left( \frac{\rho'''(0)}{6} R_t + \epsilon_4(\Delta_t) \right) A_t^{-1} u_t \mid w_t \right]$.

By taking the Euclidean norm of $\tilde{B}_t$ and grouping the bounds derived above, we obtain:
\begin{align*}
    \|\tilde{B}_t\|_2 &\le \frac{d}{2\gamma\delta} \left( \frac{|\rho'''(0)|}{6} |R_t|_{\max} + |\epsilon_4(\Delta_t)|_{\max} \right) \|A_t^{-1}\|_2 \le \Psi_t \|A_t^{-1}\|_2
\end{align*}
with $\Psi_t = \frac{d \delta^3}{\gamma} \|A_t\|_2^4 \left[ |\rho'''(0)| \left( G^2 L + \frac{1}{2} \delta G L^2 \|A_t\|_2 + \frac{1}{12} \delta^2 L^3 \|A_t\|_2^2 \right) + \frac{M_4}{3} G^4 \right]$.


Finally, we determine the magnitude of the estimator in the local dual norm. Since the oracle feedback is bounded by $o_t \in \{-1, 1\}$ (so $o_t^2 = 1$) and $u_t$ is drawn uniformly from the unit sphere $\mathbb{S}^{d-1}$ (meaning $\|u_t\|_2^2 = 1$), the squared local dual norm is deterministically bounded by:
\begin{align*}
    \|\hat{g}_t\|_{t,*}^2 &= \|A_t \hat{g}_t\|_2^2 = \left\| A_t \left( \frac{d}{2\gamma\delta} o_t A_t^{-1} u_t \right) \right\|_2^2 \\
    &= \frac{d^2}{4\gamma^2\delta^2} o_t^2 \|u_t\|_2^2 = \frac{d^2}{4\gamma^2\delta^2}
\end{align*}
This completes the proof.
\end{proof}

Before proceeding to the main regret analysis, we establish a critical technical lemma regarding the fourth moments of the uniform distribution over the unit sphere. As we will soon demonstrate, this geometric identity serves as the mathematical linchpin that allows the condition number penalty to perfectly cancel out of the dueling bias expectation.

\begin{lemma}[4th-Moment Spherical Identity]
\label{lem:4th_moment_spherical}
For any constant vectors $g, v \in \mathbb{R}^d$ and a vector $u$ drawn uniformly from the unit sphere $\mathbb{S}^{d-1}$, we have:
$$ \mathbb{E}_{u} [ \langle g, u \rangle^3 \langle v, u \rangle ] = \frac{3}{d(d+2)} \|g\|_2^2 \langle g, v \rangle. $$
\end{lemma}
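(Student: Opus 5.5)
The plan is to reduce the claim to the standard fourth-moment tensor of the uniform distribution on $\mathbb{S}^{d-1}$. Write $u=(u_1,\dots,u_d)$ and expand
$$ \mathbb{E}_u[\langle g,u\rangle^3\langle v,u\rangle] = \sum_{i,j,k,l} g_i g_j g_k v_l\, \mathbb{E}[u_i u_j u_k u_l]. $$
The first step is to pin down the moment tensor $T_{ijkl}=\mathbb{E}[u_iu_ju_ku_l]$.

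By invariance under coordinate sign flips, $T_{ijkl}$ vanishes unless the indices pair up. By invariance under permutations of coordinates, only two constants remain: $a=\mathbb{E}[u_1^4]$ and $b=\mathbb{E}[u_1^2u_2^2]$. Rotational invariance forces $T$ to be proportional to the symmetrized identity,
$$ T_{ijkl} = b(\delta_{ij}\delta_{kl}+\delta_{ik}\delta_{jl}+\delta_{il}\delta_{jk}), \qquad a = 3b. $$
One way to see $a=3b$ is that $\mathbb{E}[\langle e,u\rangle^4]$ must be the same for $e=e_1$ and for $e=(e_1+e_2)/\sqrt2$. Alternatively, both values follow from Folland's formula \cite{folland2001integrate}.

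To fix $b$, use $\|u\|_2=1$, so $1=\mathbb{E}[(\sum_i u_i^2)^2]=da+d(d-1)b=d(d+2)b$. This gives $b=\frac{1}{d(d+2)}$.

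Substituting the tensor back in, each of the three pairings contributes $\|g\|_2^2\langle g,v\rangle$:
$$ \mathbb{E}_u[\langle g,u\rangle^3\langle v,u\rangle] = \frac{3}{d(d+2)}\,\|g\|_2^2\langle g,v\rangle. $$

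As a cleaner route for this step, I would note that $\phi(v)=\mathbb{E}[\langle g,u\rangle^3\langle v,u\rangle]$ is linear in $v$. Its gradient $\mathbb{E}[\langle g,u\rangle^3u]$ is invariant under rotations fixing $g$, so it must be parallel to $g$. Hence it suffices to compute $\phi(g)=\mathbb{E}[\langle g,u\rangle^4]=\|g\|_2^4\,\mathbb{E}[u_1^4]$. Since $\mathbb{E}[u_1^4]=3b=\frac{3}{d(d+2)}$, the stated identity follows.

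The main obstacle is justifying the moment values rigorously rather than by symmetry hand-waving. For this I would use the Gaussian trick. Write $u=X/\|X\|$ with $X\sim\mathcal N(0,I_d)$, where $\|X\|$ is independent of $u$. Then $\mathbb{E}[X_1^4]=3=\mathbb{E}\|X\|^4\,\mathbb{E}[u_1^4]$, and $\mathbb{E}\|X\|^4=d(d+2)$ because $\|X\|^2$ is chi-squared with $d$ degrees of freedom. This gives $\mathbb{E}[u_1^4]=\frac{3}{d(d+2)}$ directly.
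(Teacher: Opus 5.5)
Your proposal is correct and is essentially the paper's argument: expand $\mathbb{E}_u[\langle g,u\rangle^3\langle v,u\rangle]$ over the fourth-moment tensor of $u$, keep only the three index pairings, and use $\mathbb{E}[u_i^4]=\frac{3}{d(d+2)}$ and $\mathbb{E}[u_i^2u_j^2]=\frac{1}{d(d+2)}$. The differences are that you derive these constants yourself (via $a=3b$ and $\mathbb{E}\|u\|_2^4=1$, or the Gaussian trick) where the paper cites Folland, and that your form $T_{ijkl}=b(\delta_{ij}\delta_{kl}+\delta_{ik}\delta_{jl}+\delta_{il}\delta_{jk})$ shows why the diagonal $i=j=k=l$ comes out right even though the paper's count hits it in all three pairings; also, in your optional invariance shortcut you should use the orthogonal map that fixes $g$ and negates $g^{\perp}$ (the uniform measure is $O(d)$-invariant), since for $d=2$ the only rotation fixing $g\neq 0$ is the identity.
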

\begin{proof}
We can write the inner product in summation notation over the indices $i, j, k, l \in \{1, \dots, d\}$ as:
\begin{align*}
    \mathbb{E}_{u} [ \langle g, u \rangle^3 \langle v, u \rangle ] &= \sum_{i,j,k,l} g_i g_j g_k v_l \mathbb{E}_{u}[u_i u_j u_k u_l].
\end{align*}
By the symmetry of the uniform distribution on the sphere, the expectation of any odd power of $u_i$ is exactly zero. Thus, the expectation $\mathbb{E}_{u}[u_i u_j u_k u_l]$ is non-zero only when the indices are paired perfectly. There are exactly three pairing configurations: (1) $i=j$ and $k=l$, (2) $i=k$ and $j=l$, and (3) $i=l$ and $j=k$.
Using standard isotropic trace constraints (see e.g., \cite{folland2001integrate}), the fourth moments are known to be $\mathbb{E}[u_i^4] = \frac{3}{d(d+2)}$ and $\mathbb{E}[u_i^2 u_j^2] = \frac{1}{d(d+2)}$ for $i \neq j$. Thus, for each pairing configuration, summing over the indices evaluates to $\frac{1}{d(d+2)} \|g\|^2 \langle g, v \rangle$. Summing over the three identical branches yields the stated identity exactly:
\begin{align*}
    \mathbb{E}_{u} [ \langle g, u \rangle^3 \langle v, u \rangle ] &= \frac{3}{d(d+2)} \|g\|^2 \langle g, v \rangle.
\end{align*}
\end{proof}

With this spherical integration identity in hand, we are now fully equipped to prove the main universal regret theorem. The fundamental insight in the following proof is to refrain from independently bounding the magnitude of the bias vector. Instead, we directly evaluate the expectation of its inner product with the comparator distance, allowing the isotropic symmetry of the sphere to gracefully annihilate the ill-conditioned projection matrices.

\begin{proof}[Proof of Theorem~\ref{thm:universal_ellipsoidal} (Universal Ellipsoidal Regret)]
Let $w^* = \arg \min _{u \in \mathcal{K}} \sum ^T_{t=1} f_t(u)$. 
We denote the ellipsoidally-smoothed function as $\hat{f}_t(w) = \mathbb{E}_{v \sim \text{Unif}(\mathbb{B})}[f_t(w + \delta A_t v)]$.
We decompose the expected static regret against $w^*$ into three components: the FTRL regret on the proxy linear losses, the smoothing penalty, and the surrogate bias penalty. 

We start from the definition of the expected static regret:
\begin{equation}
    \mathbb{E}[\text{Regret}_S(T)] = \mathbb{E}\left[ \sum_{t=1}^T \left( \frac{f_t(w_t^+) + f_t(w_t^-)}{2} - f_t(w^*) \right) \right].
\end{equation}
By $L$-smoothness, the function evaluations at the perturbed points $w_t^\pm$ can be bounded around $w_t$, meaning $\frac{f_t(w_t^+) + f_t(w_t^-)}{2} \le f_t(w_t) + \frac{L}{2} \delta^2 \|A_t\|_2^2$. By adding and subtracting the terms $\hat{f}_t(w_t)$ and $\hat{f}_t(w^*)$, where $\hat{f}_t$ is the ellipsoidally-smoothed function, we can decompose the term:
\begin{align}
    f_t(w_t) - f_t(w^*) &= \big(f_t(w_t) - \hat{f}_t(w_t)\big) + \big(\hat{f}_t(w_t) - \hat{f}_t(w^*)\big) + \big(\hat{f}_t(w^*) - f_t(w^*)\big).
\end{align}
By $L$-smoothness, the difference between the ellipsoidally-smoothed function $\hat{f}_t$ and the true function $f_t$ is bounded by $\hat{f}_t(w^*) - f_t(w^*) \le \frac{L}{2} \delta^2 \|A_t\|_2^2$. Because the true loss functions are convex and $\hat{f}_t$ is defined as the expected value over a zero-mean perturbation, Jensen's inequality guarantees that $f_t(w_t) \le \hat{f}_t(w_t)$, meaning $f_t(w_t) - \hat{f}_t(w_t) \le 0$. Therefore, 
\begin{align*}
\mathbb{E}[\text{Regret}_S(T)] 
&= \mathbb{E}\left[ \sum_{t=1}^T \left( \frac{f_t(w_t^+) + f_t(w_t^-)}{2} - f_t(w^*) \right) \right] \\
&= \mathbb{E}\left[ \sum_{t=1}^T \big( \frac{f_t(w^+_t) + f_t(w^-_t)}{2} - f_t(w_t) \big) \right] + \mathbb{E}\left[ \sum_{t=1}^T \big(\hat{f}_t(w_t) - \hat{f}_t(w^*)\big) \right] \\
&+ \mathbb{E}\left[ \sum_{t=1}^T \big(f_t(w_t) - \hat{f}_t(w_t)\big)  + \big(\hat{f}_t(w^*) - f_t(w^*)\big) \right] \\
& \leq \mathbb{E}\left[ \sum_{t=1}^T \big(\hat{f}_t(w_t) - \hat{f}_t(w^*)\big) \right] + L \delta^2 \sum_{t=1}^T\|A_t\|_2^2
\end{align*}

Next, we establish the regret on the smoothed functions. Because the smoothing preserves convexity, we apply the convexity inequality:
\begin{equation}
    \hat{f}_t(w_t) - \hat{f}_t(w^*) \le \langle \nabla \hat{f}_t(w_t), w_t - w^* \rangle.
\end{equation}

By Lemma~\ref{lem:ellipsoidal_estimator},
for gradient estimator $\hat{g}_t$ at $w_t$, we have $\mathbb{E}[\hat{g}_t \mid w_t] = \nabla \hat{f}_t(w_t) + P_t + \tilde{B}_t$, where $P_t = \frac{2d\delta^2 \rho'''(0)}{3\gamma} \mathbb{E}_{u_t} [ \langle \nabla f_t(w_t), A_t u_t \rangle^3 A_t^{-1} u_t \mid w_t ]$. Substituting with the conditional expectation of the gradient estimator, we introduce the linear surrogate regret and the bias terms:
\begin{align}
    \mathbb{E}\big[ \langle \nabla \hat{f}_t(w_t), w_t - w^* \rangle \big] &= \mathbb{E}\big[ \langle \mathbb{E}[\hat{g}_t \mid w_t] - P_t - \tilde{B}_t, w_t - w^* \rangle \big] \nonumber \\
    &= \mathbb{E}\big[ \langle \hat{g}_t, w_t - w^* \rangle \big] - \mathbb{E}\big[ \langle P_t, w_t - w^* \rangle \big] - \mathbb{E}\big[ \langle \tilde{B}_t, w_t - w^* \rangle \big].
\end{align}

Let the base FTRL regret on the proxy linear losses $\hat{g}_t$ be denoted as $R_{\text{FTRL}}(T) = \sum_{t=1}^T \langle \hat{g}_t, w_t - w^* \rangle$. 
To bound the principal bias penalty, we expand the exact inner product:
\begin{align}
    -\langle P_t, w_t - w^* \rangle &= -\frac{2d\delta^2 \rho'''(0)}{3\gamma} \mathbb{E}_{u_t} \left[ \langle \nabla f_t(w_t), A_t u_t \rangle^3 \langle A_t^{-1} u_t, w_t - w^* \rangle \right].
\end{align}
Taking the expectation over the uniform sphere $u_t \sim \mathbb{S}^{d-1}$ and applying Lemma~\ref{lem:4th_moment_spherical} with substitutions $g = A_t \nabla f_t(w_t)$ and $v = A_t^{-1} (w_t - w^*)$, we obtain:
\begin{align*}
\mathbb{E}_{u_t} [ \langle A_t \nabla f_t(w_t), u_t \rangle^3 \langle A_t^{-1} (w_t - w^*), u_t \rangle ] &= \frac{3}{d(d+2)} \|A_t \nabla f_t(w_t)\|_2^2 \langle A_t \nabla f_t(w_t), A_t^{-1} (w_t - w^*) \rangle.
\end{align*}
Because the matrix $A_t$ is symmetric by definition ($A_t = A_t^T$), we can rewrite the inner product via matrix transposition:
\begin{align*}
    \langle A_t \nabla f_t(w_t), A_t^{-1} (w_t - w^*) \rangle &= \nabla f_t(w_t)^T A_t^T A_t^{-1} (w_t - w^*) \\
    &= \nabla f_t(w_t)^T A_t A_t^{-1} (w_t - w^*) \\
    &= \nabla f_t(w_t)^T (w_t - w^*) \\
    &= \langle \nabla f_t(w_t), w_t - w^* \rangle.
\end{align*}
This demonstrates the exact cancellation of the projection matrix $A_t$ and its inverse $A_t^{-1}$, completely eliminating the condition number penalty from the expectation:
\begin{align*}
    \mathbb{E}_{u_t} [ \langle \nabla f_t(w_t), A_t u_t \rangle^3 \langle A_t^{-1} u_t, w_t - w^* \rangle ] &= \frac{3}{d(d+2)} \|A_t \nabla f_t(w_t)\|_2^2 \langle \nabla f_t(w_t), w_t - w^* \rangle.
\end{align*} 
Because $\|\nabla f_t(w_t)\|_2 \le G$, we have $\|A_t \nabla f_t(w_t)\|_2^2 \le G^2 \|A_t\|_2^2$. Bounding the linear product by $\langle \nabla f_t(w_t), w_t - w^* \rangle \le G D$, the exact principal bias penalty is bounded by:
\begin{equation}
    -\langle P_t, w_t - w^* \rangle \le \frac{2d\delta^2 |\rho'''(0)|}{3\gamma} \frac{3}{d(d+2)} G^2 \|A_t\|_2^2 G D = \frac{2 |\rho'''(0)| G^3}{(d+2) \gamma} \delta^2 \|A_t\|_2^2 D.
\end{equation}

For the explicit additive bias term $\tilde{B}_t$, applying Cauchy-Schwarz along with the deterministically bounded norm yields:
\begin{align}
    -\langle \tilde{B}_t, w_t - w^* \rangle \le \|\tilde{B}_t\|_2 \|w_t - w^*\|_2 \le \Psi_t \|A_t^{-1}\|_2 D.
\end{align}

Grouping all the terms together, the total expected static regret reduces exactly to the right-hand side of the Theorem bound:
\begin{equation}
    \mathbb{E}[\text{Regret}_S(T)] 
    \le \mathbb{E}[R_{\text{FTRL}}(T)] 
    + L \delta^2 \sum_{t=1}^T\|A_t\|_2^2 
    + \frac{2 |\rho'''(0)| G^3}{(d+2)\gamma} D \delta^2 \sum_{t=1}^T \|A_t\|_2^2 + D \sum_{t=1}^T \Psi_t \|A_t^{-1}\|_2.
\end{equation}
Summing the base FTRL regret alongside these two bounded penalties yields the stated theorem.
\end{proof}

\begin{proof}[Proof of Corollary~\ref{cor:ellipsoid_smooth} (Smooth Convex)]
When the strong convexity parameter $\sigma = 0$, Algorithm~\ref{alg:dueling_ellipsoidal} relies on a $\nu$-self-concordant barrier function $\Phi(w)$ for the domain $\mathcal{K}$, setting the exploration matrix to $A_t = (\nabla^2 \Phi(w_t))^{-1/2}$. Because $\Phi(w)$ diverges at the boundary, bounding the regret against a boundary point $w^* \in \mathcal{K}$ requires comparing against an interior approximation $w_{\alpha}^* = (1-\alpha)w^* + \alpha w_1$, where $w_1 = \arg\min_{w\in\mathcal{K}}\Phi(w)$ is the analytic center and $\alpha \in (0,1)$ is a shrinkage parameter. We also project the FTRL updates onto the shrunken domain $\mathcal{K}_\alpha = \{w \in \mathcal{K} : \text{dist}(w, \partial\mathcal{K}) \ge \alpha \}$ to rigorously bound the matrix condition numbers. 

\textbf{Bounding the FTRL Surrogate Regret.}
As shown in Theorem 5.2 of \cite{hazan2016introduction}, running FTRL with a self-concordant barrier and learning rate $\eta$ bounds the linear regret against $w_\alpha^*$ by the local variance and barrier penalty:
\begin{align*}
    R_{\text{FTRL}}(w_\alpha^*) = \sum_{t=1}^T \langle \hat{g}_t, w_t - w_\alpha^* \rangle \le \eta \sum_{t=1}^T \|\hat{g}_t\|_{t,*}^2 + \frac{\Phi(w_\alpha^*) - \Phi(w_1)}{\eta}
\end{align*}
By Lemma~\ref{lem:ellipsoidal_estimator}, the variance is bounded by $\|\hat{g}_t\|_{t,*}^2 \le \frac{d^2}{4\gamma^2\delta^2}$. Because $w_\alpha^*$ is an interior approximation defined by the analytic center $w_1$, the fundamental properties of a $\nu$-self-concordant barrier strictly bound the boundary penalty: $\Phi(w_\alpha^*) - \Phi(w_1) \le \nu \log(1/\alpha)$. To translate the regret from the interior comparator $w_\alpha^*$ to the boundary comparator $w^*$, we leverage the $G$-Lipschitz property of the losses. Because the domain has diameter $D$, $\|w_1 - w^*\|_2 \le D$, yielding an expected shift error of:
$$ \sum_{t=1}^T \mathbb{E}[f_t(w_\alpha^*) - f_t(w^*)] \le \sum_{t=1}^T G \|w_\alpha^* - w^*\|_2 = \sum_{t=1}^T G \alpha \|w_1 - w^*\|_2 \le \alpha G D T = \mathcal{O}(\alpha T). $$
Summing these yields:
\begin{align*}
    \mathbb{E}[R_{\text{FTRL}}(w^*)] \le \mathcal{O}\left(\frac{\eta T}{\delta^2} + \frac{\log(1/\alpha)}{\eta} + \alpha T\right).
\end{align*}

\textbf{Bounding the Smoothing Error.}
A standard property of self-concordant barriers on a convex set of diameter $D$ is that the Hessian satisfies $\langle h, \nabla^2 \Phi(w) h \rangle \ge \frac{1}{D^2} \|h\|_2^2$ for any $h$. This uniformly caps the spectral norm of the exploration matrices:
$$ \|A_t\|_2^2 = \|(\nabla^2 \Phi(w_t))^{-1/2}\|_2^2 \le D^2. $$
Thus, the smoothing error contributes $L \delta^2 \sum_{t=1}^T \|A_t\|_2^2 \le \mathcal{O}(\delta^2 D^2 T) = \mathcal{O}(\delta^2 T)$.

\textbf{Bounding the Dueling Bias Penalty.}
With the exact inner-product cancellation, the principal bias error relies only on the term $\sum_{t=1}^T \delta^2 \|A_t\|_2^2 D$. Applying the uniform Hessian bound $\|A_t\|_2^2 \le D^2$ derived in Step 3, the condition number penalty is entirely eliminated and the principal bias penalty is uniformly bounded by $\mathcal{O}(\delta^2 D^3 T) = \mathcal{O}(\delta^2 T)$, completely independent of the boundary shrinkage $\alpha$.
Additionally, we carry the exact additive bias penalty through the sum. Using the explicit definition of $\Psi_t$ and the bound $\|A_t\|_2 \le D$, the explicit new term becomes bounded by:
$$ D \sum_{t=1}^T \Psi_t \|A_t^{-1}\|_2 \le \frac{d D^5 \delta^3}{\gamma} \left[ |\rho'''(0)| \left( G^2 L + \frac{1}{2} \delta G L^2 D + \frac{1}{12} \delta^2 L^3 D^2 \right) + \frac{M_4}{3} G^4 \right] \sum_{t=1}^T \|A_t^{-1}\|_2 $$
Since $\|A_t^{-1}\|_2$ represents the trace of the inverse exploration matrix, which diverges as iterates approach the boundary, we denote the bounded accumulation of this term as an explicit $\delta^3/\gamma$ penalty.

\textbf{Parameter Tuning.}
Summing the components, the total static regret equation explicitly shows the new terms:
\begin{align*}
    \text{Regret}_S(T) &\le \frac{\eta d^2 T}{4\gamma^2\delta^2} + \frac{\nu\log(1/\alpha)}{\eta} + \alpha G D T + L D^2 \delta^2 T + \frac{2 |\rho'''(0)| G^3}{(d+2)\gamma} D^3 \delta^2 T \\
    &\quad + \frac{d D^5 \delta^3}{\gamma} \left[ |\rho'''(0)| \left( G^2 L + \frac{1}{2} \delta G L^2 D + \frac{1}{12} \delta^2 L^3 D^2 \right) + \frac{M_4}{3} G^4 \right] \sum_{t=1}^T \|A_t^{-1}\|_2.
\end{align*}
Tuning $\eta \propto T^{-2/3}$ and $\delta \propto T^{-1/6}$ alongside a shrinkage parameter $\alpha \propto T^{-1/2}$ yields an expected static regret of $\text{Regret}_S(T) = \mathcal{O}(T^{2/3})$.
\end{proof}

\begin{proof}[Proof of Corollary~\ref{cor:ellipsoid_strong_smooth} (Strongly Convex \& Smooth)]
With $\sigma > 0$, Algorithm~\ref{alg:dueling_ellipsoidal} mirrors FTARL-$\sigma$ \cite{hazan2014bandit}. The strong convexity generates a shrinking exploration matrix $A_t = H_t^{-1/2}$, where the composite Hessian is defined as $H_t = \nabla^2 \Phi(w_t) + \eta \sigma t I$. Because the regularizer accumulates the strong convexity of the true losses, the minimum eigenvalue strictly grows: $\lambda_{\min}(H_t) \ge \eta \sigma t$.

\textbf{Bounding the FTRL Surrogate Regret.}
The standard FTRL guarantee with a growing strongly-convex regularizer bounds the base linear regret against an interior comparator $w^*_\alpha$. Following the analysis of FTARL-$\sigma$, the squared local dual norm is deterministically bounded by a constant $\|\hat{g}_t\|_{t,*}^2 \le \mathcal{O}(1/\delta^2)$ by Lemma~\ref{lem:ellipsoidal_estimator}. Summing this over $T$ steps bounds the variance term by $\mathcal{O}\left( \frac{\eta T}{\delta^2} \right)$. Using the standard barrier bound $\Phi(w_\alpha^*) - \Phi(w_1) \le \nu \log(1/\alpha)$, the barrier regularizer contributes $\frac{\nu \log(1/\alpha)}{\eta}$. By setting $\alpha = 1/T$ and applying the $G$-Lipschitz translation bound derived in Corollary~\ref{cor:ellipsoid_smooth}, the boundary shift error strictly adds $\sum_{t=1}^T \mathbb{E}[f_t(w_\alpha^*) - f_t(w^*)] \le \alpha G D T = G D = \mathcal{O}(1)$, allowing us to approximate the full surrogate regret as:
\begin{align*}
    \mathbb{E}[R_{\text{FTRL}}(w^*)] \le \mathcal{O}\left( \frac{\eta T}{\delta^2} + \frac{1}{\eta} \right)
\end{align*}

\textbf{Bounding the Smoothing Error.}
Following Lemma 7 in \cite{hazan2014bandit}, the spectral norm of the shrinking exploration matrix is explicitly bounded by the growing minimum eigenvalue:
\begin{align*}
    \|A_t\|_2^2 \le \frac{1}{\eta \sigma t}
\end{align*}
Summing this harmonic sequence yields a smoothing error of $\sum_{t=1}^T \delta^2 L \|A_t\|_2^2 \le \mathcal{O}(\frac{\delta^2}{\eta} \log T)$.

\textbf{Bounding the Dueling Bias Penalty.}
With the exact inner-product cancellation in Theorem~\ref{thm:universal_ellipsoidal}, the condition number penalty is completely removed. The principal bias penalty evaluates the identical sequence as the smoothing error, $\sum_{t=1}^T \delta^2 \|A_t\|_2^2 D$. Substituting the shrinking matrix bound $\|A_t\|_2^2 \le \frac{1}{\eta \sigma t}$ generates a perfect harmonic sum:
\begin{align*}
    \sum_{t=1}^T \delta^2 \|A_t\|_2^2 D \le \sum_{t=1}^T \delta^2 \left( \frac{1}{\eta \sigma t} \right) D \le \frac{\delta^2 D}{\eta\sigma} (1 + \log T)
\end{align*}
which matches the standard smoothing penalty exactly.
Additionally, we carry the exact additive bias penalty through the sum. Using the precise definition of $\Psi_t$, we accumulate the higher-order residual terms across the entire horizon:
$$ D \sum_{t=1}^T \Psi_t \|A_t^{-1}\|_2 $$
which rigorously tracks the $L$-smoothness error alongside the 4th-order preference function bias.

\textbf{Parameter Tuning.}
Grouping the terms, the total expected static regret balances flawlessly, with the new bias term explicitly shown in the final equation with full coefficients:
\begin{align*}
    \text{Regret}_S(T) &\le \frac{\eta d^2 T}{4\gamma^2\delta^2} + \frac{\nu\log(1/\alpha)}{\eta} + \alpha G D T + \frac{L \delta^2}{\eta\sigma}(1 + \log T) \\
    &+ \frac{2 |\rho'''(0)| G^3}{(d+2)\gamma} \frac{D \delta^2}{\eta\sigma}(1 + \log T) + D \sum_{t=1}^T \Psi_t \|A_t^{-1}\|_2.
\end{align*}
Tuning $\eta \propto 1/\sqrt{T}$ and $\delta = 1$ yields an expected static regret of $\text{Regret}_S(T) = \mathcal{O}(\sqrt{T} \log T)$.

\end{proof}

\end{document}